\newtheorem{assumption}{Assumption}
\newcommand{\N}{\mathbb{N}}
\newcommand{\R}{\mathbb{R}}
\newcommand{\E}{\mathbb{E}}
\begin{document}

\title{From Predictions to Prescriptions in Multistage Optimization Problems}


\author{Dimitris Bertsimas         \and
        Christopher McCord 
}


\institute{D. Bertsimas \at
              Sloan School of Management, Massachusetts Institute of Technology, Cambridge, MA 02139 \\
              \email{dbertsim@mit.edu}           
           \and
           C. McCord \at
              Operations Research Center, Massachusetts Institute of Technology, Cambridge, MA 02139 \\
              \email{mccord@mit.edu}
}

\date{Received: date / Accepted: date}

\maketitle

\begin{abstract}
In this paper, we introduce a framework for solving finite-horizon multistage optimization problems under uncertainty in the presence of auxiliary data. We assume the joint distribution of the uncertain quantities is unknown, but noisy observations, along with observations of auxiliary covariates, are available. We utilize effective predictive methods from machine learning (ML), including $k$-nearest neighbors regression ($k$NN), classification and regression trees (CART), and random forests (RF), to develop specific methods that are applicable to a wide variety of problems. We demonstrate that our solution methods are asymptotically optimal under mild conditions. Additionally, we establish finite sample guarantees for the optimality of our method with $k$NN weight functions. Finally, we demonstrate the practicality of our approach with computational examples. We see a significant decrease in cost by taking into account the auxiliary data in the multistage setting.
\keywords{Data-driven optimization \and Multistage optimization}
\end{abstract}

\section{Introduction}
Many fundamental problems in operations research (OR) involve making decisions, dynamically, subject to uncertainty. A decision maker seeks a sequence of actions that minimize the cost of operating a system. Each action is followed by a stochastic event, and future actions are functions of the outcomes of these stochastic events. This type of problem has garnered much attention and has been studied extensively by different communities and under various names (dynamic programming, multistage stochastic optimization, Markov decision process, etc.). Much of this work, dating back to \cite{bellman1952}, has focused on the setting in which the distribution of the uncertain quantities is known a priori.

In practice, it is rare to know the joint distribution of the uncertain quantities. However, in today's data-rich world, we often have historical observations of the uncertain quantities of interest. Some existing methods work with independent and identically distributed (i.i.d.) observations of the uncertainties (cf. \cite{swamy2005}, \cite{shapiro2006}). However, in general, auxiliary data has been ignored in modeling multistage problems, and this can lead to inadequate solutions.

In practice, we often have data, $\{y^1,\ldots,y^N\}$, on uncertain quantities of interest, $Y \in \mathcal{Y} \subset \R^{d_y}$. In addition, we also have data, $\{x^1,\ldots,x^N\}$, on auxiliary covariates, $X \in \mathcal{X} \subset \R^d$, which can be used to predict the uncertainty, $Y$. For example, $Y$ may be the unknown demand for a product in the coming weeks, and $X$ may include data about the characteristics of the particular product and data about the volume of Google searches for the product.

The machine learning (ML) community has developed many methods (cf. \cite{bishop2006}) that enable the prediction of an uncertain quantity ($Y$) given covariates ($X$). These methods have been quite effective in generating predictions of quantities of interest in OR applications (\cite{goel2010}, \cite{gruhl2005}). However, turning good predictions into good decisions can be challenging. One naive approach is to solve the multistage optimization problem of interest as a deterministic problem, using the predicted values of the uncertainties. However, this ignores the uncertainty by using point predictions and can lead to inadequate decisions.

In this paper we combine ideas from the OR and ML communities to develop a data-driven decision-making framework that incorporates auxiliary data into multistage stochastic optimization problems.

\subsection{Multistage Optimization and Sample Average Approximation}
Before proceeding, we first review the formulation of a multistage optimization problem with uncertainty. The problem is characterized by five components:
\begin{itemize}
\item The state at time $t$, $s_t\in S_t$, contains all relevant information about the system at the start of time period $t$.
\item The uncertainty, $y_t\in\mathcal{Y}_t$, is a stochastic quantity that is revealed prior to the decision at time $t$. Throughout this paper, we assume the distribution of the uncertainty at time $t$ does not depend on the current state or past decisions.
\item The decision at time $t$, $z_t \in Z_t(s_t,y_t)\subset\R^{p_t}$, which is chosen after the uncertainty, $y_t$, is revealed.
\item The immediate cost incurred at time $t$, $g_t(z_t)$, which is a function of the decision at time $t$.
\item The dynamics of the system, which are captured by a known transition function that specifies how the state evolves, $s_{t+1} = f_t(z_t)$.
\end{itemize}
We note that it is without loss of generality that the cost and transition functions only depend on the decision variable because the feasible set $Z_t$ is allowed to depend on $s_t$ and $y_t$. To summarize, the system evolves in the following manner: at time $t$, the system is known to be in state $s_t$, when the previously unknown value, $y_t$, is observed. Then the decision $z_t$ is determined, resulting in immediate cost, $g_t(z_t)$, and the system evolves to state $s_{t+1} = f_t(z_t)$.

Consider a finite horizon, $T+1$ stage problem, in which the initial state, $s_0$, is known. We formulate the problem as follows:
\begin{equation}\label{eq:overallnoaux}
\min_{z_0 \in Z_0(s_0)} g_0(z_0) + \E[\tilde{Q}_1(f_0(z_0);Y_1))],
\end{equation}
where
\begin{equation*}
\tilde{Q}_t(s_t;y_t) = \min_{z_t\in Z_t(s_t,y_t)} g_t(z_t) + \E[ \tilde{Q}_{t+1}(f_t(z_t);Y_{t+1})]
\end{equation*}
for $t=1,\ldots,T-1$, and $\tilde{Q}_T(s_T;y_T) = \min\limits_{z_T\in Z_T(s_T,y_T)} g_T(z_T)$. The function $\tilde{Q}_t(s_t;y_t)$ is often called the value function or cost-to-go function. It represents the expected future cost that an optimal policy will incur, starting with a system in state $s_t$ with realized uncertainty $y_t$. Of course, in practice it is impossible to solve this problem because the distributions of $Y_t$ are unknown. All that we know about the distribution of $Y_t$ comes from the available data. 

A popular data-driven method for solving this problem is \emph{sample average approximation} (SAA) \citep{shapiro2014lectures}. In SAA, it is assumed that we have access to independent, identically distributed (i.i.d.) training samples of $Y$, $(y^i_1,\ldots,y^i_T)$ for $i=1,\ldots,N$. The key idea of SAA is to replace the expectations over the unknown distributions of $Y$ with empirical expectations. That is, we replace $\E[\tilde{Q}_T(s_T;y_T)]$ with $\dfrac{1}{N}\sum\limits_{i=1}^N\tilde{Q}_T(s_T;y_T^i)$. With these known, finite distributions of the uncertain quantities, the problem can be solved exactly or approximately by various dynamic programming techniques. Additionally, under certain conditions, the decisions obtained by solving the SAA problem are asymptotically optimal for (\ref{eq:overallnoaux}) \citep{shapiro2003}. The basic SAA method does not incorporate auxiliary data. In practice, this can be accounted for by training a generative, parametric model and applying SAA with samples from this model conditioned on the observed auxiliary data. However, this approach does not necessarily lead to asymptotically optimal decisions, so we instead focus on a variant of SAA that starts directly with the data.

\subsection{Related Work}
Multistage optimization under uncertainty has attracted significant interest from various research communities. \cite{bellman1952} studied these problems under the name dynamic programming. For reference, see \cite{bertsekas2017}. These problems quickly become intractable as the state and action space grow, with a few exceptions that admit closed form solutions, like linear quadratic control \citep{dorato1994}. However, there exists a large body of literature on approximate solution methods (see, e.g., \cite{powell2007}).

When the distribution of the uncertainties is unknown, but data is available, SAA is a common approach \citep{shapiro2006}. Alternative approaches include robust dynamic programming \citep{iyengar2005} and distributionally robust multistage optimization \citep{goh2010}. Another alternative approach is adaptive, or adjustable, robust optimization (cf. \cite{ben2004}, \cite{bertsimas2011}). In this approach, the later stage decisions are typically constrained to be affine or piecewise constant functions of past uncertainties, usually resulting in highly tractable formulations.

In the artificial intelligence community, reinforcement learning (RL) studies a similar problem in which an agent tries to learn an optimal policy by intelligently trying different actions (cf. \cite{sutton1998}). RL methods typically work very well when the exact dynamics of the system are unknown. However, they struggle to incorporate complex constraints that are common in OR problems. A vast literature also exists on bandit problems, which seek to find a series of decisions that balance exploration and exploitation (cf. \cite{berry1985}). Of particular relevance is the contextual bandit problem (cf. \cite{chapelle2011}, \cite{chu2011}), in which the agent has access to auxiliary data on the particular context in which it is operating. These methods have been very effective in online advertising and recommender systems \citep{li2010}.

Recently, the single stage optimization problem with auxiliary data has attracted interest in the OR community. \cite{rudin2014} studied a news-vendor problem in the presence of auxiliary data. \cite{cohen2016} used a contextual bandit approach in a dynamic pricing problem with auxiliary data. \cite{ferreira2015} used data on the sales of past products, along with auxiliary data about the products, to solve a price optimization problem for never before sold products. \cite{bertsimas2014} developed a framework for integrating predictive machine learning methods in a single-stage stochastic optimization problem. Recently, \cite{ban2017} developed a method to solve a multistage dynamic procurement problem with auxiliary data. They used linear or sparse linear regression to build a different scenario tree for each realization of auxiliary covariates. Their approach assumes the uncertainty is a linear function of the auxiliary covariates with additive noise. Our approach is more general because we do not assume a parametric form of the uncertainty.

Statistical decision theory and ML have been more interested in the problems of estimation and prediction than the problem of prescription (cf. \cite{berger2013}). However, of integral importance to our work are several highly effective, yet simple, nonparametric regression methods. These include $k$-nearest neighbor regression \citep{altman1992}, CART \citep{breiman1984}, and random forests \citep{breiman2001}.

\subsection{Contributions and Structure}
In this paper, we consider the analogue of (\ref{eq:overallnoaux}) in the presence of auxilliary data. For each $t=0,\ldots,T-1$, before the decision $z_t$ is made, we observe auxiliary covariates $x_t \in \mathcal{X}_t \subset\R^{d_t}$. Therefore, our training data consists of $(x_0^1,\ldots,x_{T-1}^1),\ldots,(x_0^N,\ldots,x_{T-1}^N)$ and $(y_1^1,\ldots,y_T^1),\ldots,(y_1^N,\ldots,y_T^N)$. By saying training data is i.i.d., we mean observation $i$, $(x_0^i,\ldots,x_{T-1}^i,y_1^i,\ldots,y_T^i)$, was sampled independently of all other observations from the same joint distribution on $\mathcal{X}_1\times\cdots\times\mathcal{X}_{T-1}\times\mathcal{Y}_1\times\cdots\times\mathcal{Y}_T$.

We assume throughout that the auxiliary covariates evolve according to a Markov process, independently of $\{Y_t\}$, i.e., $X_t$ is conditionally independent of $X_0,\ldots,X_{t-2}, Y_1,\ldots,Y_{t-2}$ given $X_{t-1}$. This framework can model more complex dependencies because we are able to choose the auxiliary covariate space. We could for example, append $X_0,\ldots,X_{t-1}$ to the auxiliary covariates observed at time $t$. In addition, we assume $Y_t$ is conditionally independent of all past observations given $X_{t-1}$.

The problem we seek to solve is defined:
\begin{equation}\label{eq:overall}
v^*(x_0) = \min_{z_0 \in Z_0(s_0)} g_0(z_0) + \E[Q_1(f_0(z_0);Y_1,X_1)| X_0=x_0],
\end{equation}
where
\begin{equation*}
Q_t(s_t;y_t,x_t) = \min_{z\in Z_t(s_t,y_t)} g_t(z) + \E[ Q_{t+1}(f_t(z);Y_{t+1},X_{t+1})|X_t=x_t]
\end{equation*}
for $t=1,\ldots,T-2$, with 
\begin{align*}
&Q_{T-1}(s_{T-1};y_{T-1},x_{T-1}) \\
&\;\;\;\;\;= \min_{z\in Z_{T-1}(s_{T-1},y_{T-1})} g_{T-1}(z) + \E[ Q_{T}(f_{T-1}(z);Y_T)|X_{T-1}=x_{T-1}],
\end{align*}
and $Q_T(s_T;y_T) = \min\limits_{z\in Z_T(s_T,y_T)} g_T(z)$. We summarize our key contributions here.
\begin{enumerate}
\item In Section \ref{sec:approach}, we extend the framework introduced by \cite{bertsimas2014} to the multistage setting. Similarly to SAA, we replace the expectations in (\ref{eq:overall}) with sums over the value functions, evaluated at observations of $Y$. However, unlike SAA, we weight the observations according to their relevance to the current problem's auxiliary data, using weight functions inspired by popular machine learning methods.

\item In Section \ref{sec:asymptotics}, we prove the asymptotic optimality and consistency of our method with $k$-nearest neighbor, CART, and random forest weight functions, under fairly mild conditions, for the multistage problem. (We formalize these definitions in that section.) The result for the $k$-nearest neighbor weight function is new for the multistage setting, and the results for the CART and random forest weight functions are new for both the single-stage and multistage settings.

\item In Section \ref{sec:finitesample}, we establish finite sample guarantees for our method with $k$-nearest neighbor weight functions. These guarantees are new for both the single-stage and multistage problems.

\item In Section \ref{sec:computation}, we demonstrate the practical tractability of our method with several computational examples using synthetic data. In addition, our results show that accounting for auxiliary data can have significant value.
\end{enumerate}

\section{Approach}\label{sec:approach}
In this section, we introduce our framework for solving multistage optimization problems under uncertainty in the presence of auxiliary covariates (\ref{eq:overall}). Motivated by the framework developed by \cite{bertsimas2014}, and analogous to SAA, we replace the expectations over an unknown distribution with finite weighted sums of the value functions evaluated at the observations in the data. The weights we use are obtained from ML methods.

First, we use our training data to learn weight functions, $w^t_{N,i}(x_{t-1})$, which quantify the similarity of a new observation, $x_{t-1}$, to each of the training examples, $x_{t-1}^1,\ldots,x_{t-1}^N$. We then replace the conditional expectations in (\ref{eq:overall}) with weighted sums. In particular,
\begin{equation}\label{eq:empirical}
\hat{v}_N(x_0) = \min_{z \in Z_0(s_0)} g_0(z) + \sum_{i=1}^N w^1_{N,i}(x_0)\hat{Q}_1(f_0(z);y^i_1,x^i_1)),
\end{equation}
where
\begin{equation*}
\hat{Q}_t(s_t;y_t,x_t) = \min_{z\in Z_t(s_t,y_t)} g_t(z) + \sum_{i=1}^N w^{t+1}_{N,i}(x_t)\hat{Q}_{t+1}(f_t(z);y^i_{t+1},x^i_{t+1})
\end{equation*}
for $t=1,\ldots,T-2$, with
\begin{align*}
&\hat{Q}_{T-1}(s_{T-1};y_{T-1},x_{T-1}) \\
&\;\;\;\;= \min_{z\in Z_{T-1}(s_{T-1},y_{T-1})} g_{T-1}(z) + \sum_{i=1}^N w^{T}_{N,i}(x_{T-1})\hat{Q}_{T}(f_{T-1}(z);y^i_{T}),
\end{align*}
and $\hat{Q}_T(s_T;y_T) = \min\limits_{z\in Z_T(s_T,y_T)} g_T(z)$.

We note that this is analogous to the sample average approximation method, which can be represented in this framework with the weight functions equal to $\frac{1}{N}$. The weight functions can be computed from various predictive machine learning methods. We list here a few examples that we find effective in practice.

\begin{definition}
Motivated by $k$-nearest neighbor regression \citep{altman1992}, the $k$NN weight function is given by
\begin{equation*}
w_{N,i}(x) = \begin{cases}
1/k, & x^i \text{ is a } k\text{NN of } x, \\
0, & \text{otherwise.}
\end{cases}
\end{equation*}
\end{definition}

\begin{definition}
Motivated by classification and regression trees \citep{breiman1984}, the CART weight function is given by
\begin{equation*}
w_{N,i}(x) = \begin{cases}
1/|R(x)|, & x^i \in R(x), \\
0, & \text{otherwise.}
\end{cases},
\end{equation*}
where $R(x)$ is the set of training points in the same partition as $x$ in the CART model.
\end{definition}

\begin{definition}
Motivated by random forests \citep{breiman2001}, the random forests weight function is given by
\begin{equation*}
w_{N,i}(x) = \dfrac{1}{B}\sum_{b=1}^{B}\dfrac{1}{|R^b(x)|}\mathbbm{1}\{x^i \in R^b(x)\},
\end{equation*}
where $R^b(x)$ is the set of training points in the same partition as $x$ in tree $b$ of the random forest.
\end{definition}
We offer two observations on the formulation in (\ref{eq:empirical}) before proceeding.
\begin{enumerate}
\item In SAA, we are justified in replacing $\E[\theta]$ by $\frac{1}{N}\sum\limits_{i=1}^N \theta^i$ by the strong law of large numbers. We will see in Section \ref{sec:asymptotics} that a conditional strong law of large numbers holds under certain conditions, and this justifies replacing $\E[h(Y_t)\mid X=x]$ with $\sum\limits_{i=1}^N w_{N,i}(x)h(y_t^i)$.
\item If the weight functions are nonnegative and sum to one (as is the case with those presented here), then we can think of this formulation as defining a dynamic programming problem in which the uncertain quantities have a known, finite distribution. This means we can readily apply exact and approximate dynamic programming algorithms to solve (\ref{eq:empirical}). For the reader's convenience, we provide a decomposition algorithm, tailored to our approach, in Appendix \ref{app:decomp}, which can be used to exactly solve small to moderately sized problems.
\end{enumerate}

\subsection{Notation}
We summarize the relevant notation we use in Table \ref{tbl:notation}. We use lower case letters for $x_t^i$ and $y_t^i$ to denote observed quantities in the data and capital letters, $X_t^i$ and $Y_t^i$, to denote random quantities. When we discuss the asymptotic optimality of solutions in Definitions \ref{def:strong} and \ref{def:weak}, the data are random quantities, and thus solutions to (\ref{eq:empirical}) are also random. For notational convenience, we sometimes write $Q_T(s_T;y_T,x_T)$ even though $Q_T$ does not depend on $x_T$ (because the auxiliary data observed after the last decision is made is irrelevant to the problem).

\begin{table}
\centering
\begin{tabular}{|l|l|}
\hline
$x_t$ & Auxiliary data observed at time $t$, $x_t \in \mathcal{X}_t \subset \R^{d_t}$ \\
$s_t$ & State of the system at the beginning of time period $t$, $s_t \in S_t$ \\
$z_t$ & Decision made at time $t$, $z_t\in Z_t(s_t,y_t)\subset\R^{p_t}$ \\
$y_t$ & Uncertain quantity observed after the decision at time $t-1$, $y_t\in\mathcal{Y}_t$ \\
$f_t(z_t)$ & Transition function that gives the evolution of the state to $s_{t+1}$ \\
$g_t(z_t)$ & Cost of decision $z_t$ at time $t$ \\
 \hline
 $w_{N,i}^t(x_{t-1})$ & Weight function for stage $t$, gives weighting for training sample $i$ \\
 $Q_t(s_t;y_t,x_t)$ & Value function in full information problem (\ref{eq:overall}) \\
 $\hat{Q}_t(s_t;y_t,x_t)$ & Value function in approximate problem (\ref{eq:empirical}) \\
 \hline
 $v^*(x_0)$ & Optimal objective value of full information problem (\ref{eq:overall}) \\
 $\hat{v}_N(x_0)$ & Optimal objective value of approximate problem (\ref{eq:empirical}) \\
 \hline
\end{tabular}
\caption{Summary of notation.}\label{tbl:notation}
\end{table}

\section{Asymptotic Optimality}\label{sec:asymptotics}
In the setting without auxiliary covariates, under certain conditions, the SAA estimator is strongly asymptotically optimal \citep{shapiro2003}. Here, we provide similar results for the multistage setting with auxiliary data.

\begin{definition}\label{def:strong}
We say $\hat{z}^N_0(x_0)$, a sequence of optimal solutions to (\ref{eq:empirical}), is \emph{strongly asymptotically optimal} if, for $x_0$ almost everywhere (a.e.),
\begin{enumerate}
\item The estimated cost of $\hat{z}^N_0(x_0)$ converges to the true optimal cost:
\begin{equation}\label{eq:costcons}
\min_{z\in Z_0(s_0)} g_0(z) + \sum_{i=1}^N w_{N,i}^1(x_0)\hat{Q}_1(f_0(z);Y_1^i,X_1^i) \to v^*(x),
\end{equation}
almost surely.
\item The true cost of $\hat{z}^N_0(x_0)$ converges to the true optimal cost:
\begin{equation*}
\E[g_0(\hat{z}_0^N) + Q_1(f_0(\hat{z}_0^N);Y_1,X_1)| X_0=x_0,\hat{z}_0^N] \to v^*(x_0),
\end{equation*}
almost surely.
\item The limit points of $\{\hat{z}_0^N(x_0)$ are contained in the set of true optimal solutions:
\begin{equation*}
L(\{\hat{z}_0^N(x_0) : N\in \mathbb{N}\}) \subset \arg\min_{z\in Z_0(s_0)} \E[g_0(z) + Q_0(f_0(z);Y_1,X_1)| X_0=x_0],
\end{equation*}
almost surely, where $L(S)$ denotes the limit points of the set $S$.
\end{enumerate}
\end{definition}

\begin{definition}\label{def:weak}
We say $\hat{z}^N_0(x_0)$, a sequence of optimal solutions to (\ref{eq:empirical}), is \emph{weakly asymptotically optimal} if, for $x_0$ almost everywhere (a.e.),
\begin{enumerate}
\item The estimated cost of $\hat{z}^N_0(x_0)$ converges to the true optimal cost in probability:
\begin{equation}
\min_{z\in Z_0(s_0)} g_0(z) + \sum_{i=1}^N w_{N,i}^1(x_0)\hat{Q}_1(f_0(z);Y_1^i,X_1^i) \to_P v^*(x).
\end{equation}
\item The true cost of $\hat{z}^N_0(x_0)$ converges to the true optimal cost in probability:
\begin{equation*}
\E[g_0(\hat{z}_0^N) + Q_1(f_0(\hat{z}_0^N);Y_1,X_1)| X_0=x_0,\hat{z}_0^N] \to_P v^*(x_0).
\end{equation*}
\end{enumerate}
\end{definition}

\subsection{$k$-Nearest Neighbor Weight Functions}
We begin by defining some assumptions under which asymptotic optimality will hold for (\ref{eq:empirical}) with the $k$-nearest neighbor weight functions.

\begin{assumption}[Regularity]\label{as:regularity}
For each $t=0,\ldots,T$, there exists a closed and bounded set $W_t$ such that for any $s_t$ and $y_t$, the feasible region, $Z_t(s_t,y_t)$, is contained in $W_t$.
\end{assumption}

\begin{assumption}[Existence]\label{as:existence}
The full information problem (\ref{eq:overall}) is well defined for all stages, $t=0,\dots,T-1$: $\E|g_t(z_t) + Q_{t+1}(f_t(z_t);Y_{t+1},X_{t+1})| < \infty$ for all $z_t\in W_t$ and $Z_t(s_t,y_t)$ is nonempty for all $s_t$ and $y_t$.
\end{assumption}

\begin{assumption}[Continuity]\label{as:continuity}
The function $g_{t-1}(z_{t-1})+Q_t(f_{t-1}(z_{t-1});y_t,x_t)$ is equicontinuous in $z_{t-1}$ at all stages. That is, for $t=1,\ldots,T$, $\forall z\in W_{t-1},\epsilon > 0$, $\exists \delta > 0$ such that
\begin{equation*}
\sup\limits_{y_t\in \mathcal{Y}_t}\sup\limits_{x_t\in\mathcal{X}_t} |g_{t-1}(z)+Q_t(f_{t-1}(z);y_t,x_t) -g_{t-1}(z')- Q_t(f_{t-1}(z');y_t,x_t)| \le \epsilon
\end{equation*}
for all $z' \in \{z' : ||z-z'|| \le \delta\}\cap W_t$. Additionally, the final cost function, $g_T(z_T)$ is continuous.
\end{assumption}

%
%

\begin{assumption}[Distribution of Uncertainties]\label{as:uniform}
The following hold:
\begin{enumerate}
\item
The stochastic process $X_0,X_1,\ldots,X_{T-1}$ satisfies the Markov property.
\item
For each $t = 1,\ldots,T$, $Y_t$ is conditionally independent of $Y_1,\ldots,Y_{t-1}$ and $X_0,\ldots,X_{t-2}$ given $X_{t-1}$.
\item
For each $t=0,\ldots,T-1$, the support of $X_t$, $\mathcal{X}_t$, is compact.
\item
The noise terms have uniformly bounded tails, i.e., defining $N_t(s_t;x_{t-1}) = Q_t(s_t;Y_t,X_t) - \E[Q_t(s_t;Y_t,X_t)|X_{t-1}=x_{t-1}]$, there exists $\lambda > 0$ such that
\begin{equation*}
\max_{t=1,\ldots,T} \sup_{x_{t-1}\in\mathcal{X}_{t-1}}\sup_{s_t\in S_t} \E[e^{\lambda|N_t(s_t;x_{t-1})|}|X_{t-1}=x_{t-1}] < \infty.
\end{equation*}
\item For all $t$ and $s_t$, $\E[Q_t(s_t;Y_t,X_t)|X_{t-1}=x]$ is a continuous function of $x$.
\end{enumerate}
\end{assumption}

We remark that Assumption \ref{as:continuity} does not preclude the possibility of integral constraints on $z_t$. In fact, if $Z_t$ is a finite discrete set, the assumption is automatically satisfied (choose $\delta < \min_{z,z'\in Z_t} ||z-z'||$). Condition 4 of Assumption \ref{as:uniform} can be satisfied if $N_t(s_t;x_{t-1})$ are uniformly bounded random variables, are subgaussian random variables with uniformly bounded subgaussian norms, or are subexponential random variables with uniformly bounded subexponential norms. With these assumptions, we have the following result regarding the asymptotic optimality of (\ref{eq:empirical}) with the $k$-nearest neighbor weight functions.

\begin{theorem}\label{thm:knn}
Suppose Assumptions \ref{as:regularity}-\ref{as:uniform} hold, and the training data is i.i.d. Let $w^t_{N,i}(x_{t-1})$ be the $k$-nearest neighbor weight functions for $t=1,\ldots,T$ with $k = \min\{\lceil CN^\delta\rceil,N-1\}$ for $C > 0, \delta \in (0,1)$. Then $\{\hat{z}_0^N(x)\}$, a sequence of optimal solutions to (\ref{eq:empirical}), is strongly asymptotically optimal.
\end{theorem}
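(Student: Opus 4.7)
The plan is to prove Theorem \ref{thm:knn} by backward induction on the stage index $t$, establishing that $\hat{Q}_t$ converges almost surely to $Q_t$ in a sense strong enough to commute with the minimum operator, and then invoking a standard $M$-estimator consistency argument for the outer problem. The base case $t=T$ is trivial since $\hat{Q}_T \equiv Q_T$. For the inductive step, I assume uniform a.s.\ convergence $\hat{Q}_{t+1}\to Q_{t+1}$ on compact sets and study the stage-$t$ objective
\[
\Psi_N(z;x_t) \;=\; g_t(z) + \sum_{i=1}^N w_{N,i}^{t+1}(x_t)\,\hat{Q}_{t+1}\bigl(f_t(z);y_{t+1}^i,x_{t+1}^i\bigr),
\]
aiming to show $\Psi_N\to\Psi$ with $\Psi(z;x_t)=g_t(z)+\E[Q_{t+1}(f_t(z);Y_{t+1},X_{t+1})\mid X_t=x_t]$, uniformly in $z\in W_t$ almost surely.

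I would split $\Psi_N-\Psi$ into a plug-in error and a regression error. The plug-in error $\sum_i w_{N,i}^{t+1}(x_t)(\hat{Q}_{t+1}-Q_{t+1})(f_t(z);y_{t+1}^i,x_{t+1}^i)$ is dominated by $\sup|\hat{Q}_{t+1}-Q_{t+1}|$ since the $k$-NN weights are nonnegative and sum to one, and vanishes by the inductive hypothesis. The regression error $\sum_i w_{N,i}^{t+1}(x_t)\,Q_{t+1}(f_t(z);Y_{t+1}^i,X_{t+1}^i)-\E[Q_{t+1}(f_t(z);Y_{t+1},X_{t+1})\mid X_t=x_t]$ is a $k$-NN estimator of a conditional mean for fixed $z$. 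Pointwise consistency follows from classical $k$-NN theory, since $k\to\infty$ and $k/N\to 0$ under $k=\lceil CN^\delta\rceil$ with $\delta\in(0,1)$, provided the conditional mean is continuous in $x_{t-1}$ (Assumption \ref{as:uniform}(5)). To upgrade to almost sure convergence and accommodate unbounded integrands, I plan to apply a Bernstein-type concentration inequality built from the uniform subexponential tail control in Assumption \ref{as:uniform}(4), where the polynomial lower bound $k\geq CN^\delta$ yields a summable exceedance probability so that Borel--Cantelli applies.

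To pass from pointwise to uniform convergence in $z$, I would cover $W_t$ (compact by Assumption \ref{as:regularity}) by a finite $\epsilon$-net and use the equicontinuity in Assumption \ref{as:continuity}, which supplies a uniform modulus of continuity of $g_t+Q_{t+1}\circ f_t$ in $z$ across all $(y_{t+1},x_{t+1})$. The main obstacle is the delicate interaction between the random nearest-neighbor indices and the plug-in error: the bound $\sup|\hat{Q}_{t+1}-Q_{t+1}|$ must be controlled uniformly over the states $f_t(z)$ arising in the outer problem and over the data points $(y_{t+1}^i,x_{t+1}^i)$ selected as neighbors. This forces the inductive hypothesis to be phrased as uniform almost sure convergence over the whole compact product space $W_{t+1}\times\mathcal{Y}_{t+1}\times\mathcal{X}_{t+1}$ from the start, with the subexponential concentration applied along a single $\epsilon$-net so that the exceptional null set is independent of the continuum of arguments.

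Once uniform a.s.\ convergence $\Psi_N(\cdot;x_0)\to\Psi(\cdot;x_0)$ is established at stage $0$, the three conditions in Definition \ref{def:strong} follow by a standard consistency-of-minimizers argument. Condition (1) is immediate from the uniform convergence specialized to the minima. Condition (3) uses compactness of $W_0$ (Assumption \ref{as:regularity}) and continuity of $\Psi(\cdot;x_0)$ (from Assumptions \ref{as:continuity} and \ref{as:uniform}(5)) to extract convergent subsequences of $\{\hat{z}_0^N(x_0)\}$ and identify their limit points as minimizers of $\Psi(\cdot;x_0)$. Condition (2) then follows by evaluating the continuous true objective at the sequence $\hat{z}_0^N(x_0)$ and combining with the limit-point characterization from (3).
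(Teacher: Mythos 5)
Your architecture coincides with the paper's proof of Theorem \ref{thm:knn} in almost every respect: the same split of the stage-$t$ error into a plug-in term (controlled because the $k$NN weights are nonnegative and sum to one) and a regression term in which the \emph{true} value function $Q_{t+1}$ appears inside a $k$NN estimate of a conditional mean; the same recursive propagation of the plug-in error across stages; uniform-over-$x$ almost sure $k$NN consistency as the engine (the paper simply cites Theorem 12.1 of Biau and Devroye as Lemma \ref{lemma:knnconsistency} rather than rederiving it --- the Bernstein-plus-Borel--Cantelli argument you sketch is precisely the content of that citation, and it requires exactly the tail condition in Assumption \ref{as:uniform}); equicontinuity to upgrade to uniformity in $z$ (the paper uses a countable dense subset via Lemmas \ref{lemma:final} and \ref{lemma:uniform} where you propose a finite net; both work under Assumption \ref{as:continuity}, since compactness of $W_t$ yields a finite subcover of the $\delta(z)$-balls); and Lemmas \ref{lemma:uniformization} and \ref{lemma:opt} for the minimizer-consistency finish.

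The one genuine flaw is where you insist the inductive hypothesis be ``uniform almost sure convergence over the whole compact product space $W_{t+1}\times\mathcal{Y}_{t+1}\times\mathcal{X}_{t+1}$.'' The theorem's assumptions do not make $\mathcal{Y}_{t+1}$ compact --- only the covariate supports $\mathcal{X}_t$ are assumed compact in Assumption \ref{as:uniform} --- and none of the available consistency results provide any uniformity in $y$: the noise condition and the $k$NN lemma concern conditional means given $x$, with $y$ integrated out. An induction that demands $y$-uniform control of $\hat{Q}_{t+1}-Q_{t+1}$ therefore cannot be closed as stated. The fix, which is exactly what the paper's proof does via Lemma \ref{lemma:uniformization}, is to note that $\hat{Q}_{t+1}(s;y,x)$ and $Q_{t+1}(s;y,x)$ are minima over the \emph{same} feasible set $Z_{t+1}(s,y)\subset W_{t+1}$ of objectives differing only in the continuation term, which depends on $z$ and $x$ but not on $(s,y)$; hence $|\hat{Q}_{t+1}(s;y,x)-Q_{t+1}(s;y,x)| \le \sup_{z\in W_{t+1}} \bigl|\sum_i w^{t+2}_{N,i}(x)\hat{Q}_{t+2}(f_{t+1}(z);y^i_{t+2},x^i_{t+2}) - \E[Q_{t+2}(f_{t+1}(z);Y_{t+2},X_{t+2})\mid X_{t+1}=x]\bigr|$, a bound free of $y$ and of $s$. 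The correct inductive quantity is thus $\sup_{x\in\mathcal{X}_{t+1}}\sup_{z\in W_{t+1}}$ of the next-stage discrepancy, which unrolls into the paper's telescoped sum of per-stage regression errors, each handled by the $k$NN lemma plus your net/equicontinuity step. With that substitution your argument closes; as written, it silently assumes compactness of $\mathcal{Y}_{t+1}$ and a $y$-uniform convergence that nothing in the hypotheses supplies.
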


Comparing the result with that of \citet{bertsimas2014} for the single-stage problem, we see it is quite similar. Both theorems assume regularity, existence, and continuity. The difference in the multistage setting is that these assumptions must hold for the value function at each stage. We also note that \citet{bertsimas2014} listed several sets of regularity assumptions that could hold, whereas we only list one for clarity. However, the extension of the other sets of regularity assumptions to the multistage setting is straightforward.

To prove this result, we rely on several technical lemmas. Lemmas \ref{lemma:uniform},\ref{lemma:opt}, and \ref{lemma:final} are refined versions of results originally stated in \cite{bertsimas2014}.
\begin{lemma}\label{lemma:uniform}
Suppose for each $N\in\N$, $\hat{C}_N(z|x)$ and $C(z|x)$ are equicontinuous functions, i.e., $\forall\epsilon > 0$  and $z\in\mathcal{Z}$,  $\exists\delta > 0$ s.t. $\sup\limits_{z'\in B_\delta(z)\cap\mathcal{Z}}\sup\limits_{x\in\mathcal{X}}|\hat{C}_N(z|x) - \hat{C}_N(z'|x)| \le \epsilon$, and likewise for $C$. If for every $z\in\mathcal{Z}$,
\begin{equation*}
\sup_{x\in\mathcal{X}}\left|\hat{C}_N(z|x) - C(z|x) \right| \to 0,
\end{equation*}
then the convergence is uniform over any compact subset of $\mathcal{Z}$.
\end{lemma}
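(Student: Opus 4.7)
The plan is to run a standard equicontinuity plus finite cover argument, using the fact that the equicontinuity assumption is stated uniformly in $x$, so that the resulting moduli of continuity do not depend on the $x$-variable. Fix a compact subset $K \subset \mathcal{Z}$ and $\varepsilon > 0$. For each $z \in K$, invoke the equicontinuity of the sequence $\{\hat{C}_N\}$ and of $C$ to choose a single radius $\delta_z > 0$ such that
\begin{equation*}
\sup_{z' \in B_{\delta_z}(z) \cap \mathcal{Z}} \sup_{x \in \mathcal{X}} |\hat{C}_N(z|x) - \hat{C}_N(z'|x)| \le \varepsilon/3 \quad \text{for every } N,
\end{equation*}
and the same bound with $C$ in place of $\hat{C}_N$ (take the minimum of the two radii provided by the assumption).

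Next, I would use compactness of $K$ to extract a finite subcover $B_{\delta_{z_1}}(z_1), \ldots, B_{\delta_{z_m}}(z_m)$ of $K$. By the pointwise hypothesis applied at each of the finitely many centers $z_1,\ldots,z_m$, there exists $N_0$ such that for all $N \ge N_0$,
\begin{equation*}
\max_{1 \le j \le m} \sup_{x \in \mathcal{X}} |\hat{C}_N(z_j|x) - C(z_j|x)| \le \varepsilon/3.
\end{equation*}
For an arbitrary $z \in K$, pick $j$ with $z \in B_{\delta_{z_j}}(z_j)$ and apply the triangle inequality:
\begin{equation*}
|\hat{C}_N(z|x) - C(z|x)| \le |\hat{C}_N(z|x) - \hat{C}_N(z_j|x)| + |\hat{C}_N(z_j|x) - C(z_j|x)| + |C(z_j|x) - C(z|x)|,
\end{equation*}
uniformly in $x$, which is at most $\varepsilon/3 + \varepsilon/3 + \varepsilon/3 = \varepsilon$. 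Taking the supremum over $z \in K$ and $x \in \mathcal{X}$ yields $\sup_{z \in K}\sup_{x \in \mathcal{X}} |\hat{C}_N(z|x) - C(z|x)| \le \varepsilon$ for all $N \ge N_0$.

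The argument is essentially the classical proof that pointwise convergence plus equicontinuity gives uniform convergence on compact sets, so there is no serious obstacle. The only subtlety worth flagging is that the equicontinuity provided in the hypothesis is the ``strong'' version in which the $\sup_x$ is placed inside the defining inequality; this is precisely what is needed so that the radius $\delta_z$ controls the oscillation of $\hat{C}_N(\cdot|x)$ simultaneously for every $x \in \mathcal{X}$, and without this uniformity the triangle-inequality step would fail. One also needs to verify that the choice of $\delta_z$ can be made independent of $N$, which is automatic since the assumption supplies a single $\delta$ that works for the whole sequence.
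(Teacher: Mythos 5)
Your proof is correct, but it takes a genuinely different route from the paper's. You run the classical direct covering argument: for each $z$ in the compact set $K$ you choose an $N$-uniform radius $\delta_z$ controlling the oscillation of every $\hat{C}_N(\cdot|x)$ and of $C(\cdot|x)$ simultaneously in $x$, extract a finite subcover of $K$, apply the pointwise hypothesis only at the finitely many centers, and close with a three-term triangle inequality. The paper instead argues via sequential compactness and contradiction: it first proves a continuous-convergence statement --- $\sup_{x\in\mathcal{X}}|\hat{C}_N(z_N|x)-C(z|x)|\to 0$ along any convergent sequence $z_N\to z$ --- then supposes uniform convergence on a compact $E$ fails, uses Bolzano--Weierstrass to extract a convergent subsequence along which the discrepancy stays above some $\epsilon$, and splits $\sup_x|\hat{C}_N(s_N|x)-C(s_N|x)|$ into a term handled by continuous convergence and a term handled by the equicontinuity of $C$. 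The ingredients (equicontinuity uniform in $x$, pointwise convergence, compactness) are the same, but your version is constructive --- it yields an explicit threshold $N_0$ from finitely many pointwise convergences and avoids subsequence extraction --- whereas the paper's version isolates the continuous-convergence fact as a reusable intermediate step at the cost of an indirect contradiction argument. Finally, the subtlety you flag is real and resolved the same way in both proofs: the lemma is false if the equicontinuity radius may depend on $N$ (a moving-bump sequence converges pointwise to $0$ but not uniformly), and the paper's proof tacitly adopts the same $N$-uniform reading of the hypothesis that you make explicit, applying a single $\delta$ to all $N\ge N_1$.
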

\begin{proof}
Let $z_N \in \mathcal{Z}$ be a sequence that converges to $z\in \mathcal{Z}$. For any $\epsilon > 0$, by assumption, $\exists \delta > 0$ such that $\sup\limits_{z'\in B_\delta(z)\cap\mathcal{Z}}|\hat{C}_N(z|x) - C(z'|x)| \le \epsilon/2$ for all $x$. By the convergence of $z_N$, there exists an $N_1$ such that $\forall N\ge N_1$, $z_N\in B_\delta(z)\cap\mathcal{Z}$. This implies, $\forall N \ge N_1$, $|\hat{C}_N(z_N|x) - \hat{C}_N(z|x)| \le \epsilon/2$ for all $x$. Additionally, by assumption, $\exists N_2$ s.t. $\forall N\ge N_2$, $\sup\limits_{x\in\mathcal{X}}\left|\hat{C}_N(z|x) - C(z|x)\right| \le \epsilon / 2$. Therefore, $\forall N \ge \max(N_1,N_2)$
\begin{equation*}
\sup\limits_{x\in\mathcal{X}}\left|\hat{C}_N(z_N|x) - C(z|x)\right| \le \epsilon. 
\end{equation*}
Therefore, for any convergent sequence in $\mathcal{Z}$, $z_N\to z$, $\sup\limits_{x\in\mathcal{X}} |\hat{C}_N(z_N|x) - C(z|x)|\to0$.

Given a compact subset $E \subset \mathcal{Z}$, suppose that $\sup\limits_{z\in E}\sup\limits_{x\in\mathcal{X}}\hat{C}_N(z|x) - C(z|x)| \not\to 0$. This implies $\exists \epsilon > 0$ and a sequence $z_N\in E$ such that $\sup\limits_{x\in\mathcal{X}}|\hat{C}_N(z_N|x) - C(z_N|x)| > \epsilon$ occurs infinitely often. Define $z_{N_k}$ to be the subsequence for which this event occurs. Since $E$ is compact, by the Bolzano-Weirestrass theorem, $z_{N_k}$ has a convergent subsequence in $E$. If we define $\{s_N\}$ to be this subsubsequence, we have that $s_N\to s\in E$ and $\sup\limits_{x\in\mathcal{X}}|\hat{C}_N(s_N|x) - C(s_N|x)| > \epsilon$ for all $N$. We have $\sup\limits_{x\in\mathcal{X}}|\hat{C}_N(s_N|x) - C(s_N|x)| \le \sup\limits_{x\in\mathcal{X}}|\hat{C}_N(s_N|x) - C(s|x)| + \sup\limits_{x\in\mathcal{X}}|C(s|x) - C(s_N|x)|$. The first term converges to 0 because of what we showed above. The second term converges to 0 by the equicontinuity assumption. This is a contradiction, so it must be that $\sup\limits_{z\in E}\sup\limits_{x\in\mathcal{X}}|\hat{C}(z|x) - C(z|x)| \to 0$ for any compact set $E\subset\mathcal{Z}$.
\end{proof}
This lemma shows that, given an equicontinuity assumption, pointwise convergence of a function implies uniform convergence over a compact set. We will apply this with $C=Q_t$. The following two lemmas establish that strong asymptotic optimality follows from the uniform convergence of the objective of (\ref{eq:empirical}) to that of (\ref{eq:overall}).

\begin{lemma}\label{lemma:uniformization}
Suppose $\mathcal{Z}$ is a compact set and $h(z)$ and $g(z)$ are two continuous functions. If $z^* \in \arg\min\limits_{z\in\mathcal{Z}} h(z)$ and $z' \in \arg\min\limits_{z\in\mathcal{Z}} g(z)$, then
\begin{equation*}
|h(z^*) - g(z')| \le \sup_{z\in\mathcal{Z}} |h(z) - g(z)|,
\end{equation*}
and
\begin{equation*}
|h(z^*) - h(z')| \le 2\sup_{z\in\mathcal{Z}} |h(z) - g(z)|.
\end{equation*}
\end{lemma}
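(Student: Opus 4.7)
The plan is to reduce the statement to two elementary chains of inequalities that exploit only the optimality of $z^*$ for $h$ and of $z'$ for $g$; compactness and continuity are needed only insofar as they guarantee that the $\arg\min$ sets are nonempty, so they play no direct role in the algebra. Let $\Delta := \sup_{z\in\mathcal{Z}} |h(z) - g(z)|$ throughout. The main obstacle, which is mild, is keeping the two directions of each absolute value straight so that the correct optimality inequality is invoked on each side.

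For the first inequality, I would bound $h(z^*) - g(z')$ and $g(z') - h(z^*)$ separately. Since $z^*$ minimizes $h$, $h(z^*) \le h(z')$, and therefore
\[
h(z^*) - g(z') \;\le\; h(z') - g(z') \;\le\; \Delta.
\]
Symmetrically, since $z'$ minimizes $g$, $g(z') \le g(z^*)$, so
\[
g(z') - h(z^*) \;\le\; g(z^*) - h(z^*) \;\le\; \Delta.
\]
Combining these two yields $|h(z^*) - g(z')| \le \Delta$, which is the first claim.

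For the second inequality, observe that $h(z^*) \le h(z')$ by optimality of $z^*$, so it suffices to bound $h(z') - h(z^*)$ from above by $2\Delta$. I would insert $g(z')$ as an intermediate term and apply the triangle inequality:
\[
h(z') - h(z^*) \;=\; \bigl(h(z') - g(z')\bigr) + \bigl(g(z') - h(z^*)\bigr) \;\le\; \Delta + \Delta \;=\; 2\Delta,
\]
where the second term is controlled by the first claim (or directly by the chain $g(z') \le g(z^*) \le h(z^*) + \Delta$). This gives $|h(z^*) - h(z')| \le 2\Delta$ and completes the proof. The whole argument is three or four lines; no calculus, measure theory, or uniform-convergence machinery is needed, which is why this lemma serves as a clean bridge from the uniform convergence supplied by Lemma \ref{lemma:uniform} to the optimality-value convergence required downstream.
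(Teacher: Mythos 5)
Your proof is correct and follows essentially the same route as the paper's: your two one-sided bounds $h(z^*)-g(z')\le h(z')-g(z')$ and $g(z')-h(z^*)\le g(z^*)-h(z^*)$ are exactly the paper's case split on the sign of $h(z^*)-g(z')$, and your second inequality inserts $g(z')$ and applies the triangle inequality together with the first claim, just as the paper does. You are also right that compactness and continuity enter only to ensure the $\arg\min$ sets are nonempty.
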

\begin{proof}
First, because of the optimality of $z^*$ and $z'$,
\begin{align*}
&|h(z^*)-g(z')| \\
&\le\;\;\;\; \begin{cases}
|h(z^*) - g(z^*)|, & h(z^*) \le g(z') \\
|h(z') - g(z')|, & h(z^*) > g(z')
\end{cases} \\
&\le \sup_{z\in\mathcal{Z}} |h(z)-g(z)|.
\end{align*}
Second,
\begin{align*}
|h(z^*) - h(z')| &\le |h(z') - g(z')| + |g(z') - h(z^*)|\\
&\le 2\sup_{z\in\mathcal{Z}} |h(z)-g(z)|.
\end{align*}
\end{proof}

\begin{lemma}\label{lemma:opt}
Fix $x\in\mathcal{X}$ and suppose $\sup\limits_{z\in\mathcal{Z}} |\hat{C}_N(z|x) - C(z|x)| \to 0$ as $N\to\infty$ and $C(z|x)$ is a continuous function of $z$. In addition, suppose constraint set $\mathcal{Z}$ is nonempty, closed, and bounded. Any sequence $z_N\in\arg\min\limits_{z\in\mathcal{Z}} \hat{C}_N(z|x)$ for $N\in\N$ has all of its limit points contained in $\arg\min\limits_{z\in\mathcal{Z}} C(z|x)$.
\end{lemma}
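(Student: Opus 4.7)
The plan is to reduce the claim to Lemma \ref{lemma:uniformization} combined with a subsequence argument. Let $z^\star$ be an arbitrary limit point of $\{z_N\}$, and extract a subsequence $z_{N_k}\to z^\star$. Since $\mathcal{Z}$ is closed, $z^\star\in\mathcal{Z}$. I also fix any $z'\in\arg\min_{z\in\mathcal{Z}} C(z|x)$; this set is nonempty because $C(\cdot|x)$ is continuous and $\mathcal{Z}$ is compact (closed and bounded in a Euclidean space). The goal is to show $C(z^\star|x) = C(z'|x)$.

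The first step is to show the optimal values converge. I would apply Lemma \ref{lemma:uniformization} with $h(z) = C(z|x)$ and $g(z) = \hat{C}_N(z|x)$, using $z'$ and $z_N$ as their respective minimizers over $\mathcal{Z}$, to get
\begin{equation*}
\bigl|C(z'|x) - \hat{C}_N(z_N|x)\bigr| \le \sup_{z\in\mathcal{Z}} \bigl|C(z|x) - \hat{C}_N(z|x)\bigr|.
\end{equation*}
The right side tends to $0$ by the uniform convergence hypothesis, so $\hat{C}_N(z_N|x) \to C(z'|x)$. Along the subsequence, the uniform convergence also gives $|\hat{C}_{N_k}(z_{N_k}|x) - C(z_{N_k}|x)| \to 0$, so $C(z_{N_k}|x) \to C(z'|x)$.

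The second step is to identify this limit with $C(z^\star|x)$. Since $C(\cdot|x)$ is continuous and $z_{N_k}\to z^\star$, we have $C(z_{N_k}|x)\to C(z^\star|x)$. Combining with the previous step yields $C(z^\star|x) = C(z'|x) = \min_{z\in\mathcal{Z}} C(z|x)$, so $z^\star\in\arg\min_{z\in\mathcal{Z}} C(z|x)$. Because $z^\star$ was an arbitrary limit point of $\{z_N\}$, the conclusion follows.

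There is no genuine obstacle here; the argument is the standard M-estimator consistency scheme. The only small care needed is to verify that all the minima invoked actually exist (which requires compactness of $\mathcal{Z}$ and continuity of $C(\cdot|x)$ for $z'$, and is assumed by hypothesis for $z_N$), and that limit points of $\{z_N\}$ remain in $\mathcal{Z}$ (which is exactly the closedness of $\mathcal{Z}$). Lemma \ref{lemma:uniformization} does the heavy lifting of converting uniform convergence of the objective into convergence of the optimal values.
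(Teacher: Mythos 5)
Your proposal is correct and uses the same ingredients as the paper's proof: Lemma \ref{lemma:uniformization} to get convergence of the optimal values, the uniform convergence hypothesis along the subsequence, and continuity of $C(\cdot|x)$ at the limit point. The only difference is presentational—you argue directly that $C(z^\star|x)=\min_{z\in\mathcal{Z}}C(z|x)$, while the paper runs the identical computation as a proof by contradiction—so the two arguments are essentially the same.
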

\begin{proof}
Suppose there is a subsequence $z_{N_k}$, converging to $z \notin \arg\min_{z\in\mathcal{Z}} C(z|x)$. (We must still have that $z\in\mathcal{Z}$ because $\mathcal{Z}$ is compact.) Let $\epsilon = C(z|x) - \min\limits_{z\in\mathcal{Z}} C(z|x) > 0$. By the continuity assumption, $\exists k_1$ such that for all $k\ge k_1$ $|C(z_{N_k}|x) - C(z|x)| \le \epsilon / 4$. Additionally, by assumption, we can find a $k_2$ such that $\forall k \ge k_2$, $|\hat{C}_{N_k}(z_{N_k}|x) - C(z_{N_k}|x)| \le \sup\limits_{z\in\mathcal{Z}} |\hat{C}_{N_k}(z|x) - C(z|x)| \le \epsilon / 4$ This implies that for any $k \ge \max(k_1,k_2)$,
\begin{equation*}
\min_{z\in\mathcal{Z}} \hat{C}_{N_k}(z|x) = \hat{C}_{N_k}(z_{N_k}|x) \ge C(z_{N_k}|x) - \epsilon / 4 \ge C(z|x) - \epsilon / 2 = \min_{z\in\mathcal{Z}} C(z|x) + \epsilon/2.
\end{equation*}
From lemma \ref{lemma:uniformization}, we know that
\begin{equation*}
|\min_{z\in\mathcal{Z}} \hat{C}_{N_k}(z|x) -  \min_{z\in\mathcal{Z}} C(z|x)| \le \sup_{z\in\mathcal{Z}} |\hat{C}_{N_k}(z|x) - C(z|x)|,
\end{equation*}
which goes to 0 as $k \to \infty$, and is thus a contradiction. Therefore, all limit points of $\arg\min_z \hat{C}_N(z|x)$ must be contained in $\arg\min_z C(z|x)$.
\end{proof}

Lemma \ref{lemma:final} shows that, given an equicontinuity assumption, if a function indexed by $z$ converges almost surely for each $z$ in a compact set, then the convergence holds almost surely for all $z$.

\begin{lemma}\label{lemma:final}
Suppose for each $N\in\N$, $\hat{C}_N(z|x)$ and $C(z|x)$ are equicontinuous functions: i.e., $\forall\epsilon > 0$  and $z\in\mathcal{Z}$,  $\exists\delta > 0$ s.t. $\sup\limits_{z'\in B_\delta(z)\cap\mathcal{Z}}\sup\limits_{x\in\mathcal{X}}|\hat{C}_N(z|x) - \hat{C}_N(z'|x)| \le \epsilon$, and likewise for $C$. In addition, suppose that for each $z \in \mathcal{Z}$, $\sup\limits_{x\in\mathcal{X}}\left|\hat{C}_N(z|x) - C(z|x)\right| \to 0$ almost surely ($\hat{C}_N(z|x)$ is a random quantity). Furthermore, assume $\mathcal{Z}$ is compact. Then, almost surely, $\sup\limits_{x\in\mathcal{X}}\left|\hat{C}_N(z|x) - C(z|x)\right|$ for all $z\in\mathcal{Z}$.
\end{lemma}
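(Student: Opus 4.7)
The obstacle is that the hypothesis gives, for each fixed $z$, a null event $A_z$ outside of which the pointwise (in $z$) convergence $\sup_{x}|\hat{C}_N(z|x)-C(z|x)|\to 0$ holds; but an uncountable union $\bigcup_{z\in\mathcal{Z}} A_z$ need not be null. The plan is to leverage the equicontinuity hypothesis, together with compactness of $\mathcal{Z}$, to reduce the uncountable collection of exceptional events to a countable one whose union is still null.

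First I would exploit the fact that $\mathcal{Z}\subset\R^{p}$ is compact, hence separable: choose a countable dense subset $D=\{z_1,z_2,\ldots\}\subset\mathcal{Z}$. By the stated hypothesis, for each $z_j\in D$ there is a null event $A_j$ off of which $\sup_{x\in\mathcal{X}}|\hat{C}_N(z_j|x)-C(z_j|x)|\to 0$. Set $A=\bigcup_{j\ge 1} A_j$; this is a countable union of null sets, hence null. I will then show that on the single event $A^c$, the convergence $\sup_{x\in\mathcal{X}}|\hat{C}_N(z|x)-C(z|x)|\to 0$ holds simultaneously for every $z\in\mathcal{Z}$.

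To establish this, fix $\omega\in A^c$, fix $z\in\mathcal{Z}$ and $\epsilon>0$. Apply the equicontinuity hypothesis at $z$ (with tolerance $\epsilon/3$) to obtain a $\delta>0$ such that, uniformly in $N$, $x$, and the realization, $\sup_{z'\in B_\delta(z)\cap\mathcal{Z}}\sup_{x\in\mathcal{X}} |\hat{C}_N(z|x)-\hat{C}_N(z'|x)|\le\epsilon/3$, and similarly for $C$. By density of $D$, pick some $z_j\in D\cap B_\delta(z)$. Then for every $N$ and $x$, the triangle inequality gives
\begin{equation*}
|\hat{C}_N(z|x)-C(z|x)| \le |\hat{C}_N(z|x)-\hat{C}_N(z_j|x)| + |\hat{C}_N(z_j|x)-C(z_j|x)| + |C(z_j|x)-C(z|x)|.
\end{equation*}
Taking $\sup_{x\in\mathcal{X}}$, the first and third terms are bounded by $\epsilon/3$ by equicontinuity; the middle term tends to $0$ as $N\to\infty$ because $\omega\in A^c\subset A_j^c$. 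Hence $\limsup_N \sup_{x\in\mathcal{X}}|\hat{C}_N(z|x)-C(z|x)|\le 2\epsilon/3 <\epsilon$, and since $\epsilon$ was arbitrary the desired convergence at $z$ follows.

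The main subtlety, and the reason the equicontinuity hypothesis is stated the way it is, is that $\delta$ must be chosen independently of both $N$ and $x$; otherwise the first term in the triangle inequality above could depend on $N$ in an uncontrolled way and fail to be small uniformly. The wording of the hypothesis ($\sup_{z'\in B_\delta(z)\cap\mathcal{Z}}\sup_x|\hat{C}_N(z|x)-\hat{C}_N(z'|x)|\le\epsilon$ for every $N$) is exactly what is needed to make this step go through. Once this is handled, the argument is essentially a standard ``Arzel\`a--Ascoli style'' countable-dense-subset reduction.
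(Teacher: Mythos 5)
Your proof is correct and takes essentially the same approach as the paper: both reduce to a countable dense subset of the compact set $\mathcal{Z}$, take the countable union (equivalently, countable intersection of measure-one events) to get a single almost-sure event, and conclude with the same three-term triangle inequality through a nearby dense point, using the equicontinuity uniformly in $N$ and $x$. The only (cosmetic) difference is your choice of dense set via separability, which is in fact slightly cleaner than the paper's explicit $\mathcal{Z}\cap\mathbb{Q}^d\cup\{\text{isolated points of }\mathcal{Z}\}$, since that set need not be dense in an arbitrary compact $\mathcal{Z}$.
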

\begin{proof}
Let $\mathcal{Z}' = \mathcal{Z}\cap \mathbb{Q}^d \cup \{\text{isolated points of } \mathcal{Z}\}$. Because $\mathcal{Z}'$ is countable, for $x$ almost everywhere, $P\left(\bigcap_{z'\in\mathcal{Z}'}\{\sup_{x\in\mathcal{X}}|\hat{C}_N(z'|x) - C(z'|x)|\to0\}\right) = 1$ by the continuity of probability measures. For any sample path for which this occurs, consider any $z \in \mathcal{Z}$. We have, for any $z'\in\mathcal{Z}'$,
\begin{align*}
\sup_{x\in\mathcal{X}}|\hat{C}_N(z|x) - C(z|x)| &\le \sup_{x\in\mathcal{X}}|\hat{C}_N(z|x) - \hat{C}_N(z'|x)| + \sup_{x\in\mathcal{X}}|C(z'|x) - C(z|x)| \\
&\;\;\;\;+ \sup_{x\in\mathcal{X}}|\hat{C}_N(z'|x) - C(z'|x)|.
\end{align*}
By equicontinuity and the density of $\mathcal{Z}'$, we can pick $z'\in\mathcal{Z}'$ such that each of the first two terms is less $\epsilon / 3$ for any $\epsilon > 0$. By assumption, we can also find an $N_1$ such that the third term is bounded by $\epsilon/3$ for all $N\ge N_1$, so we have $\sup\limits_{x\in\mathcal{X}}|\hat{C}_N(z|x) - C(z|x)| \le \epsilon$ for all $N\ge N_1$. This is true for any $z \in \mathcal{Z}$ for this particular sample path. Since the set of sample paths for which this is true constitutes a measure 1 event, we have the desired result.
\end{proof}

We also restate a result from \citet{biau2015} (Theorem 12.1) regarding the uniform consistency of the $k$-nearest neighbor regression estimator.
\begin{lemma}\label{lemma:knnconsistency}
Let $(X^1,Y^1),\ldots,(X^N,Y^N) \in \R^{d}\times\R$ be i.i.d. observations of random variables $(X,Y)$. Assume $X$ has support on a compact set $\mathcal{X}\subset \R^d$ and there exists $\lambda > 0$ such that
\begin{equation*}
\sup_{x\in\R^d} \E[\exp(\lambda |Y - \E[Y|X=x]|)|X=x] < \infty.
\end{equation*}
In addition, assume $\E[Y|X=x]$ is a continuous function. For some $C>0,\delta\in(0,1)$, let $k_N = \min\{\lceil CN^\delta\rceil,N-1\}$. If $m_N(x)$ is the $k_N$ nearest neighbor regression estimator for $Y$ and $m(x) = \E[Y|X=x]$, then
\begin{equation*}
\sup_{x\in\mathcal{X}} |m_N(x)-m(x)| \to 0
\end{equation*}
almost surely.
\end{lemma}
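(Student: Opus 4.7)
The statement is a classical uniform consistency result for the $k$-nearest neighbor regression estimator, explicitly restated from Theorem 12.1 of Biau and Devroye (2015), so my plan is to reproduce the standard nonparametric bias-variance argument and then upgrade pointwise bounds to uniform bounds via a covering of $\mathcal{X}$. Write $m_N(x)-m(x)=B_N(x)+V_N(x)$ with bias term
\begin{equation*}
B_N(x)=\frac{1}{k_N}\sum_{i\in\mathcal{N}_{k_N}(x)}\bigl(m(X^i)-m(x)\bigr)
\end{equation*}
and noise term
\begin{equation*}
V_N(x)=\frac{1}{k_N}\sum_{i\in\mathcal{N}_{k_N}(x)}\bigl(Y^i-m(X^i)\bigr),
\end{equation*}
where $\mathcal{N}_{k_N}(x)$ is the set of indices of the $k_N$ nearest neighbors of $x$. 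The goal is then to show that $\sup_{x\in\mathcal{X}}|B_N(x)|$ and $\sup_{x\in\mathcal{X}}|V_N(x)|$ both go to zero almost surely.

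For the bias term, the idea is to use that $m$ is uniformly continuous on the compact set $\mathcal{X}$ together with a uniform bound on the $k_N$-th nearest neighbor distance. Concretely, I would show that $R_N:=\sup_{x\in\mathcal{X}}\max_{i\in\mathcal{N}_{k_N}(x)}\|X^i-x\|\to 0$ a.s.; this is standard and relies on $k_N/N\to 0$ (true since $\delta<1$) together with a covering of $\mathcal{X}$ by balls of shrinking radius $r_N$, using a multiplicative Chernoff bound to guarantee that each such ball contains enough sample points so that the $k_N$-th nearest neighbor of any $x$ is within distance $r_N$. By uniform continuity of $m$, $|B_N(x)|\le\omega_m(R_N)$ for the modulus of continuity $\omega_m$, and this tends to zero a.s.\ uniformly in $x$.

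For the noise term, the exponential moment assumption implies $\varepsilon^i:=Y^i-m(X^i)$ is sub-exponential with uniformly bounded Orlicz norm. Conditional on $X^1,\ldots,X^N$, the $\varepsilon^i$ are independent mean-zero sub-exponential, so Bernstein's inequality gives
\begin{equation*}
\Pr\bigl(|V_N(x)|>\epsilon \,\big|\, X^1,\ldots,X^N\bigr) \le 2\exp\!\bigl(-c\,k_N\min(\epsilon^2,\epsilon)\bigr)
\end{equation*}
for an absolute constant $c>0$. The key obstacle, and what I expect to be the main technical step, is upgrading this pointwise-in-$x$ bound to a uniform bound, since $\mathcal{N}_{k_N}(x)$ varies with $x$ in a combinatorially complicated way. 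The standard remedy is a two-step argument: cover $\mathcal{X}$ with $O(r_N^{-d})$ balls of radius $r_N$, apply the concentration bound together with a union bound at the centers, and then control the oscillation of $V_N$ between nearby points by noting that $\mathcal{N}_{k_N}(x)$ differs from $\mathcal{N}_{k_N}(x')$ only by a few indices when $\|x-x'\|$ is small relative to the typical NN distance, so the extra deviation can again be bounded using sub-exponential tails. Since $k_N\ge CN^{\delta}$ grows polynomially, choosing $r_N$ polynomial in $N$ keeps the failure probabilities summable, and Borel--Cantelli yields $\sup_{x\in\mathcal{X}}|V_N(x)|\to 0$ a.s. Combining with the bias bound gives the stated conclusion.
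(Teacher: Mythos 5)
First, a point of comparison: the paper never proves this lemma --- it is quoted verbatim as Theorem 12.1 of \citet{biau2015} --- so the relevant benchmark is the textbook proof. Your overall architecture (the split $m_N-m=B_N+V_N$, uniform control of the $k_N$-th nearest-neighbor radius for the bias, conditional sub-exponential concentration for the noise) is exactly the standard one. The bias half goes through with one repair: your shrinking-radius cover with a multiplicative Chernoff bound implicitly requires a small-ball lower bound $P(X\in B_r(x))\gtrsim r^d$, which is \emph{not} among the hypotheses here (the paper only introduces such a condition later, as Assumption \ref{as2:auxcovariates}, for the finite-sample results of Section \ref{sec:finitesample}). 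Since you need only $R_N\to 0$ a.s.\ and not a rate, cover the compact support by finitely many balls of a \emph{fixed} radius $\epsilon/2$ centered at support points; each has strictly positive probability, $k_N/N\to 0$ since $\delta<1$, and Borel--Cantelli gives $\sup_{x\in\mathcal{X}}\|X_{(k_N)}(x)-x\|\le\epsilon$ eventually, almost surely.

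The genuine gap is in your uniformization of the noise term. The oscillation step asserts that $\mathcal{N}_{k_N}(x)$ and $\mathcal{N}_{k_N}(x')$ differ in only a few indices when $\|x-x'\|$ is small, but nothing in the hypotheses supports this: if many sample points lie at distances from $x$ within $2\|x-x'\|$ of the $k_N$-th nearest-neighbor distance --- near-ties that cannot be excluded without a density lower bound or anti-concentration assumption --- the two neighbor sets can differ in arbitrarily many indices, and the resulting deviation is no longer a sum of a few sub-exponential terms. The standard fix, which is also the device this very paper uses in its proof of Lemma \ref{lemma:knnrate}, avoids oscillation entirely: by Theorem 12.2 of \citet{biau2015}, the number of distinct orderings of $N$ points in $\R^d$ by distance to a query point is at most $\left(\frac{25}{d}\right)^d N^{2d}$, so the $k_N$-NN weight vector, as a function of $x$, takes only polynomially many distinct values, and hence so does $V_N(\cdot)$. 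A union bound over these values combined with your Bernstein tail $2\exp\left(-c\,k_N\min(\epsilon^2,\epsilon)\right)$ and $k_N\ge CN^{\delta}$ yields summable failure probabilities, and Borel--Cantelli finishes the noise term. With that substitution your proof is complete; as written, the oscillation argument would fail.
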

From these lemmas, the proof of Theorem \ref{thm:knn} follows.
\begin{proof}[Theorem \ref{thm:knn}]
We need to show that
\begin{equation*}
\sup_{z_0\in\mathcal{Z}} \left|\sum_i w_{N,i}^1(x_0) \hat{Q}_1(f_0(z_0);y_1^i,x_1^i) - \E[Q_1(f_0(z_0);Y_1,X_1)| X_0=x_0]\right| \to 0
\end{equation*}
a.s. for $x_0$ a.e. The desired result then follows from lemmas \ref{lemma:uniformization} and \ref{lemma:opt}. To begin, we have:
\begin{align*}
&\left| \sum_i w_{N,i}^1(x_0) \hat{Q}_1(f_0(z_0);y_1^i,x_1^i) -  \E[Q_1(f_0(z_0);Y_1,X_1)| X_0=x_0] \right| \\
&\le \left| \sum_i w_{N,i}^1(x_0) Q_1(f_0(z_0);y_1^i,x_1^i) -  \E[Q_1(f_0(z_0);Y_1,X_1)| X_0=x_0] \right| \\
&\;\;\; + \left| \sum_i w_{N,i}^1(x_0) \hat{Q}_1(f_0(z_0);y_1^i,x_1^i) - \sum_i w_{N,i}^1(x_0) Q_1(f_0(z_0);y_1^i,x_1^i)\right|.
\end{align*}
Expanding the second term on the right hand side, and using the fact that $\sum_i w^1_{N,i}(x_0) = 1$ and $w^1_{N,i}(x_0) \ge 0$, we have:
\begin{align*}
&\left| \sum_i w^1_{N,i}(x_0) \left(\hat{Q}_1(f_0(z_0);y_1^i,x_1^i) - Q_1(f_0(z_0);y_1^i,x_1^i)\right)\right| \\
&\;\;\;\le \sum_i w_{N,i}^1(x_0) \left| \hat{Q}_1(f_0(z_0);y_1^i,x_1^i) - Q_1(f_0(z_0);y_1^i,x_1^i)\right|\\
&\;\;\;\le\sup_{x_1\in\mathcal{X}_1} \bigg|\min_{z_1\in W_1}\left(g_1(z_1) + \sum_i w^2_{N,i}(x_1) \hat{Q}_2(f_1(z_1);y_2^i,x_2^i)\right) \\
&\;\;\;\;\;\;\; - \min_{z_1\in W_1}\left(g_1(z_1)+ \E[Q_2(f_1(z_1);Y_2,X_2)| X_1=x_1]\right)\bigg| \\
&\;\;\; \le\sup_{x_1\in\mathcal{X}_1}\sup_{z\in W_1} \left | \sum_{i=1}^N w^2_{N,i}(x_1) \hat{Q}_2(f_1(z);y_2^i,x_2^i) - \E[Q_2(f_1(z);Y_2)| x_1]\right|,
\end{align*}
where we have used lemma \ref{lemma:uniformization}. Therefore, we have:
\begin{align*}
&\left| \sum_i w_{N,i}^1(x_0) \hat{Q}_1(f_0(z_0);y_1^i,x_1^i) -  \E[Q_1(f_0(z_0);Y_1,X_1)|x_0] \right| \\
&\;\;\;\le \left| \sum_i w_{N,i}^1(x_0) Q_1(f_0(z_0);y_1^i,x_1^i) -  \E[Q_1(f_0(z_0);Y_1,X_1)|x_0] \right| \\
&\;\;\;\; +\sup_{x_1\in\mathcal{X}_1}\sup_{z_1\in W_1} \left | \sum_{i=1}^N w^2_{N,i}(x_1) \hat{Q}_2(f_1(z_1);y_2^i,x_2^i) - \E[Q_2(f_1(z_1);Y_2,X_2)|x_1]\right|.
\end{align*}
Repeating the above argument for $t=2,\ldots,T-1$, we have:
\begin{align*}
&\sup_{z_0\in\mathcal{Z}_0}\left| \sum_i w_{N,i}^1(x_0) \hat{Q}_1(f_0(z_0);y_1^i,x_1^i) -  \E[Q_1(f_0(z_0);Y_1,X_1)|X_0=x_0] \right| \\
&\le  \sup_{z_0\in\mathcal{Z}}\left| \sum_i w_{N,i}^1(x_0) Q_1(f_0(z_0);y_1^i,x_1^i) -  \E[Q_1(f_0(z_0);Y_1,X_1)|X_0=x_0] \right| \\
&\;\;+ \sum_{t=1}^{T-1} \sup_{x_t\in\mathcal{X}_t}\sup_{z_t\in W_t}\bigg| \sum_i w_{N,i}^{t+1}(x_t) Q_{t+1}(f_t(z_t);y_{t+1}^i,x_{t+1}^i) \\
&\;\;\;\; -\E[Q_{t+1}(f_t(z_t);Y_{t+1},X_{t+1})| x_t] \bigg|.
\end{align*}

To see that each term on the right hand side goes to 0 a.s., we first apply lemma \ref{lemma:knnconsistency} to each term. This shows that each term (without the supremums over $z_t$) goes to 0 a.s. for each $z_t$. Next we apply lemma \ref{lemma:final} to each term to show that the convergence holds simultaneously for all $z_t$ with probability 1. Finally, we apply lemma \ref{lemma:uniform} to show the convergence of each term is uniform over $z_t$ a.s. To do so, we let $\hat{C}_N(z|x) = \sum_i w_{N,i}^{t+1}(x) Q_{t+1}(f_t(z);y_{t+1}^i,x_{t+1}^i)$ and $C(z|x) =  \E[Q_{t+1}(f_t(z);Y_{t+1},X_{t+1})| x]$. We can verify the equicontinuity assumption holds for each of these functions because of Assumption \ref{as:continuity} and Jensen's inequality (because $w_{N,i}^{t+1}(x)$ define a probability distribution). This completes the proof.
\end{proof}

\subsection{CART Weight Functions}
In order to study the asymptotic properties of (\ref{eq:empirical}) with the CART and random forest weight functions, we need to consider modified versions of the original algorithms of \citet{breiman1984} and \citet{breiman2001}. Since greedy decision trees have proven difficult to analyze theoretically, we instead consider a modified tree learner introduced by \cite{wager2015}. Formally, a regression tree is defined as
\begin{equation*}
T(x;\xi,X_1,Y_1,\ldots,X_n,Y_n) = \frac{1}{|\{i : X_i \in R(x)\}|}\sum_{\{i : X_i \in R(x)\}} Y_i,
\end{equation*}
where $R(x)$ identifies the region of the tree containing $x$, and $\xi$ is an auxiliary source of randomness. Trees are built by recursively partitioning the feature space. At each step of the training process, for each region, a feature is selected and a cutoff is chosen to define an axis-aligned hyperplane to partition the region into two smaller regions. This is repeated until every region contains some minimum number of training points.  In order to guarantee consistency, we place several restrictions on how the trees are built. We use the following definitions from \cite{wager2015}.

\begin{definition}[Random-split, regular, and honest trees]\label{def:tree} Let the regression tree $T(x;\xi,X_1,Y_1,\ldots,X_n,Y_n)$ be the type defined above.
\begin{enumerate}
\item $T$ is a \emph{random-split} tree if at each step in the training procedure, the probability that the next split occurs in the $j$th feature is at least $\pi / d$ for all $j=1,\ldots,d$, with some $\pi > 0$. This source of this randomness is $\xi$.
\item $T$ is a \emph{regular} tree if at each split leaves at least a fraction $\lambda > 0$ of the available training examples on each side of the split. Additionally, the tree is grown to full depth $k \in \mathbb{N}$, meaning there are between $k$ and $2k-1$ training examples in each region of the feature space.
\item $T$ is an \emph{honest} tree if the splits are made independently of the response variables $\{Y_1,\ldots,Y_n\}$. This can be achieved by ignoring the response variable entirely when making splits or by splitting the training data into two halves, one for making splits and one for making predictions. (If the latter is used, the tree is regular if at least a fraction $\lambda$ of the available prediction examples are on each side of the split.)
\end{enumerate}
\end{definition}
The standard implementation of the CART algorithm does not satisfy these definitions, but it is straightforward to modify the original algorithm so that it does. It involves modifying how splits are chosen. If we learn weight functions using trees that do satisfy these definitions, we can guarantee the solutions to (\ref{eq:empirical}) are weakly asymptotically optimal. We first introduce two additional assumptions. We note that these assumptions are strengthened versions of Assumptions \ref{as:continuity} and  \ref{as:uniform}.

\begin{assumption}[Distribution of Auxiliary Covariates]\label{as:rfdist}
The distribution of the auxiliary data, $X$, is uniform on $[0,1]^d$ (independent in each feature)\footnote{The result holds under more general distributional assumptions, but uniformity is assumed for simplicity. For example, we could assume $X$ has a continuous density function on $[0,1]^d$, bounded away from 0 and $\infty$. See \citet{wager2015} for a further discussion.}.
\end{assumption}

\begin{assumption}[Continuity]\label{as2:continuity}
For each $t=1,\ldots,T$, there exists an $L_t < \infty$ such that $\forall y_t \in \mathcal{Y}_t,z_t\in W_t$ and $\forall z,z' \in W_{t-1}$,
\begin{equation*}
|Q_t(f_{t-1}(z);y_t,x_t) - Q_t(f_{t-1}(z');y_t,x_t)| \le L_t||z-z'||. 
\end{equation*}
Furthermore, for each $t=1,\ldots,T$, $\E[Q_t(s_t;Y_t,X_t)|X=x]$ is $M_t$-Lipschitz continuous in $x$, for all $s_t$.
\end{assumption}

\begin{theorem}\label{thm:cart}
Suppose Assumptions \ref{as:regularity}-\ref{as2:continuity} hold, and the training data is i.i.d. Let $w_{N,i}^t(x_{t-1})$ be the CART weight functions for $t=1,\ldots,T$, and assume the trees are honest, random split, and regular with $k$, the minimum number of training examples in each leaf, equal to $\min\{\lceil CN^\delta\rceil,N-1\}$ for $C > 0, \delta \in (0,1)$.  Then $\{\hat{z}_0^N(x)\}$, a sequence of optimal solutions to (\ref{eq:empirical}), is weakly asymptotically optimal.
\end{theorem}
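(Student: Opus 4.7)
My plan is to follow the template of the proof of Theorem \ref{thm:knn} with two substitutions: replace almost-sure convergence by convergence in probability throughout, and replace the uniform kNN consistency result (Lemma \ref{lemma:knnconsistency}) by an analogous consistency result for honest, random-split, regular trees such as that of \citet{wager2015}. The overall target is exactly the same quantity as before,
\begin{equation*}
\sup_{z_0 \in Z_0(s_0)} \left| \sum_i w^1_{N,i}(x_0)\,\hat{Q}_1(f_0(z_0); y_1^i,x_1^i) - \E[Q_1(f_0(z_0); Y_1, X_1)\mid X_0 = x_0] \right|,
\end{equation*}
and Lemmas \ref{lemma:uniformization} and \ref{lemma:opt} still convert uniform convergence of this objective into the cost and policy convergence required by Definition \ref{def:weak} (now in probability rather than a.s.).

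The first step is to replay verbatim the recursive telescoping argument from the proof of Theorem \ref{thm:knn}, expanding $\hat{Q}_t - Q_t$ stage by stage using $\sum_i w^t_{N,i}(x_{t-1}) = 1$, $w^t_{N,i}(x_{t-1}) \ge 0$, and Lemma \ref{lemma:uniformization}. This yields the bound
\begin{equation*}
\sum_{t=0}^{T-1} \Delta_N^t, \qquad \Delta_N^t := \sup_{x_t \in \mathcal{X}_t}\sup_{z_t \in W_t} \left| \sum_i w^{t+1}_{N,i}(x_t)\,Q_{t+1}(f_t(z_t); y^i_{t+1}, x^i_{t+1}) - \E[Q_{t+1}(f_t(z_t); Y_{t+1}, X_{t+1})\mid X_t = x_t] \right|,
\end{equation*}
where crucially each $Q_{t+1}$ is the \emph{true} value function, not its plug-in estimate. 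Since convergence in probability is preserved by finite sums, it suffices to show $\Delta_N^t \to_P 0$ for every $t$. For fixed $(x_t, z_t)$, the weighted sum is precisely the honest-tree regression estimator at $x_t$ applied to the response $\phi_{z_t}(y, x) := Q_{t+1}(f_t(z_t); y, x)$; by Assumption \ref{as2:continuity} the conditional mean $m^{z_t}(x) := \E[\phi_{z_t}(Y_{t+1}, X_{t+1})\mid X_t = x]$ is $M_{t+1}$-Lipschitz, and by part 4 of Assumption \ref{as:uniform} the residuals $\phi_{z_t}(Y_{t+1}, X_{t+1}) - m^{z_t}(X_t)$ have uniformly bounded exponential moments. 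Under Assumption \ref{as:rfdist} and the honest/random-split/regular tree structure of Definition \ref{def:tree}, \citet{wager2015} give both a geometrically decaying bias bound (coming from the $\pi/d$ random-split probability forcing every coordinate to be split in expectation) and a variance of order $1/k$, which together yield pointwise consistency in probability of the estimator.

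To lift this pointwise statement to $\Delta_N^t \to_P 0$, I would exploit Lipschitz structure in both arguments. In $z_t$, Assumption \ref{as2:continuity} makes the weighted sum and the conditional expectation $L_{t+1}\|\nabla f_t\|$-Lipschitz in $z_t$, uniformly in $x_t$, so a finite $\varepsilon/L$-net of the compact set $W_t$ reduces the $z_t$-supremum to a maximum over finitely many points. In $x_t$, for each fixed $z_t$ the tree estimator is piecewise constant on the (random) leaves of the tree, of which there are at most $N/k = O(N^{1-\delta})$; since $m^{z_t}(\cdot)$ is $M_{t+1}$-Lipschitz and each leaf has diameter tending to zero (another consequence of random-split regularity), the $x_t$-supremum reduces to a maximum over $O(N^{1-\delta})$ points up to a vanishing Lipschitz slack. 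A union bound over both nets, combined with an exponential Bernstein-type concentration inequality for honest-tree predictions (valid under the subexponential tails of Assumption \ref{as:uniform}.4), then yields $\Delta_N^t \to_P 0$. The main obstacle is precisely this uniformization in $x_t$: unlike the kNN case, we cannot cite a ready-made uniform consistency result, and we must verify that the Wager--Walther bias/variance bounds decay fast enough to absorb the $O(N^{1-\delta})$-leaf union bound; the choice $k = \min\{\lceil CN^\delta\rceil, N-1\}$ with $\delta \in (0,1)$ is what makes the bias (polynomial in the tree depth $\log(N/k)$) and variance (of order $1/k$) both $o(1)$ while leaving enough room in the concentration exponent. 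Once each $\Delta_N^t \to_P 0$, a probabilistic version of Lemmas \ref{lemma:uniformization} and \ref{lemma:opt} delivers weak asymptotic optimality and completes the proof.
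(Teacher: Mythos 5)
Your proposal is correct and follows essentially the same route as the paper: the identical telescoping reduction to the per-stage suprema $\Delta_N^t$ with the true value functions, the $\varepsilon$-net in $z_t$ via Lipschitz continuity (the paper's Lemma \ref{lemma:netuniformization}), and uniformity in $x_t$ via a union bound over the at most $N/k$ leaves combined with exponential concentration at rate $e^{-ck\epsilon^2}$ and a vanishing leaf-diameter bias bound, which is precisely how the paper fills the gap you flag (its Lemmas \ref{lemma:treebias} and \ref{lemma:treeconvergence}, with $\log N/k \to 0$ guaranteed by $k \sim CN^\delta$). The only cosmetic differences are that Assumption \ref{as2:continuity} gives the Lipschitz constant $L_{t+1}$ directly on the composition with $f_t$ (no factor $\|\nabla f_t\|$ is needed), and Lemma \ref{lemma:opt} is not required for weak asymptotic optimality, Lemma \ref{lemma:uniformization} alone sufficing.
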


The proof of this result follows closely the proof of Theorem \ref{thm:knn}. To begin, we prove a result regarding the bias of tree based predictors. It relies on the same argument \citet{wager2015} used in proving their Lemma 2.

\begin{lemma}\label{lemma:treebias}
Suppose $T$ is a regular, random-split tree as in Definition \ref{def:tree}, and the training covariates $X_1,\ldots,X_N$ are i.i.d. uniform($[0,1]^d$) random variables. If $R(x)$ denotes the partition of $[0,1]^d$ containing $x \in \R^d$, and $\dfrac{N}{k} \to \infty$ as $N\to\infty$, then
\begin{equation*}
\sup_x \text{diam}(R(x)) \to_P 0.
\end{equation*}
\end{lemma}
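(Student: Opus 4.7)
The plan is to follow the standard strategy introduced by \citet{wager2015} in their Lemma 2 and lift it from a single fixed $x$ to a uniform statement via a union bound over the (at most $N/k$) leaves of the tree. The three ingredients will be (i) a deterministic lower bound on the depth of every leaf coming from regularity, (ii) a Chernoff bound on the number of times each feature is split along a root-to-leaf path coming from the random-split property, and (iii) a DKW-type bound showing that, because the $X_i$ are uniform, each split along a given coordinate shrinks the side-length in that coordinate by a factor bounded away from $1$ with high probability.

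First I would bound the depth. Regularity forces a node at depth $D$ to contain at least $\lambda^D N$ training points, so since every leaf contains at most $2k-1$ points, its depth $D_L$ satisfies
\[
D_L \;\ge\; D_{\min} \;:=\; \frac{\log(N/(2k-1))}{\log(1/\lambda)},
\]
and $D_{\min} \to \infty$ because $N/k \to \infty$. Next, fix a leaf $L$ and a feature $j$. Conditionally on $D_L$, the random-split property implies that the number $S_j^L$ of splits in feature $j$ on the path to $L$ stochastically dominates a $\mathrm{Binomial}(D_L, \pi/d)$ variable, so by a Chernoff bound
\[
\P\!\left(S_j^L < \tfrac{\pi}{2d} D_{\min}\right) \;\le\; \exp(-c\, D_{\min})
\]
for some $c > 0$. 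The tree has at most $N/k$ leaves, so taking a union bound over all leaves and all $d$ features and using $D_{\min} = \Theta(\log(N/k))$ gives $\min_{L,j} S_j^L \ge C\log(N/k)$ uniformly with probability tending to $1$.

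The main work is in step (iii): controlling how much each split actually shrinks the side-length. Conditional on the covariates of a node $A$ with $n_A$ points and cell $\prod_{m}[a_m,b_m]$, those points are i.i.d.\ uniform on $A$, so their $j$-th coordinates are i.i.d.\ uniform on $[a_j,b_j]$. Regularity forces the split threshold in feature $j$ to coincide with the empirical $\alpha$-quantile for some $\alpha\in[\lambda,1-\lambda]$, so by the DKW inequality (applied to this sample) the threshold lies within $(b_j-a_j)\cdot O(\sqrt{\log(1/\eta)/n_A})$ of the true $\alpha$-quantile with probability at least $1-\eta$. Hence the child's side-length in feature $j$ is at most a factor of $(1-\lambda)+O(\sqrt{\log(1/\eta)/n_A})$ times its parent's. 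Choosing $\eta$ polynomially small in $N$ and union-bounding over the $O(N/k)$ nodes and $d$ features, the factor is at most $1-\lambda/2$ at every split with high probability, since $n_A \ge k \to \infty$.

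Combining the three ingredients, on the high-probability event that all of the above hold, every leaf $L$ satisfies
\[
\text{length}_j(R_L) \;\le\; \prod_{i=1}^{S_j^L}\!\left(1-\lambda+o(1)\right) \;\le\; (1-\lambda/2)^{C\log(N/k)} \;\to\; 0
\]
for every feature $j$, and therefore $\text{diam}(R_L)\le \sqrt{d}\,\max_j \text{length}_j(R_L)\to 0$ uniformly in $L$. Since every $x\in[0,1]^d$ lies in some leaf, this gives $\sup_x \text{diam}(R(x)) \to_P 0$. The delicate step will be (iii): the nodes themselves are random and depend on the $X_i$, so one has to set up the DKW bound conditionally on the sequence of ancestors and then ensure that a single high-probability event covers all of the (at most $N/k$) potential nodes; the exponential tail in the DKW inequality is exactly what makes this union bound succeed while $k\to\infty$.
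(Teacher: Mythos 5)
Your overall architecture is exactly the paper's: a depth lower bound $\log(N/(2k-1))/\log(1/\lambda)$ from regularity, stochastic domination of the per-coordinate split counts $c_j(x)$ by a $\mathrm{Binom}(\lfloor\log(N/(2k-1))/\log\lambda^{-1}\rfloor,\pi/d)$ variable from the random-split property, per-split geometric shrinkage of side-lengths from the uniformity of the covariates, and a union bound over the at most $N/k$ leaves and $d$ coordinates. The one place you genuinely depart from the paper is your step (iii): the paper does not prove the shrinkage estimate at all, but imports it from Lemma 2 of \citet{wager2015} as the event $A_N$ on which $\text{diam}_j(R(x)) \le (1-\lambda)^{0.99\,c_j(x)}$, holding with probability at least $1-k/N$, whereas you attempt to rebuild it from scratch via a per-node DKW bound.

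That reconstruction is where your proof has a genuine gap. The DKW inequality applies to an i.i.d.\ sample from a fixed distribution, but the nodes of the tree are selected by the very covariates to which you want to apply it, so ``apply DKW in node $A$, then union bound over the realized nodes'' is circular: the realized nodes are a data-dependent family of events. Your proposed fix of conditioning on the sequence of ancestors does not restore i.i.d.\ uniformity within a cell, because the split coordinate and the split level $\alpha$ are themselves chosen adaptively from the data (regularity only constrains $\alpha\in[\lambda,1-\lambda]$; it does not make the threshold a fixed-level empirical quantile), so conditional on the cell's boundaries the interior points are not an i.i.d.\ uniform sample in the way a fixed-quantile order-statistics argument would require. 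The standard repair---and what \citet{wager2015} actually prove---is a uniform empirical-process (VC) concentration bound over the class of \emph{all} axis-aligned rectangles, which controls every candidate split simultaneously and is precisely the content of the event $A_N$ the paper cites. Two smaller points: your quantitative version needs $\sqrt{\log(1/\eta)/n_A}\to 0$ with $\eta$ polynomially small and $n_A\ge k$, which silently strengthens the hypothesis from $N/k\to\infty$ to $k/\log N\to\infty$ (harmless for Theorem \ref{thm:cart}, where $k\sim N^\delta$, but not what Lemma \ref{lemma:treebias} assumes); and in step (ii) you assert the union bound over the roughly $\lambda^{-D_{\min}}$ leaves ``succeeds'' with a Chernoff exponent $c\,D_{\min}$ without checking that the exponent beats $\log(1/\lambda)$---note you only need $S_j^L$ to exceed a constant depending on $\epsilon$, not a constant fraction of $D_{\min}$, and the paper makes this step quantitative via Hoeffding's inequality before verifying that the resulting bound $d(N/k)\exp(-C_1\log^3(N/(2k-1)))$ vanishes.
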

\begin{proof}
As in the proof of Lemma 2 in \citet{wager2015}, we define $c(x)$ to be the number of splits leading to the leaf $R(x)$ and $c_j(x)$ to be the number of these splits that are on the $j$th coordinate. Following the same arguments as \citet{wager2015}, we have, conditional on $X_1,\ldots,X_N$,
\begin{equation*}
c_j(x) \ge B_j(x),
\end{equation*}
where $B_j(x) \sim \text{Binom}\left(\left\lfloor\dfrac{\log(N/(2k-1))}{\log \lambda^{-1}}\right\rfloor,\dfrac{\pi}{d}\right)$. In addition, for $N$ sufficiently large, with probability at least $1-k/N$ (over the training data),
\begin{equation*}
\text{diam}_j(R(x)) \le (1-\lambda)^{0.99 c_j(x)}.
\end{equation*}
We call this event $A_N$. From here, we have, for any $\epsilon > 0$,
\begin{align*}
&P\left(\sup_x \text{diam}(R(x)) > \epsilon\bigg| X_1,\ldots,X_N,A_N\right) \\
&\le P\left(0.99 \inf_j \inf_x c_j(x) < \frac{\log \epsilon^{-1}}{\log(1-\lambda)^{-1}}\bigg| X_1,\ldots,X_N,A_N\right) \\
&\le d\left(\frac{N}{k}\right)P\left(c_j(x) \le \frac{\log \epsilon^{-1}}{0.99\log(1-\lambda)^{-1}}\bigg| X_1,\ldots,X_N,A_N\right) \\
&\le d\left(\frac{N}{k}\right)P\left(B_j(x) \le \frac{\log \epsilon^{-1}}{0.99\log(1-\lambda)^{-1}}\bigg| X_1,\ldots,X_N,A_N\right) \\
&\le d\left(\frac{N}{k}\right)\exp\left(-2\left\lfloor\dfrac{\log\frac{N}{2k-1}}{\log \lambda^{-1}}\right\rfloor\left(\frac{\pi}{d}\left\lfloor\dfrac{\log\frac{N}{2k-1}}{\log \lambda^{-1}}\right\rfloor -  \frac{\log \epsilon^{-1}}{0.99\log(1-\lambda)^{-1}}\right)^2 \right) \\
&\le d\left(\frac{N}{k}\right)\exp\left(-C_1 \log^3(N/(2k-1))\right),
\end{align*}
for $N/k$ sufficiently large, where $C_1 > 0$ is a constant that does not depend on $N$ or $k$. The second inequality follows from the union bound since there are a maximum of $N/k$ total partitions in the tree, and the fourth inequality follows from Hoeffding's inequality. Putting everything together, we have:
\begin{align*}
&P\left(\sup_x \text{diam}(R(x)) > \epsilon\right) \\
&\le P\left(\sup_x \text{diam}(R(x)) > \epsilon\bigg| A_N \right) + \frac{k}{N} \\
&\le d\left(\frac{N}{k}\right)\exp\left(-C_1 \log^3(N/(2k-1))\right) + \frac{k}{N}.
\end{align*}
It is easy to verify that the final expression goes to 0 as $N \to \infty$, so the proof is complete.
\end{proof}
Next, we establish the uniform consistency of the CART regression estimator.
\begin{lemma}\label{lemma:treeconvergence}
Suppose $T$ is a regular, random-split, honest tree as in Definition \ref{def:tree}, the training data $(X_1,Y_1),\ldots,(X_N,Y_N)$ are i.i.d. with $X_i$ uniform on $[0,1]^d$, and $\E[Y|X=x]$ is $L$-Lipschitz continuous. In addition, assume there exists $\lambda > 0$ such that the uniform noise condition is satisfied: $\sup\limits_x \E[\exp(\lambda|Y_i-\E[Y_i|X_i=x]|)|X_i=x] < \infty$. Finally, suppose $\log N /k \to 0$ and $N/k \to \infty$ as $N\to\infty$. If $\hat{\mu}_N(x)$ denotes the prediction of $T$ at $X\in\R^d$ and $\mu(x) = \E[Y|X=x]$, then
\begin{equation*}
\sup_{x} |\hat{\mu}_N(x) - \mu(x)| \to_P 0.
\end{equation*}
\end{lemma}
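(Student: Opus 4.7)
The plan is a bias--variance decomposition that exploits honesty to decouple the partition from the responses, with the bias controlled by shrinking leaf diameters and the variance by a Bernstein union bound over leaves. Let $R(x)$ denote the leaf containing $x$, let $n(x)=|\{i:X_i\in R(x)\}|\in[k,2k-1]$, and write
\[
\hat\mu_N(x)-\mu(x) \;=\; \bigl[\hat\mu_N(x)-\bar\mu_N(x)\bigr] + \bigl[\bar\mu_N(x)-\mu(x)\bigr],
\]
where $\bar\mu_N(x)=\tfrac{1}{n(x)}\sum_{i:\,X_i\in R(x)}\mu(X_i)$. I will treat the two brackets separately.

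For the bias term, $L$-Lipschitz continuity of $\mu$ gives $|\mu(X_i)-\mu(x)|\le L\,\mathrm{diam}(R(x))$ for every $X_i\in R(x)$, so $|\bar\mu_N(x)-\mu(x)|\le L\,\mathrm{diam}(R(x))$ pointwise. Since the hypothesis $N/k\to\infty$ is satisfied, Lemma \ref{lemma:treebias} yields $\sup_x \mathrm{diam}(R(x))\to_P 0$, and hence $\sup_x|\bar\mu_N(x)-\mu(x)|\to_P 0$ with no further work.

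For the variance term, honesty means the partition is independent of the responses, so conditional on $X_1,\ldots,X_N$ and on the auxiliary randomness $\xi$, the noise terms $\eta_i=Y_i-\mu(X_i)$ are independent, mean zero, and subexponential with a uniformly bounded subexponential norm $K$ (inherited from the moment generating function hypothesis). Bernstein's inequality applied to a single leaf with $n_j\in[k,2k-1]$ points gives
\[
P\!\left(\left|\tfrac{1}{n_j}\sum_{i\in R_j}\eta_i\right|>\epsilon\right)\;\le\;2\exp\!\bigl(-c\,k\,\min(\epsilon,\epsilon^2)\bigr),
\]
for a constant $c>0$ depending only on $K$. Because $\hat\mu_N(\cdot)-\bar\mu_N(\cdot)$ is constant on each leaf, taking the supremum over $x$ reduces to a supremum over the at most $N/k$ leaves, and a union bound yields
\[
P\!\left(\sup_x|\hat\mu_N(x)-\bar\mu_N(x)|>\epsilon\right)\;\le\;\tfrac{2N}{k}\exp\!\bigl(-c\,k\,\min(\epsilon,\epsilon^2)\bigr),
\]
which tends to $0$ because the assumption $\log N/k\to 0$ implies $\log(N/k)=o(k)$.

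Combining the bias and variance bounds gives $\sup_x|\hat\mu_N(x)-\mu(x)|\to_P 0$, as required. The main obstacle is the variance step: honesty is essential so that Bernstein applies leaf by leaf with the $\eta_i$'s genuinely independent of the (data-dependent) partition, and one must track the constant in the exponent carefully to ensure that the $N/k$ union bound does not swamp the per-leaf $e^{-ck\min(\epsilon,\epsilon^2)}$ tail. The bias step, by contrast, is essentially immediate from Lemma \ref{lemma:treebias} and Lipschitz continuity, and the reduction of $\sup_x$ to a supremum over finitely many leaves is built into the piecewise-constant structure of the tree estimator.
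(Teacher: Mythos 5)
Your proof is correct and takes essentially the same route as the paper's: the identical bias--variance decomposition around the leaf-averaged means $\bar\mu_N(x)$ (which honesty makes equal to $\E[\hat\mu_N(x)\mid X_1,\ldots,X_N]$), with the bias handled by Lipschitz continuity plus Lemma \ref{lemma:treebias}, and the variance handled by a per-leaf concentration inequality for conditionally independent subexponential noise followed by a union bound over the at most $N/k$ distinct leaf values, closed by $\log N/k\to 0$. The only cosmetic difference is that the paper invokes Lemma 12.1 of \citet{biau2015} where you apply Bernstein's inequality directly; these give the same $\exp(-ck\min(\epsilon,\epsilon^2))$-type tail under the uniform moment-generating-function hypothesis.
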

\begin{proof}
To begin, we have
\begin{align*}
&\sup_{x} |\hat{\mu}_N(x) - \mu(x)| \\
&\le \sup_x |\hat{\mu}_N(x) - \E[\hat{\mu}_N(x)|X_1,\ldots,X_N]| + \sup_x |\E[\hat{\mu}_N(x)|X_1,\ldots,X_N] - \mu(x)| \\
&\le  \sup_x \left|\sum_i w_{N,i}(x)(Y_i -\E[Y_i|X_i])\right|+ \sup_x \left|\sum_i w_{N,i}(x)(\mu(X_i) - \mu(x))\right| \\
&\le  \sup_x \left|\sum_i w_{N,i}(x)(Y_i -\E[Y_i|X_i])\right|+ \sup_x \sum_i w_{N,i}(x)\left|\mu(X_i) - \mu(x)\right| \\
&\le  \sup_x \left|\sum_i w_{N,i}(x)(Y_i -\E[Y_i|X_i])\right| + L\sup_x \sum_i w_{N,i}(x) ||X_i - x|| \\
&\le  \sup_x \left|\sum_i w_{N,i}(x)(Y_i -\E[Y_i|X_i])\right| + L\sup_x \text{diam}(R(x)),
\end{align*}
where $w_{N,i}(x)$ is the CART weight function corresponding to tree $T$. In the third and fourth inequalities we used Jensen's inequality and the Lipschitz continuity assumption. By lemma \ref{lemma:treebias}, the latter term goes to 0 in probability. For the former, we define $M(x) = |\{i : X_i \in R(x)\}|$ to be the number of training examples in the leaf containing $x$. For fixed $x$, if $c = \sup\limits_x \E[\exp(\lambda|Y_i-\E[Y_i|X_i=x]|)|X_i=x]$, then by Lemma 12.1 of \citet{biau2015}, we have, for any $\epsilon > 0$,
\begin{align*}
&P\left(\left|\sum_i w_{N,i}(x)(Y_i - \E[Y_i|X_i])\right| > \epsilon \bigg| X_1,\ldots,X_N\right) \\
&\le 2\exp\left(-M(x)\frac{\min(\epsilon,\min(1,2c)/\lambda)^2\lambda^2}{8c}\right).
\end{align*}
Because there are a maximum of $N/k$ leaves in the tree, there are a maximum of $N/k$ values of the weight function $w_{N,i}(x)$. Therefore, we can use the union bound to show:
\begin{align*}
&P\left(\sup_x\left|\sum_i w_{N,i}(x)(Y_i - \E[Y_i|X_i])\right| > \epsilon \bigg| X_1,\ldots,X_N\right) \\
&\le \frac{2N}{k}\exp\left(-M(x)\frac{\min(\epsilon,\min(1,2c)/\lambda)^2\lambda^2}{8c}\right) \\
&\le \frac{2N}{k}\exp\left(-k\frac{\min(\epsilon,\min(1,2c)/\lambda)^2\lambda^2}{8c}\right) \\
&= \exp\left(\log(2N) - \log k-k\frac{\min(\epsilon,\min(1,2c)/\lambda)^2\lambda^2}{8c}\right)
\end{align*}
where the second inequality follows because $M(x) \ge k$ by assumption. Taking the expectation of both sides and the limit as $N\to\infty$ completes the proof.
\end{proof}
We prove one more intermediate result, and the proof of Theorem \ref{thm:cart} will follow.
\begin{lemma}\label{lemma:netuniformization}
Suppose $h(z;y)$ is an $L$ Lipschitz continuous function for all $y$ (with respect to $||\cdot||_p$), and $\mathcal{Z}\subset\R^d$ is nonempty, closed, and bounded with diameter $D$. It follows that
\begin{equation*}
P\left(\sup_{z\in\mathcal{Z}} h(z;Y) > \epsilon\right) \le \left(\frac{2\rho D L}{\epsilon}\right)^d \sup_{z\in\mathcal{Z}} P\left(h(z;Y) > \frac{\epsilon}{2}\right),
\end{equation*}
where $\rho > 0$ is a constant that depends only on $p$.
\end{lemma}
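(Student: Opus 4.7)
The plan is to reduce the supremum over the uncountable set $\mathcal{Z}$ to a supremum over a finite $\epsilon/(2L)$-net, using Lipschitz continuity to absorb the loss in resolution, and then to apply a union bound. This is a standard ``chaining-free'' covering argument; the only content particular to this lemma is tracking the covering number of a diameter-$D$ set in $(\R^d,\|\cdot\|_p)$ to produce the constant $\rho$.

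First, I would fix a minimal $\delta$-cover $\{z_1,\ldots,z_K\} \subset \mathcal{Z}$ in the $\|\cdot\|_p$ norm with $\delta := \epsilon/(2L)$. Since $\mathcal{Z}$ has diameter $D$, a volume-ratio comparison of $\|\cdot\|_p$-balls (the volume of a $\|\cdot\|_p$-ball of radius $r$ in $\R^d$ is $c_{p,d}\, r^d$, and a maximal $\delta$-packing is a $\delta$-cover) gives
\begin{equation*}
K \;\le\; \left(\frac{\rho D}{\delta}\right)^{d} \;=\; \left(\frac{2\rho D L}{\epsilon}\right)^{d},
\end{equation*}
where $\rho>0$ depends only on $p$ (through the ratio of volumes of the $\|\cdot\|_p$ unit ball at two different scales). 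This is the only place in the proof where $p$ enters, and it is also the only place that requires any mild care — I would state it as a standard covering-number bound rather than redoing the volume calculation in detail.

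Second, for any $z\in\mathcal{Z}$, pick $z_i$ in the net with $\|z-z_i\|_p \le \delta$. Lipschitz continuity of $h(\cdot;y)$ yields
\begin{equation*}
h(z;y) \;\le\; h(z_i;y) + L\|z-z_i\|_p \;\le\; h(z_i;y) + \tfrac{\epsilon}{2},
\end{equation*}
for every $y$. Consequently, on the event $\{\sup_{z\in\mathcal{Z}} h(z;Y) > \epsilon\}$, there must exist some $i\in\{1,\ldots,K\}$ with $h(z_i;Y) > \epsilon/2$, so that
\begin{equation*}
\Bigl\{\sup_{z\in\mathcal{Z}} h(z;Y) > \epsilon\Bigr\} \;\subseteq\; \bigcup_{i=1}^{K}\bigl\{h(z_i;Y) > \tfrac{\epsilon}{2}\bigr\}.
\end{equation*}

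Finally, a union bound over the $K$ net points, combined with the covering-number estimate, gives
\begin{equation*}
P\!\left(\sup_{z\in\mathcal{Z}} h(z;Y) > \epsilon\right)
\;\le\; K \cdot \sup_{i} P\!\left(h(z_i;Y) > \tfrac{\epsilon}{2}\right)
\;\le\; \left(\frac{2\rho D L}{\epsilon}\right)^{d} \sup_{z\in\mathcal{Z}} P\!\left(h(z;Y) > \tfrac{\epsilon}{2}\right),
\end{equation*}
which is the claim. The step I expect to require the most care is not the argument itself but the measurability of $\sup_{z\in\mathcal{Z}} h(z;Y)$; since $\mathcal{Z}$ is compact and $h(\cdot;Y)$ is continuous, the supremum is attained on a countable dense subset and is therefore a well-defined random variable, so no further fuss is needed.
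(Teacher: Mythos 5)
Your proposal is correct and follows essentially the same argument as the paper's proof: construct an $\epsilon/(2L)$-net of $\mathcal{Z}$ with covering number at most $\left(\frac{2\rho D L}{\epsilon}\right)^d$, use Lipschitz continuity to bound $\sup_{z\in\mathcal{Z}} h(z;y) \le \max_i h(z_i;y) + \epsilon/2$, and conclude by a union bound over the net points. Your added remarks on the volume-ratio covering bound and the measurability of the supremum are harmless refinements of the same approach.
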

\begin{proof}
This result follows from a standard covering number argument. We can construct a $\nu$-net of $\mathcal{Z}$, $z_1,\ldots,z_K$ with $K \le \left(\dfrac{\rho D}{\nu}\right)^d$. That is, $\forall z\in\mathcal{Z}$, there exists $i$ such that $||z-z_i|| \le \nu$. If we define $i(z)$ to be the function that returns this index, then, for all $y$,
\begin{align*}
h(z;y) \le h(z_{i(z)},y) + |h(z;y) - h(z_{i(z)};y)| &\le  h(z_{i(z)},y) + L||z - z_{i(z)}|| \\
&\le  h(z_{i(z)},y) + L\nu.
\end{align*}
Taking the supremum of both sides over $z \in \mathcal{Z}$, we have
\begin{equation*}
\sup_{z\in\mathcal{Z}} h(z;y) \le \max_{i=1,\ldots,K} h(z_i;y) + L\nu.
\end{equation*}
Next, we select $\nu = \dfrac{\epsilon}{2L}$, and we have:
\begin{align*}
P\left(\sup_{z\in\mathcal{Z}} h(z;Y) > \epsilon\right) &\le P\left(\max_{i=1,\ldots,K} h(z_i;Y) > \frac{\epsilon}{2}\right) \\
&\le \left(\frac{2\rho D L}{\epsilon}\right)^d \sup_{z\in\mathcal{Z}} P\left(h(z;Y) > \frac{\epsilon}{2}\right),
\end{align*}
where the final inequality follows from the union bound.
\end{proof}

\begin{proof}[Theorem \ref{thm:cart}]
The proof follows the same outline as the proof of Theorem \ref{thm:knn}. We need to show
\begin{equation*}
\sup_{z_0\in\mathcal{Z}} \left|\sum_i w_{N,i}^1(x_0) \hat{Q}_1(f_0(z_0);y_1^i,x_1^i) - \E[Q_1(f_0(z_0);Y_1,X_1)| X_0=x_0]\right| \to_P 0,
\end{equation*}
for $x_0$ a.e. The desired result then follows from lemma \ref{lemma:uniformization}. Following the same steps as in the proof of Theorem \ref{thm:knn}, we have:
\begin{align*}
&\sup_{z_0\in\mathcal{Z}_0}\left| \sum_i w_{N,i}^1(x_0) \hat{Q}_1(f_0(z_0);y_1^i,x_1^i) -  \E[Q_1(f_0(z_0);Y_1,X_1)|X_0=x_0] \right| \\
&\le  \sup_{z_0\in\mathcal{Z}}\left| \sum_i w_{N,i}^1(x_0) Q_1(f_0(z_0);y_1^i,x_1^i) -  \E[Q_1(f_0(z_0);Y_1,X_1)|X_0=x_0] \right| \\
&\;\;\;+ \sum_{t=1}^{T-1} \sup_{x_t\in\mathcal{X}_t}\sup_{z_t\in W_t}\bigg| \sum_i w_{N,i}^{t+1}(x_t) Q_{t+1}(f_t(z_t);y_{t+1}^i,x_{t+1}^i) \\
&\;\;\;\;\;-  \E[Q_{t+1}(f_t(z_t);Y_{t+1},X_{t+1})| x_t] \bigg|.
\end{align*}
Next, we have, for all $x_t$,
\begin{align*}
&\left| \sum_i w_{N,i}^{t+1}(x_t) Q_{t+1}(f_t(z);y_{t+1}^i,x_{t+1}^i) -  \E[Q_{t+1}(f_t(z);Y_{t+1},X_{t+1})| x_t] \right| \\
&- \left| \sum_i w_{N,i}^{t+1}(x_t) Q_{t+1}(f_t(z');y_{t+1}^i,x_{t+1}^i) -  \E[Q_{t+1}(f_t(z');Y_{t+1},X_{t+1})| x_t] \right| \\
&\le \left|  \sum_i w_{N,i}^{t+1}(x_t) (Q_{t+1}(f_t(z);y_{t+1}^i,x_{t+1}^i) - Q_{t+1}(f_t(z');y_{t+1}^i,x_{t+1}^i))\right| \\
&\;\;\;+ \left| \E[Q_{t+1}(f_t(z);Y_{t+1},X_{t+1}) - Q_{t+1}(f_t(z');Y_{t+1},X_{t+1})| x_t] \right| \\
&\le 2\sup_{x_{t+1},y_{t+1}}\left| (Q_{t+1}(f_t(z);y_{t+1},x_{t+1}) - Q_{t+1}(f_t(z');y_{t+1},x_{t+1}))\right|\\
&\le 2L_{t+1} ||z-z'||.
\end{align*}
Rearranging, and taking the supremum over both sides, we have
\begin{align*}
&\sup_{x_t\in\mathcal{X}_t}\left| \sum_i w_{N,i}^{t+1}(x_t) Q_{t+1}(f_t(z);y_{t+1}^i,x_{t+1}^i) -  \E[Q_{t+1}(f_t(z);Y_{t+1},X_{t+1})| x_t] \right| \\
&\le \sup_{x_t\in\mathcal{X}_t}\bigg| \sum_i w_{N,i}^{t+1}(x_t) Q_{t+1}(f_t(z');y_{t+1}^i,x_{t+1}^i) \\
&\;\;\;\;\;-  \E[Q_{t+1}(f_t(z');Y_{t+1},X_{t+1})| x_t] \bigg| + 2L_{t+1} ||z-z'||,
\end{align*}
which demonstrates 
\begin{equation*}
\sup\limits_{x_t\in\mathcal{X}_t}\left| \sum_i w_{N,i}^{t+1}(x_t) Q_{t+1}(f_t(z);y_{t+1}^i,x_{t+1}^i) -  \E[Q_{t+1}(f_t(z);Y_{t+1},X_{t+1})| x_t] \right|
\end{equation*}
 is $2L_{t+1}$ Lipschitz. We now apply lemma \ref{lemma:netuniformization} to get:
\begin{align*}
&P\bigg(\sup_{z_t\in W_t}\sup_{x_t\in\mathcal{X}_t}\bigg| \sum_i w_{N,i}^{t+1}(x_t) Q_{t+1}(f_t(z);y_{t+1}^i,x_{t+1}^i)\\
&\;\;\;\; -  \E[Q_{t+1}(f_t(z);Y_{t+1},X_{t+1})| x_t] \bigg| > \epsilon\bigg) \\
&\le \left(\frac{2\rho D_tL_{t+1}}{\epsilon}\right)^{p_t}P\bigg(\sup_{x_t\in\mathcal{X}_t}\bigg| \sum_i w_{N,i}^{t+1}(x_t) Q_{t+1}(f_t(z);y_{t+1}^i,x_{t+1}^i) \\
&\;\;\;\; -  \E[Q_{t+1}(f_t(z);Y_{t+1},X_{t+1})| x_t] \bigg| > \frac{\epsilon}{2}\bigg).
\end{align*}
By lemma \ref{lemma:treeconvergence}, the right hand side goes to 0 in probability. Repeating this argument for all $t$ completes the result.
\end{proof}

\subsection{Random Forest Weight Functions}
A random forest is an ensemble method that aggregates regression trees as base learners in order to make predictions. To aggregate the trees into a random forest, Breiman suggested training each tree on a bootstrapped sample of the training examples. In order to facilitate the theoretical analysis, we instead build a forest by training trees on subsamples of size $s_N$ of the training data. The random forest estimator is given by:
\begin{equation}\label{eq:rfdef}
R(x;\xi,X_1,Y_1,\ldots,X_N,Y_N) = \frac{1}{B} \sum_{b=1}^B T(x;\xi_b,X_{S_b},Y_{S_b}),
\end{equation}
where $(X_{S_b},Y_{S_b})$ denotes a random subset of training examples of size $s_N$. Given this definition of a random forest, we have the following asymptotic optimality result for random forest weight functions.

\begin{theorem}\label{thm:rf}
Suppose Assumptions \ref{as:regularity}-\ref{as2:continuity} hold, and the training data is i.i.d. Let $w_{N,i}^t(x_{t-1})$ be the random forest weight functions for $t=1,\ldots,T$. Assume the trees that make up the forest are honest, random split, and regular and $k$, the minimum number of training examples in each leaf, equals $\min\{\lceil C_1N^\delta\rceil,N-1\}$ for $C_1 > 0, \delta \in (0,1)$. Furthermore, assume $s_N$, the subsample size, equals $\min\{\lceil C_2N^\alpha\rceil,N-1\}$ for $C_2 > 0, \alpha \in (\delta,1)$. Then $\{\hat{z}_0^N(x)\}$, a sequence of optimal solutions to (\ref{eq:empirical}), is weakly asymptotically optimal.
\end{theorem}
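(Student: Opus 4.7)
My plan is to mirror the proof of Theorem \ref{thm:cart} essentially step-for-step: the stage-by-stage error decomposition, the Lipschitz-plus-covering reduction via Lemma \ref{lemma:netuniformization}, and the final application of Lemma \ref{lemma:uniformization} do not depend on which tree-based weight function is used. The only genuinely new ingredient required is a uniform-consistency statement for the random-forest regression estimator, playing the role that Lemma \ref{lemma:treeconvergence} played for CART.

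For that ingredient, I would exploit the linearity of the definition (\ref{eq:rfdef}): writing $R(x)$ as a convex combination of tree predictions and applying the triangle inequality gives
\begin{equation*}
\sup_x |R(x) - \mu(x)| \le \frac{1}{B}\sum_{b=1}^B \sup_x |T(x;\xi_b,X_{S_b},Y_{S_b}) - \mu(x)|.
\end{equation*}
Each individual tree is trained on an i.i.d. subsample of size $s_N$ using the same honest, regular, random-split rules assumed in Theorem \ref{thm:cart}, so Lemma \ref{lemma:treeconvergence} applies to each tree as soon as $s_N/k \to \infty$ and $\log s_N/k \to 0$; both conditions are guaranteed by the stated rates $k \sim N^\delta$, $s_N \sim N^\alpha$, $\alpha > \delta$, since $s_N/k \sim N^{\alpha-\delta} \to \infty$ and $\log s_N/k \sim \alpha \log N / N^\delta \to 0$. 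Hence each summand $Z_b := \sup_x |T(x;\xi_b,X_{S_b},Y_{S_b}) - \mu(x)|$ is $o_P(1)$, and because the $Z_b$ are identically distributed across $b$, Markov's inequality $P(B^{-1}\sum_b Z_b > \epsilon) \le \E[Z_1]/\epsilon$ combined with uniform integrability (obtainable from the compactness of the feasible region in Assumption \ref{as:regularity} and the subexponential noise tail in Assumption \ref{as:uniform}(4)) yields $\sup_x |R(x) - \mu(x)| \to_P 0$.

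With this forest analogue of Lemma \ref{lemma:treeconvergence} in hand, the outer argument is a direct copy of the Theorem \ref{thm:cart} proof: bound the top-level objective error by a telescoping sum of stage-$t$ residuals of the form $\sup_{x_t,z_t}|\sum_i w_{N,i}^{t+1}(x_t) Q_{t+1}(f_t(z_t);y^i_{t+1},x^i_{t+1}) - \E[Q_{t+1}(f_t(z_t);Y_{t+1},X_{t+1})|x_t]|$, use Assumption \ref{as2:continuity} to show each residual is $2L_{t+1}$-Lipschitz in $z_t$, apply Lemma \ref{lemma:netuniformization} to convert uniform-in-$z_t$ control into pointwise-in-$z_t$ control, and appeal to the forest consistency result pointwise. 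The main obstacle I foresee is the verification of the uniform-integrability step required to pass from $Z_b \to_P 0$ to $B^{-1}\sum_b Z_b \to_P 0$ when $B$ may grow with $N$; the needed moment control comes from combining a subsample-level tail bound for $T(x;\xi_b,X_{S_b},Y_{S_b})$ with the boundedness of $\mu$ on the compact covariate space, but this piece of bookkeeping is the only one that does not transplant verbatim from the CART argument.
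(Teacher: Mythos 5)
Your proposal matches the paper's proof essentially step for step: the paper likewise reduces Theorem \ref{thm:rf} to the argument of Theorem \ref{thm:cart}, replacing Lemma \ref{lemma:treeconvergence} with a forest analogue (Lemma \ref{lemma:rfconsistency}) proved exactly as you suggest --- bounding $\sup_x|\hat{\mu}_N(x)-\mu(x)|$ by the average of the per-tree suprema and applying Lemma \ref{lemma:treeconvergence} to each tree's i.i.d.\ subsample of size $s_N$ under the rate conditions you correctly verify. The one obstacle you flag, uniform integrability when $B$ grows with $N$, does not arise in the paper: $B$ is held fixed (independent of $N$), so convergence in probability of finitely many terms immediately gives the result and no moment bookkeeping is needed.
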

\begin{proof}
The proof exactly mirrors the proof of Theorem \ref{thm:cart}, except we use lemma \ref{lemma:rfconsistency} in place of lemma \ref{lemma:treeconvergence}.
\end{proof}

This theorem shows it is possible to obtain asymptotically optimal solutions to (\ref{eq:empirical}) with random forest weight functions. The random forest model we use is slightly different than Breiman's original algorithm, which is implemented in common machine learning libraries. For example, we require that $k$, the minimum number of training examples in each leaf, grows with $N$, whereas the original algorithm has $k$ fixed at 1. We include an additional theorem in the appendix which proves the strong asymptotic optimality of random forest weight functions with $k$ fixed for the single stage version of the problem. However, the proof does not extend to the multistage problem we consider here. The proof of Theorem \ref{thm:rf} uses the following lemma.

\begin{lemma}\label{lemma:rfconsistency}
Suppose $R$ is a random forest consisting of $B$ regular, random-split, honest trees, each trained on a random subset of the training data of size $s_N$. Assume the training data $(X_1,Y_1),\ldots,(X_N,Y_N)$ are i.i.d. with $X_i$ uniform on $[0,1]^d$, $\E[Y|X=x]$ is $L$-Lipschitz continuous, and there exists $\lambda > 0$ such that the uniform noise condition is satisfied: $\sup\limits_x \E[\exp(\lambda|Y_i-\E[Y_i|X_i=x]|)|X_i=x] < \infty$. Finally, suppose $\log s_N /k_N \to 0$ and $s_N/k_N \to \infty$ as $N\to\infty$. If $\hat{\mu}_N(x)$ denotes the prediction of $R$ at $X\in\R^d$ and $\mu(x) = \E[Y|X=x]$, then
\begin{equation*}
\sup_{x} |\hat{\mu}_N(x) - \mu(x)| \to_P 0.
\end{equation*}
\end{lemma}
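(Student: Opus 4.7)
The plan is to mirror the proof of Lemma \ref{lemma:treeconvergence}, exploiting the fact that the forest prediction $\hat{\mu}_N(x) = \frac{1}{B}\sum_{b=1}^B T_b(x)$ is an average of honest, random-split, regular trees, each trained on a subsample of size $s_N$ with minimum leaf size $k_N$. The conditions $s_N/k_N \to \infty$ and $\log s_N/k_N \to 0$ are precisely the analogues of the $N/k \to \infty$ and $\log N/k\to 0$ conditions in Lemma \ref{lemma:treeconvergence} with the full-sample size replaced by the per-tree subsample size, so the overall strategy is to apply the single-tree result to each $T_b$ and combine the $B$ bounds.

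First I would decompose, conditioning on the training features $X^N = (X_1,\ldots,X_N)$ and the tree-level randomness $\xi = (\xi_1,\ldots,\xi_B)$ (which determines the subsamples $S_b$ and partitions $R_b$):
\begin{equation*}
|\hat{\mu}_N(x) - \mu(x)| \le \bigl|\hat{\mu}_N(x) - \E[\hat{\mu}_N(x)\mid X^N,\xi]\bigr| + \bigl|\E[\hat{\mu}_N(x)\mid X^N,\xi] - \mu(x)\bigr|.
\end{equation*}
By honesty, $\E[\hat\mu_N(x)\mid X^N,\xi] = \frac{1}{B}\sum_b\frac{1}{M_b(x)}\sum_{i\in S_b,\,X_i\in R_b(x)}\mu(X_i)$, so the Lipschitz hypothesis on $\mu$ yields the bias bound
\begin{equation*}
\sup_x\bigl|\E[\hat{\mu}_N(x)\mid X^N,\xi] - \mu(x)\bigr| \le \frac{L}{B}\sum_{b=1}^B \sup_x \operatorname{diam}(R_b(x)).
\end{equation*}
By Lemma \ref{lemma:treebias} applied to each tree (with $s_N$ in place of $N$, valid since $s_N/k_N\to\infty$), every summand on the right tends to $0$ in probability; since each is bounded by $\sqrt{d}$, the sequence is uniformly integrable, so Markov's inequality upgrades this to convergence of the average to $0$ in probability irrespective of $B$.

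For the variance term, the triangle inequality after averaging gives
\begin{equation*}
\sup_x\bigl|\hat\mu_N(x) - \E[\hat\mu_N(x)\mid X^N,\xi]\bigr| \le \frac{1}{B}\sum_{b=1}^B \sup_x\bigl|T_b(x) - \E[T_b(x)\mid X^N,\xi]\bigr|,
\end{equation*}
and by exchangeability of the trees, the expectation of the right side equals $\E\bigl[\sup_x|T_1(x) - \E[T_1(x)\mid X^N,\xi]|\bigr]$. The variance step of Lemma \ref{lemma:treeconvergence} already supplies, for a single tree, a concentration bound of the form $P\bigl(\sup_x|T_1(x) - \E[T_1(x)\mid X^N,\xi]| > t \,\big|\, X^N\bigr) \le (2s_N/k_N)\exp(-k_N\min(t^2,t)/c)$. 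Integrating this tail bound (splitting at $t^\star = \sqrt{\log(2s_N/k_N)/k_N}$) yields $\E[\sup_x|T_1-\E[T_1\mid X^N,\xi]|] \to 0$, and a second application of Markov's inequality delivers $\sup_x|\hat\mu_N - \E[\hat\mu_N\mid X^N,\xi]|\to_P 0$.

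The main technical obstacle is the interaction between the average over $B$ trees and the supremum over $x\in[0,1]^d$: a direct union bound over the $(s_N/k_N)^B$ cells in the common refinement of the $B$ tree partitions would impose restrictive conditions on how $B$ may grow with $N$. The route above sidesteps this by bounding the supremum of an average by the average of suprema before invoking the single-tree result, so the conclusion holds with no constraint on the number of trees beyond the per-tree regularity assumptions.
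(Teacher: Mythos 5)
Your proof is correct, and its core reduction is the same as the paper's: bound $\sup_x|\hat{\mu}_N(x)-\mu(x)|$ by the average over the $B$ trees of the per-tree sup-deviation, then invoke single-tree consistency. The paper's proof is exactly this and nothing more---one application of Jensen's inequality, then Lemma \ref{lemma:treeconvergence} applied verbatim to each tree (with $s_N$ playing the role of $N$, which your hypotheses $s_N/k_N\to\infty$ and $\log s_N/k_N\to 0$ license), concluding with the observation that $B$ does not depend on $N$, so a fixed finite sum of terms each tending to $0$ in probability tends to $0$. Where you genuinely go beyond the paper is in removing the fixed-$B$ requirement: since the trees have identical marginal distributions, linearity of expectation reduces everything to $\E\left[\sup_x \mathrm{diam}(R_1(x))\right]$ and $\E\left[\sup_x|T_1(x)-\E[T_1(x)\mid X^N,\xi]|\right]$, and you convert per-tree convergence in probability into convergence of these expectations (bounded convergence for the diameter, tail integration for the noise term), after which Markov's inequality gives the conclusion uniformly in $B$. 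That is a real strengthening---it covers $B=B_N\to\infty$---though it is not needed for Theorem \ref{thm:rf}, where the forest is defined with a fixed number of trees. One caveat on your tail integration: the inequality actually displayed in the proof of Lemma \ref{lemma:treeconvergence} has exponent proportional to $k\min\bigl(\epsilon,\min(1,2c)/\lambda\bigr)^2$, which is \emph{constant} in $\epsilon$ beyond a fixed threshold, so integrating that bound over $t\in(0,\infty)$ diverges in the far tail; the form you quote, $\exp(-k_N\min(t^2,t)/c)$, is the genuine subexponential Bernstein bound, which does hold under the uniform noise condition (alternatively, the far tail can be handled separately via $\max_i|Y_i-\mu(X_i)|$ and the exponential moment), but it does not follow verbatim from the paper's displayed inequality. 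With that substitution made explicit, your argument is sound, and arguably more informative than the paper's three-line proof, at the cost of machinery the theorem being proved does not require.
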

\begin{proof}
We define the prediction of the $b$th tree in the ensemble to equal $\hat{\mu}_N^b(x)$, so we have, by Jensen's inequality,
\begin{align*}
\sup_x|\hat{\mu}_N(x) - \mu(x)| &\le \frac{1}{B}\sum_{b=1}^B \sup_x|\hat{\mu}_N^b(x) -\mu(x)|.
\end{align*}
By lemma \ref{lemma:treeconvergence}, we immediately have that each term on the right hand side goes to 0 in probability. Because $B$ does not depend on $N$, this completes the result.
\end{proof}

\section{Finite Sample Guarantees}\label{sec:finitesample}
In this section, we establish finite sample, probabilistic guarantees for the difference between the cost of a solution to (\ref{eq:empirical}) and the true optimal cost. We focus on $k$-nearest neighbor weight functions. To the best of our knowledge, this is the first finite sample bound for either the single-stage or multistage setting with auxiliary data.

To facilitate the presentation of our result, we begin by discussing convergence rate results for the single stage setting. Without auxiliary data, the problem we want to solve is given by
\begin{equation*}
\min_{z\in\mathcal{Z}} \E[c(z;Y)].
\end{equation*}
If $\hat{z}_N$ represents the SAA approach applied to this problem, then, under appropriate conditions, the regret, $\E[c(\hat{z}_N,Y)] - \min\limits_{z\in\mathcal{Z}}\E[c(z;Y)]$, is $\widetilde{O}_p\left(\dfrac{1}{\sqrt{N}}\right)$, where the $\widetilde{O}_p$ notation suppresses logarithmic dependencies (see, for example, \citep{shapiro2014lectures}). This implies that for any confidence level, $\alpha$, we know that the regret is bounded by a term of order $1/\sqrt{N}$ with probability at least $1-\alpha$. We contrast this with the setting in which we have auxiliary data,
\begin{equation*}
\min_{z\in\mathcal{Z}} \E[c(z;Y)|X=x].
\end{equation*}
If $\hat{z}_N(x)$ represents a solution to this problem using the approach of \citet{bertsimas2014} with the $k$-nearest neighbor weight functions, then the regret, $\E[c(\hat{z}_N(x),Y)|X=x] - \min\limits_{x\in\mathcal{X}} \E[c(z;Y)|X=x]$ is $\widetilde{O}_p\left(\dfrac{1}{N^{1/2d}}\right)$ for $d\ge 2$, where $d$ is the dimension of the auxiliary covariate space. The problem with auxiliary data is clearly harder. The baseline with respect to which we compute the regret is smaller because it takes into account the value of the auxiliary covariates. Furthermore, many of the $x^i$s in the training data will be very different from the $x$ we are concerned with. In fact, with the $k$-nearest neighbor weight functions, we effectively throw out all but the $k$ most relevant training examples. Because of this, we pay a penalty that depends on the dimension of the auxiliary covariate space.

To formalize the above discussion for the multistage setting, we add two additional assumptions.
\begin{assumption}[Distribution of auxiliary covariates]\label{as2:auxcovariates}
For each $t$, $X_t$ has its support contained in $[0,1]^{d_t}$ and $\forall x_t \in \text{support}(X_t)$ and $\forall \epsilon > 0$, $P(X \in B_\epsilon(x_t)) > g \epsilon^d$ with $g > 0$, where $B_\epsilon(x) = \{x' : ||x - x'|| \le \epsilon\}$.
\end{assumption}
This assumption is satisfied, for example, if $X_t$ is uniformly distributed or has finite support and, thus, is more general than Assumption \ref{as:rfdist}.

\begin{assumption}[Subgaussian noise terms]\label{as:subgaussian}
The noise terms are uniformly subgaussian, i.e., defining $N_t(s_t;x_{t-1}) = Q_t(s_t;Y_t,X_t) - \E[Q_t(s_t;Y_t,X_t)|X_{t-1}=x_{t-1}]$, there exists $\sigma^2 > 0$ such that
\begin{equation*}
\max_{t=1,\ldots,T} \sup_{x_{t-1}\in\mathcal{X}_{t-1}}\sup_{s_t\in S_t} \E[e^{\lambda N_t(s_t;x_{t-1})}|X_{t-1}=x_{t-1}] \le \exp(\lambda\sigma^2/2),
\end{equation*}
for all $\lambda > 0$.
\end{assumption}
This assumption implies condition 4 of Assumption \ref{as:uniform}. With these additional assumptions, we have the following theorem.

\begin{theorem}\label{thm:knnfinitesample}
Suppose Assumptions \ref{as:regularity}-\ref{as:uniform} and Assumptions \ref{as2:continuity}-\ref{as:subgaussian} hold, the training data, $(X_1,Y_1),\ldots,(X_N,Y_N)$, is i.i.d., and $w^t_{N,i}(x_t)$ are the $k_N$ nearest neighbor weight functions with $k_N = \min\{\lceil CN^\delta\rceil,N-1\}$ for $C > 0$ and $\delta \in(0,1)$. We define
\begin{equation*}
R_N(x_0) = g_0(\hat{z}_0^N) + \E[g_0(\hat{z}_0^N) + Q_1(f_0(\hat{z}_0^N);Y_1,X_1)|X_0=x_0] - v^*(x_0),
\end{equation*}
where $\hat{z}_0^N$ is an optimal solution to (\ref{eq:empirical}). For any $\alpha\in(0,1)$, with probability at least $1-\alpha$,
\begin{align*}
&R_N(x_0) \le \frac{TC_1}{N^{\delta/2}}\left(\sqrt{\log\frac{1}{\alpha}} + \sqrt{2d\log N} + C_2\right) \\
&\;\;\;+TC_3\left(\left(\frac{1}{N}\log\frac{1}{\alpha}\right)^{1/2d} +C_4N^{(\delta-1)/d} + \frac{C_5}{N^{1/2d}}\right),
\end{align*}
for $N \ge 25d$, for $x_0$ almost everywhere. (Here, $d= \max_t d_t$.) $C_1$, $C_2$, $C_3$, $C_4$, and $C_5$ are constants that may depend only logarithmically on $T$ and $\log\frac{1}{\alpha}$ and are defined in (\ref{eq:finiteconstants}) in the proof.
\end{theorem}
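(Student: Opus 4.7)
The overall approach is a two-level decomposition followed by a bias-variance split at each stage. First, applying Lemma~\ref{lemma:uniformization} to the outer optimization gives
$$R_N(x_0) \;\le\; 2\,\sup_{z_0\in W_0}\bigl|\hat{C}_N(z_0\mid x_0)-C(z_0\mid x_0)\bigr|,$$
where $\hat{C}_N$ and $C$ denote the objectives of (\ref{eq:empirical}) and (\ref{eq:overall}). Iterating the telescoping identity from the proof of Theorem~\ref{thm:knn} then bounds the right-hand side by $\sum_{t=0}^{T-1}\sup_{x_t\in\mathcal{X}_t}\sup_{z_t\in W_t}E_t(x_t,z_t)$, where $E_t$ is the absolute difference between the weighted sample estimate at stage $t+1$ and the true conditional expectation. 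At each stage, I would add and subtract $\sum_i w^{t+1}_{N,i}(x_t)\,\E[Q_{t+1}(f_t(z_t);Y_{t+1},X_{t+1})\mid X^i_t]$ to split $E_t$ into a \emph{noise} piece (weighted sum of mean-zero residuals $N_{t+1}(\cdot;X^i_t)$) and a \emph{bias} piece (error of the $k$NN regression estimator of the Lipschitz function $x\mapsto \E[Q_{t+1}\mid X_t=x]$).

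For the noise piece, Assumption~\ref{as:subgaussian} together with the fact that every active $k$NN weight equals $1/k$ implies that, conditional on $X^1_t,\ldots,X^N_t$, the weighted sum is subgaussian with variance proxy $\sigma^2/k$, giving a tail of the form $2\exp(-k\epsilon^2/(2\sigma^2))$ for each fixed $(x_t,z_t)$. To upgrade this to a uniform bound over $z_t\in W_t$, Assumption~\ref{as2:continuity} supplies $L_{t+1}$-Lipschitz continuity in $z$, so Lemma~\ref{lemma:netuniformization} applies and costs a multiplicative factor of $(D_tL_{t+1}/\epsilon)^{p_t}$ in the failure probability. To upgrade to a uniform bound in $x_t$, I would exploit that the $k$NN partition of $[0,1]^d$ has only polynomially many distinct weight vectors, equivalently union-bounding over an $N^{-1}$-grid of $\mathcal{X}_t$ of size at most $(2N)^d$. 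Inverting the subgaussian tail with $k=\lceil CN^\delta\rceil$ yields a noise contribution of order $N^{-\delta/2}\bigl(\sqrt{\log(1/\alpha)}+\sqrt{2d\log N}+\mathrm{const}\bigr)$, matching the first line of the theorem.

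For the bias piece, $M_{t+1}$-Lipschitz continuity of $x\mapsto\E[Q_{t+1}\mid X_t=x]$ from Assumption~\ref{as2:continuity} gives
$$\biggl|\sum_i w^{t+1}_{N,i}(x_t)\,\E[Q_{t+1}\mid X^i_t]-\E[Q_{t+1}\mid X_t=x_t]\biggr|\;\le\;M_{t+1}\!\!\max_{i\in N_k(x_t)}\!\|X^i_t-x_t\|.$$
Assumption~\ref{as2:auxcovariates} lower-bounds the ball mass by $g\epsilon^d$, so the number of training covariates in $B_\epsilon(x_t)$ stochastically dominates a $\mathrm{Binom}(N,g\epsilon^d)$ variable; a Chernoff bound then controls the $k$th nearest-neighbor radius by $(k/(gN))^{1/d}$ plus a $(\log(1/\alpha)/N)^{1/d}$ concentration correction with high probability. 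Taking a union bound over an $N^{-1/d}$-net of $[0,1]^d$ produces the $N^{(\delta-1)/d}$ and $\bigl(N^{-1}\log(1/\alpha)\bigr)^{1/2d}$ terms in the theorem, scaled by $M_{t+1}$.

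Combining the noise and bias tails at each of the $T$ stages via a final union bound yields the stated high-probability guarantee, with $C_1,\ldots,C_5$ absorbing $\sigma$, $L_t$, $M_t$, $D_t$, $p_t$, $g$, and the coefficient $C$ in $k=CN^\delta$. The main obstacle is the joint uniform control in $x_t$ of the noise piece: the kNN weights are piecewise constant yet highly non-Lipschitz in $x_t$, so classical chaining does not apply. The resolution is to exploit the combinatorial structure of the $k$NN partition -- either bounding the number of distinct weight vectors by a polynomial in $N$, or discretising $\mathcal{X}_t$ finely enough that the residual discretisation error can be absorbed into the bias piece via $M_{t+1}$-Lipschitz continuity -- so that a single union bound produces the $\sqrt{2d\log N}$ factor without inflating the other terms.
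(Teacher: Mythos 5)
Your proposal is correct and follows essentially the same route as the paper: Lemma~\ref{lemma:uniformization} plus the telescoping decomposition from Theorem~\ref{thm:knn}, the covering-net argument of Lemma~\ref{lemma:netuniformization} for uniformity in $z_t$, and a bias-variance split of the $k$NN error with the $k$th-neighbor-radius/binomial argument for the bias and a subgaussian tail for the noise. Your key resolution for uniformity in $x_t$ --- bounding the number of distinct $k$NN weight vectors polynomially in $N$ --- is exactly what the paper does inside Lemma~\ref{lemma:knnrate} via Theorem~12.2 of \citet{biau2015} (at most $(25/d)^d N^{2d}$ distinct neighbor orderings), which yields the $\sqrt{2d\log N}$ factor you identified.
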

$R_N(x_0)$ represents the regret of the solution to (\ref{eq:empirical}), i.e., the difference between the cost of the solution and the true optimal cost, $v^*(x_0)$. We can optimize the bound by choosing $\delta=\dfrac{2}{2+d}$ and restate the result more prosaically:
\begin{equation*}
R_N(x_0) = \begin{cases}
\widetilde{O}_p\left(\dfrac{T}{N^{1/3}}\right), & d=1,\\\\
\widetilde{O}_p\left(\dfrac{T}{N^{1/2d}}\right), & d \ge 2.
\end{cases}
\end{equation*}
As before, this result is best understood in comparison with the multistage SAA problem without any auxiliary covariates. For this problem, regret is $\widetilde{O}_p\left(\dfrac{T}{\sqrt{N}}\right)$ \citep[ch. 5]{shapiro2014lectures}. We pay a penalty that depends on $d$, the maximum dimension of the auxiliary covariate spaces, $\mathcal{X}_t$.

To prove this result, we rely on the following lemma, which provides a finite sample guarantee on the error of the $k$NN regression estimator.
\begin{lemma}\label{lemma:knnrate}
Suppose $X$ has support $\mathcal{X}\subset [0,1]^d$, $P(X\in B_\epsilon(x)) > g\epsilon^d$ for all $x\in\mathcal{X}$, and $Y - \E[Y|X=x]$ is conditionally subgaussian given $X=x$ with variance proxy $\sigma^2$, uniformly for all $x\in\mathcal{X}$. Assume the training data, $(X_1,Y_1),\ldots,(X_N,Y_N)$ is i.i.d. and that $\E[Y|X=x]$ is $L$-Lipschitz. If $\hat{\mu}_N(x)$ denotes the $k$NN regression estimator at $x$ and $\mu(x) = \E[Y|X=x]$, then
\begin{align*}
&P\left(\sup\limits_{x\in\mathcal{X}}|\hat{\mu}_N(x) - \mu(x)| > \epsilon\right) \\
&\le \left(\dfrac{4\sqrt{d}\rho L}{\epsilon}\right)^d\exp\left(-\frac{2}{N}\left(Ng\left(\frac{\epsilon}{4L}\right)^d+1-k\right)^2\right) \\
&\;\;\;\;+ 2\left(\dfrac{25}{d}\right)^dN^{2d}\exp\left(-\frac{k\epsilon^2}{8\sigma^2}\right),
\end{align*}
for any $\epsilon \ge 2L\left(\dfrac{k-1}{Ng}\right)^{1/d}$ and $N \ge 2d$.
\end{lemma}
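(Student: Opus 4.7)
The plan is to decompose the error in the standard way as bias plus noise, treat each uniformly over $x$, and then combine via a union bound. For any $x\in\mathcal{X}$, let $S(x)\subset\{1,\ldots,N\}$ denote the indices of the $k$ nearest neighbors, and $d_{(k)}(x) = \max_{i\in S(x)} \|X_i - x\|$. Decompose
\begin{equation*}
\hat{\mu}_N(x) - \mu(x) \;=\; \underbrace{\tfrac{1}{k}\sum_{i\in S(x)}(Y_i - \mu(X_i))}_{=:\,V_N(x)} \;+\; \underbrace{\tfrac{1}{k}\sum_{i\in S(x)}(\mu(X_i) - \mu(x))}_{=:\,B_N(x)},
\end{equation*}
and observe that by $L$-Lipschitz continuity $|B_N(x)| \le L\, d_{(k)}(x)$. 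The strategy is to show $\sup_x |B_N(x)|\le \epsilon/2$ and $\sup_x |V_N(x)|\le \epsilon/2$ each with high probability and combine.

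For the bias term, I would set $\eta = \epsilon/(4L)$ and build a minimal $\eta$-net $\{x_1,\ldots,x_K\}$ of $\mathcal{X}$ in the Euclidean metric, of size $K \le (2\sqrt{d}\rho/\eta)^d = (8\sqrt{d}\rho L/\epsilon)^d$ (here $\rho$ absorbs an $\ell_\infty$-to-$\ell_2$ constant, matching lemma \ref{lemma:netuniformization}). The key geometric observation is that if the ball $B_{\eta}(x_j)$ contains at least $k$ training points for every net point $x_j$, then for any $x\in\mathcal{X}$ the ball $B_{2\eta}(x)$ also contains at least $k$ training points (since $x$ is within $\eta$ of some $x_j$ and so $B_{2\eta}(x) \supset B_\eta(x_j)$), which forces $d_{(k)}(x)\le 2\eta = \epsilon/(2L)$, hence $|B_N(x)|\le \epsilon/2$. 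By the covering assumption $P(X\in B_\eta(x_j))\ge g\eta^d$, the number of training points in $B_\eta(x_j)$ is binomial with mean at least $Ng\eta^d$; a one-sided Hoeffding (Chernoff) bound gives
\begin{equation*}
P\bigl(\text{\# of }X_i\in B_\eta(x_j)< k\bigr) \;\le\; \exp\!\left(-\tfrac{2}{N}\bigl(Ng\eta^d + 1 - k\bigr)^2\right),
\end{equation*}
provided $Ng\eta^d \ge k-1$. Union bounding over the $K$ net points yields precisely the first term in the claimed bound.

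For the noise term, conditional on $X_1,\ldots,X_N$, each $Y_i - \mu(X_i)$ is subgaussian with variance proxy $\sigma^2$ (and independent across $i$), so for any fixed subset $S\subset\{1,\ldots,N\}$ with $|S|=k$,
\begin{equation*}
P\!\left(\Bigl|\tfrac{1}{k}\sum_{i\in S}(Y_i-\mu(X_i))\Bigr| > \tfrac{\epsilon}{2}\,\Big|\,X_1,\ldots,X_N\right) \;\le\; 2\exp\!\left(-\tfrac{k\epsilon^2}{8\sigma^2}\right).
\end{equation*}
The crux is that $V_N$ is piecewise constant in $x$: $V_N(x)$ changes only when $x$ crosses a perpendicular bisector of some pair $(X_i,X_j)$, so the number of distinct values of $S(x)$ as $x$ ranges over $\R^d$ is bounded by the complexity of the arrangement of the $\binom{N}{2}$ bisecting hyperplanes. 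A standard arrangement-complexity argument bounds this by $\bigl(\tfrac{25}{d}\bigr)^d N^{2d}$ (the explicit constant coming from the bound on the number of faces of an arrangement of $m$ hyperplanes in $\R^d$, with $m=\binom{N}{2}\le N^2/2$). Union bounding the subgaussian tail over this polynomial collection, then taking expectation over $X_1,\ldots,X_N$, gives the second term in the bound.

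The main obstacle will be making the arrangement-complexity bound quantitative enough to extract the constant $(25/d)^d$ with the exponent $N^{2d}$; I would cite the classical bound on the number of cells in an arrangement of $m$ hyperplanes in $\R^d$ (namely $\sum_{j=0}^{d}\binom{m}{j}\le (em/d)^d$ for $m\ge d$) applied with $m=\binom{N}{2}$, which under $N\ge 2d$ gives $\le (25/d)^d N^{2d}/2$ after a short calculation. Beyond that, the remaining details are routine: combining the two events via a union bound and checking the side condition on $\epsilon$ (the hypothesis $\epsilon \ge 2L((k-1)/(Ng))^{1/d}$ is exactly what ensures $Ng\eta^d\ge k-1$ so that the one-sided Hoeffding applies).
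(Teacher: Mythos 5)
Your proposal follows essentially the same route as the paper's proof: the identical bias--noise decomposition, the same $\epsilon/(4L)$-net plus binomial/Hoeffding control of the $k$th nearest-neighbor distance (via the containment $B_{\eta}(x_j)\subset B_{2\eta}(x)$), and the same union bound over at most $(25/d)^d N^{2d}$ possible $k$NN configurations followed by a subgaussian Hoeffding bound, the only difference being that you rederive the configuration count from hyperplane-arrangement complexity where the paper simply cites Theorem 12.2 of Biau and Devroye (2015), which is proved exactly that way. Two cosmetic constant remarks: your net cardinality $(8\sqrt{d}\rho L/\epsilon)^d$ versus the stated $(4\sqrt{d}\rho L/\epsilon)^d$ is just a covering-number convention absorbable into $\rho$, and your claim that the hypothesis $\epsilon \ge 2L((k-1)/(Ng))^{1/d}$ ensures $Ng(\epsilon/(4L))^d \ge k-1$ is off by a factor of $2$ (it requires $\epsilon \ge 4L((k-1)/(Ng))^{1/d}$), but this same factor-of-two tension is present in the paper's own statement after it splits $\epsilon$ into halves.
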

\begin{proof}[Lemma \ref{lemma:knnrate}]
We decompose $\sup\limits_{x\in\mathcal{X}}|\hat{\mu}_N(x) - \mu(x)|$ into a sum of two terms: $\sup\limits_{x\in\mathcal{X}}|\hat{\mu}_N(x) - \E[\hat{\mu}_N(x)|X_1,\ldots,X_N]|$ and $\sup\limits_{x\in\mathcal{X}}|\mu(x) - \E[\hat{\mu}_N(x)|X_1,\ldots,X_N]|$. For the latter term, we utilize the Lipschitz assumption to show
\begin{align*}
\sup\limits_{x\in\mathcal{X}}|\mu(x) - \E[\hat{\mu}_N(x)|X_1,\ldots,X_N]| &\le L \sup_{x\in\mathcal{X}} \frac{1}{k}\sum_{i=1}^k ||X_{(i)}(x) - x||,
\end{align*}
where $X_{(i)}(x)$ denotes the $i$th nearest neighbor of $x$ out of $X_1,\ldots,X_N$, as measured by Euclidean distance. Next, we note that $\frac{1}{k}\sum_{i=1}^k ||X_{(i)}(x) - x|| \le ||X_{(k)}(x) - x|| $. This gives:
\begin{align*}
&P\left(\sup\limits_{x\in\mathcal{X}}|\mu(x) - \E[\hat{\mu}_N(x)|X_1,\ldots,X_N]| > L\epsilon \right) \\
&\le P\left(\sup_{x\in\mathcal{X}}||X_{(k)}(x)-x|| > \epsilon\right) \\
&= P\left(\inf_{x\in\mathcal{X}} |\{i : X_i \in B_{\epsilon}(x)| \le k-1\right).
\end{align*}
Next, we construct an $\epsilon/2$-net for $\mathcal{X}$, $\hat{x}_1,\ldots,\hat{x}_m$, with $m \le \left(\dfrac{2\sqrt{d}\rho}{\epsilon}\right)^d$. For any $x\in\mathcal{X}$, there exists a $j$ such that $B_{\epsilon/2}(\hat{x}_j) \subset B_\epsilon(x)$. Therefore, we can upper bound the above expression by
\begin{align*}
&P\left(\min_{j=1,\ldots,m}|\{i : X_i \in B_{\epsilon/2}(\hat{x}_j)| \le k-1\right) \\
&\le \left(\dfrac{2\sqrt{d}\rho}{\epsilon}\right)^d P(B \le k-1),
\end{align*}
where $B \sim \text{Binom}(N,g(\epsilon/2)^d)$. Applying Hoeffding's bound, we have:
\begin{align*}
&P\left(\sup\limits_{x\in\mathcal{X}}|\mu(x) - \E[\hat{\mu}_N(x)|X_1,\ldots,X_N]| > \epsilon \right) \\
&\;\;\;\le \left(\dfrac{2\sqrt{d}\rho L}{\epsilon}\right)^d\exp\left(-\frac{2}{N}\left(Ng(\epsilon/2L)^d+1-k\right)^2\right),
\end{align*}
for any $\epsilon \ge 2L\left(\dfrac{k-1}{Ng}\right)^{1/d}$.

For the second part, we use Theorem 12.2 of \citet{biau2015}, which says the number of possible distinct orderings of neighbors of $X_1,\ldots,X_N \in \R^d$ is less than or equal to $\left(\dfrac{25}{d}\right)^dN^{2d}$ for all $N \ge 2d$. Therefore,
\begin{align*}
&P\left(\sup_{x\in\mathcal{X}}|\hat{\mu}_N(x) - \E[\hat{\mu}_N(x)|X_1,\ldots,X_N]| > \epsilon\bigg| X_1,\ldots,X_N\right) \\
&\le \left(\dfrac{25}{d}\right)^dN^{2d}\sup_{x\in\mathcal{X}}P\left(|\hat{\mu}_N(x) - \E[\hat{\mu}_N(x)|X_1,\ldots,X_N]| > \epsilon\bigg| X_1,\ldots,X_N\right) \\
&= \left(\dfrac{25}{d}\right)^dN^{2d}\sup_{x\in\mathcal{X}}P\left(\left|\frac{1}{k}\sum_{i=1}^k (Y_{(i)}(x)-\mu(X_{(i)}(x)))\right| > \epsilon\bigg| X_1,\ldots,X_N\right),
\end{align*}
where $Y_{(i)}$ denotes the observation $Y$ corresponding to $X_{(i)}(x)$. It is easy to verify, see Proposition 8.1 of \citet{biau2015} for example, that $Y_{(1)}(x)-\mu(X_{(1)}(x)),\ldots,Y_{(k)}(x)-\mu(X_{(k)}(x))$ are conditionally independent given $X_1,\ldots,X_N$. Therefore, we apply Hoeffding's bound for sums of subgaussian random variables to get, for any $\epsilon > 0$,
\begin{align*}
&P\left(\sup_{x\in\mathcal{X}}|\hat{\mu}_N(x) - \E[\hat{\mu}_N(x)|X_1,\ldots,X_N]| > \epsilon\bigg| X_1,\ldots,X_N\right) \\
&\le 2\left(\dfrac{25}{d}\right)^dN^{2d}\exp\left(-\frac{k\epsilon^2}{2\sigma^2}\right).
\end{align*}
Taking the expectation of both sides and combining with the previous part completes the result.
\end{proof}
Now, we can prove the main result.

\begin{proof}[Theorem \ref{thm:knnfinitesample}]
By lemma \ref{lemma:uniformization}, the regret is bounded by
\begin{align*}
2\sup_{z_0\in\mathcal{Z}} \left|\sum_i w_{N,i}^1(x_0) \hat{Q}_1(f_0(z_0);y_1^i,x_1^i) - \E[Q_1(f_0(z_0);Y_1,X_1)| X_0=x_0]\right|.
\end{align*}
Following the same steps as in the proof of Theorem \ref{thm:knn}, we have:
\begin{align*}
&2\sup_{z_0\in\mathcal{Z}_0}\left| \sum_i w_{N,i}^1(x_0) \hat{Q}_1(f_0(z_0);y_1^i,x_1^i) -  \E[Q_1(f_0(z_0);Y_1,X_1)|X_0=x_0] \right| \\
&\le 2 \sup_{z_0\in\mathcal{Z}}\left| \sum_i w_{N,i}^1(x_0) Q_1(f_0(z_0);y_1^i,x_1^i) -  \E[Q_1(f_0(z_0);Y_1,X_1)|X_0=x_0] \right| \\
&\;\;\;+ 2\sum_{t=1}^{T-1} \sup_{x_t\in\mathcal{X}_t}\sup_{z_t\in W_t}\bigg| \sum_i w_{N,i}^{t+1}(x_t) Q_{t+1}(f_t(z_t);y_{t+1}^i,x_{t+1}^i)\\
&\;\;\;\; -  \E[Q_{t+1}(f_t(z_t);Y_{t+1},X_{t+1})| x_t] \bigg|.
\end{align*}
We next apply lemma \ref{lemma:netuniformization}, as in the proof of Theorem \ref{thm:cart}, to see, for each $t$,
\begin{align*}
&P\bigg(2\sup_{z_t\in W_t}\sup_{x_t\in\mathcal{X}_t}\bigg| \sum_i w_{N,i}^{t+1}(x_t) Q_{t+1}(f_t(z);y_{t+1}^i,x_{t+1}^i) \\
&\;\;\;\;-  \E[Q_{t+1}(f_t(z);Y_{t+1},X_{t+1})| x_t] \bigg| > \frac{\epsilon}{T}\bigg) \\
&\le \left(\frac{4T\rho D_tL_t}{\epsilon}\right)^{p_t}P\bigg(\sup_{x_t\in\mathcal{X}_t}\bigg| \sum_i w_{N,i}^{t+1}(x_t) Q_{t+1}(f_t(z);y_{t+1}^i,x_{t+1}^i)\\
&\;\;\;\;\; -  \E[Q_{t+1}(f_t(z);Y_{t+1},X_{t+1})| x_t] \bigg| > \frac{\epsilon}{4T}\bigg).
\end{align*}
Next, we use lemma \ref{lemma:knnrate} to upper bound this expression by
\begin{align*}
&\left(\frac{4T\rho D_tL_t}{\epsilon}\right)^{p_t}\Bigg[\left(\dfrac{16T\sqrt{d_t}\rho M_t}{\epsilon}\right)^{d_t}\exp\left(-\frac{2}{N}\left(Ng\left(\frac{\epsilon}{16TM_t}\right)^{d_t}+1-k_N\right)^2\right) \\
&\;\;\;\; + 2\left(\dfrac{25}{d_t}\right)^{d_t}N^{2d_t}\exp\left(-\frac{k_N\epsilon^2}{64T^2\sigma^2}\right)\Bigg],
\end{align*}
for any $\epsilon \ge 8TM_t\left(\dfrac{k_N-1}{Ng}\right)^{1/d_t}$ and $N\ge 2d_t$. Combining the results for $t=0,\ldots,T-1$ with the union bound, and plugging in for the definitions of $k_N$, $d$, $p$, $L$, $M$, and $D$, we have:
\begin{align*}
&P\left(2\sup_{z_0\in\mathcal{Z}} \left|\sum_i w_{N,i}^1(x_0) \hat{Q}_1(f_0(z_0);y_1^i,x_1^i) - \E[Q_1(f_0(z_0);Y_1,X_1)| X_0=x_0]\right| > \epsilon\right) \\
&\le T\left(\frac{4T\rho DL}{\epsilon}\right)^{p}\Bigg[\left(\dfrac{16T\sqrt{d}\rho M}{\epsilon}\right)^{d}\exp\left(-\frac{2}{N}\left(Ng\left(\frac{\epsilon}{16TM}\right)^{d}+1-CN^\delta\right)^2\right) \\
&\;\;\;\;+ 2\left(\dfrac{25}{d}\right)^{d}N^{2d}\exp\left(-\frac{CN^\delta\epsilon^2}{64T^2\sigma^2}\right)\Bigg],
\end{align*}
for all $\epsilon \ge 8TM\left(\dfrac{CN^\delta-1}{Ng}\right)^{1/d}$ and $N\ge 2d$. From this we deduce that for any $\alpha \in (0,1)$,
\begin{align*}
&P\bigg(2\sup_{z_0\in\mathcal{Z}} \bigg|\sum_i w_{N,i}^1(x_0) \hat{Q}_1(f_0(z_0);y_1^i,x_1^i) - \E[Q_1(f_0(z_0);Y_1,X_1)| X_0=x_0]\bigg| > \epsilon\bigg)\\
&\;\;\;\; \le \alpha
\end{align*}
is implied by the following system of inequalities:
\begin{align*}
&\frac{CN^\delta\epsilon^2}{64T^2\sigma^2} \ge \log\frac{1}{\alpha} + 2d\log N + p\log\frac{1}{\epsilon} + \log\left(4T\left(\frac{25}{d}\right)^d(4T\rho DL)^p\right),\\
&2\left(\sqrt{N}g\left(\frac{\epsilon}{16TM}\right)^d-CN^{\delta-1/2}\right)^2 \ge \log\frac{1}{\alpha} + (p+d)\log\frac{1}{\epsilon} \\
&\;\;\;\;+ \log(2T(4T\rho DL)^p(16T\sqrt{d}\rho M)^d),\\
&\epsilon \ge 8TM\left(\dfrac{CN^{\delta-1}}{g}\right)^{1/d}.
\end{align*}
Following some algebraic manipulations, we see the above system of inequalities is implied by:
\begin{align*}
&\epsilon \ge \frac{TC_1}{N^{\delta/2}}\left(\sqrt{\log\frac{1}{\alpha}} + \sqrt{2d\log N} + C_2\right) \\
&\;\;\;\;+ TC_3\left(\left(\frac{1}{N}\log\frac{1}{\alpha}\right)^{1/2d} +C_4N^{(\delta-1)/d} + \frac{C_5}{N^{1/2d}}\right),
\end{align*}
where
\begin{align}\label{eq:finiteconstants}
&C_1 = \frac{8\sigma}{\sqrt{C}}, \\
&C_2 =  \sqrt{\log\left(4T\left(\frac{25}{d}\right)^d(4T\rho DL)^p\right)} + \sqrt{\frac{p\delta}{2}\log \frac{2}{TC_1\sqrt{\log\frac{1}{\alpha}}}},\nonumber\\
&C_3 =  \frac{16M}{(\sqrt{2} g)^{1/d}},\nonumber\\
&C_4 = \left(\sqrt{2}C\right)^{1/d} + \frac{8M}{C_3}\left(\frac{C}{g}\right)^{1/d}, \nonumber\\
&C_5 = \left( \log(2T(4T\rho DL)^p(16T\sqrt{d}\rho M)^d)\right)^{1/2d} + \left((p+d)\delta\log \frac{2}{TC_1\sqrt{\log\frac{1}{\alpha}}}\right)^{1/2d}.\nonumber
\end{align}
\end{proof}

\section{Computational Examples}\label{sec:computation}
In this section, we illustrate the practical applicability of our approach with two examples using synthetic data. These examples also serve to demonstrate the value of accounting for auxiliary data.

\subsection{Multistage Inventory Control}
First, we consider a multistage inventory control problem \citep{bertsimas2015}, in which we manage the inventory level of a single product subject to $T$ periods of uncertain demand. At each time step, we observe auxiliary data, which may include data about the product as well as data on time-varying external factors that may be used to predict demand, such as the season of the year, the price of the S\&P 500 index, or the demand for a similar product during the previous time period. We also have historical data of the demand for products we've sold in the past as well as the corresponding auxiliary data for each of these products.

At the beginning of time period $t$, we observe the demand $y_t$ and the new auxiliary covariates $x_t$. Demand can be served by ordering $z^2_t$ units at price $c_2$ for immediate delivery or by ordering $z^1_t$ units at price $c_1 < c_2$ for delivery at the beginning of the next time period. If there is a shortfall in the inventory, orders can be backlogged, incurring a cost of $-c_b$ per unit. If there is excess inventory at the end of a time period, we pay a holding cost of $c_h$ per unit. In addition, there is an ordering budget, so the cumulative advance orders ($\sum_{s \le t} z^1_s$) must not exceed
$\bar{z}_{\text{tot},t}$ at any time $t$. We assume all ordering and inventory quantities are continuous, and we represent the amount of inventory at the end of time period $t$ by $I_t$ (with $I_{-1} = 0$ and $z^1_{-1} = 0$). If demand is known, we have the following deterministic formulation of the problem.

\begin{equation*}
\begin{aligned}
& \min & & \sum_{t=0}^T c_1 z_t^1 + c_2 z_t^2 + \max\{c_b I_t, c_h I_t\}  \\
& \text{s.t.} & & I_{t+1} = I_t + z_t^1 + z_{t+1}^2 - y_{t+1} \;\;\;\;\; \forall t = -1,\ldots,T-1 \\
&&& \sum_{s=0}^t z^1_s \le \bar{z}_{\text{tot},t} \;\;\;\;\; \forall t = 0, \ldots, T \\
&&& z_t^1,z_t^2 \ge 0 \;\;\;\;\; \forall t = 0,\ldots,T.
\end{aligned}
\end{equation*}

We used the parameters $c_1 = 5$, $c_2 = 10$, $c_h = 5$, and $c_b = -10$. We assumed the initial inventory to be 0 and set $\bar{z}_{\text{tot},t} = 50(t+1)$. To generate training data, we sampled $x^i_t$ independently from a 3 dimensional AR(1) process such that $x^i_t = 0.7 x_{t-1}^i + w^i_t$, where $w^i_t$ is a sample of a $\mathcal{N}(0,I_3)$ random variable. We used a factor model for the demand.
\begin{equation*}
y^i_t = \max\left\{0, 50 + 12 a^T_t(x^i_{t-1} + 0.25\phi_t) + 5b^T_tx^i_{t-1} \theta_t \right\} \;\;\;\;\forall t = 1,\ldots,T
\end{equation*}
where $\{\phi_t\}$ are drawn independently from a 3 dimensional standard Gaussian and $\{\theta_t\}$ are drawn independently from a 1 dimensional standard Gaussian. At each time step, the factor loadings, $a_t$ and $b_t$, are permutations of $\begin{pmatrix} 0.8 & 1 & 1\end{pmatrix}^T$ and $\begin{pmatrix} -1 & 1 & 0\end{pmatrix}^T$, respectively (held constant for all samples). The results we present here show the average cost of policies based on out-of-sample testing as a function of the amount of training observations, $N$. All results are averaged over one hundred realizations of training sets.

\begin{figure}
\centering
\includegraphics[width=0.8\linewidth]{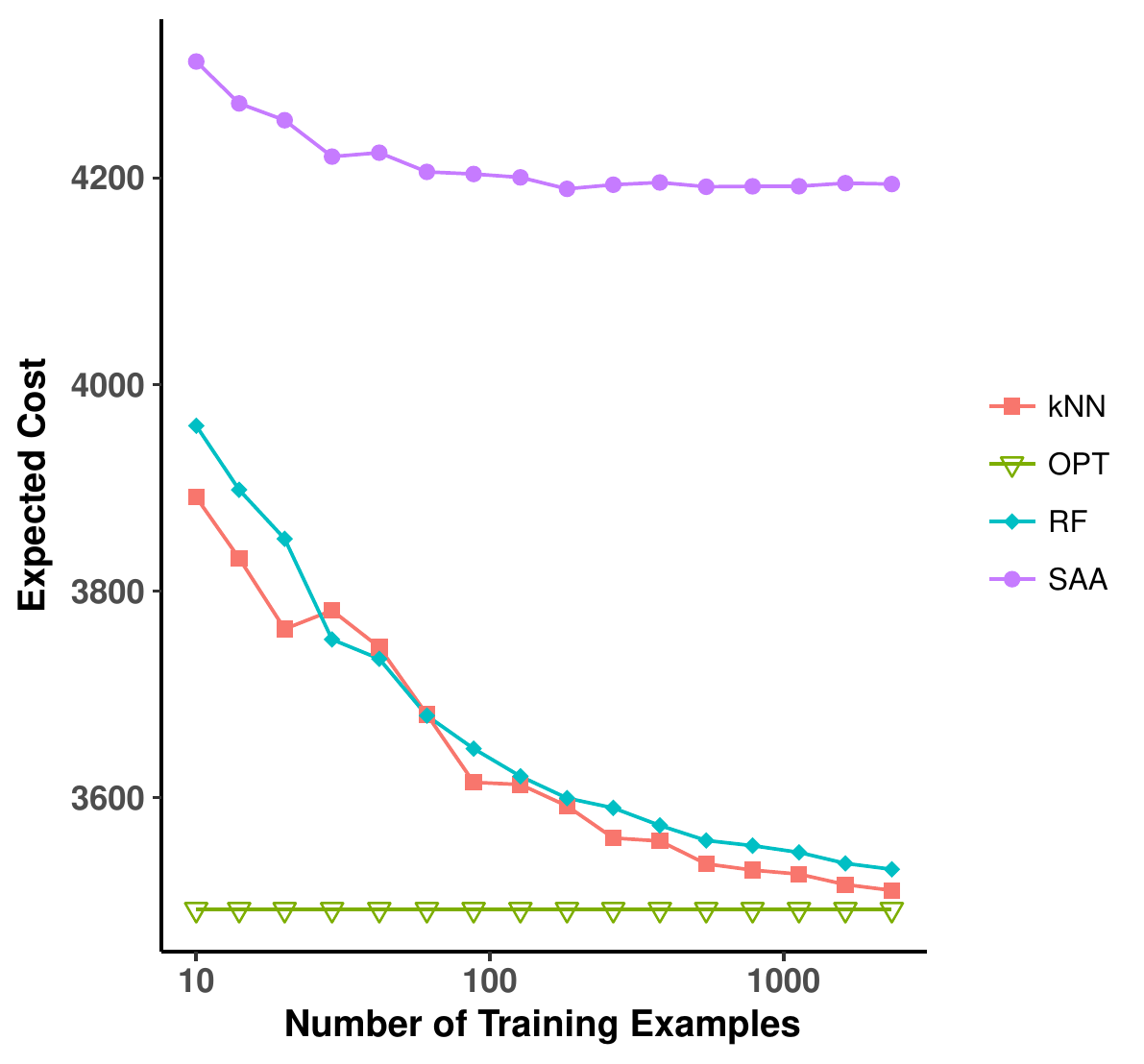}
\caption{Out of sample results with various weight functions for a twelve stage inventory control problem. Vertical axis represents expected cost of policy (smaller is better).}\label{fig:mic}
\end{figure}

Figure \ref{fig:mic} shows the expected cost of policies learned using our method versus the number of training samples. We see that the SAA approach, which ignores the auxiliary data, is suboptimal. Our method, using the $k$-nearest neighbor and random forest weight functions, is asymptotically optimal. We obtain a reduction in cost of over 15\% by accounting for auxiliary data.

\subsection{Multistage Lot Sizing}\label{sec:mls}
For our second computational example, we consider a multistage lot sizing problem \citep{bertsimas2015}. This problem is similar to the multistage inventory control problem, but it includes binary decision variables. The continuous ordering decision for immediate delivery, $z_t^2$, is replaced with $M$ binary ordering decisions, $z_{tj}^2$, $j=1,\ldots,M$. Each of these decisions corresponds to a quantity, $q_j$, which is delivered immediately for cost $c_{2j} > c_1$ per unit. Additionally, there is no longer the option to backorder demand. All demand must be satisfied immediately. These restrictions make the problem more realistic because it is often not feasible to produce an arbitrary amount of a product immediately, and it is difficult to estimate the cost of lost customer goodwill due to backordering. Instead, in order to meet demand, the decision maker must buy from another supplier a fixed quantity of product at a higher price.

If demand is known, we have the following deterministic formulation (where we assume $I_{-1} = 0, z^1_{-1} = 0$, and $y_0=0$).
\begin{equation*}
\begin{aligned}
&\min && \sum_{t=0}^T c_1 z_t^1 + \sum_{j=1}^M c_{2j}q_jz_{tj}^2 + c_hI_t \\
& \text{s.t.} & & I_{t+1} = I_t + z_t^1 + \sum_{j=1}^M c_{2j} q_j z_{tj}^2 + c_h I_t \\
&&& \sum_{s=0}^t z^1_s \le \bar{z}_{\text{tot},t} \;\;\;\;\; \forall t = 0,\ldots,T\\
&&& z_t^1,I_t \ge 0 \;\;\;\;\; \forall t = 0,\ldots,T \\
&&& z_{tj}^2 \in \{0,1\} \;\;\;\;\; \forall t=0,\ldots,T\;\;\forall j=1,\ldots,M
\end{aligned}
\end{equation*}

\begin{figure}
\centering
\includegraphics[width=0.8\linewidth]{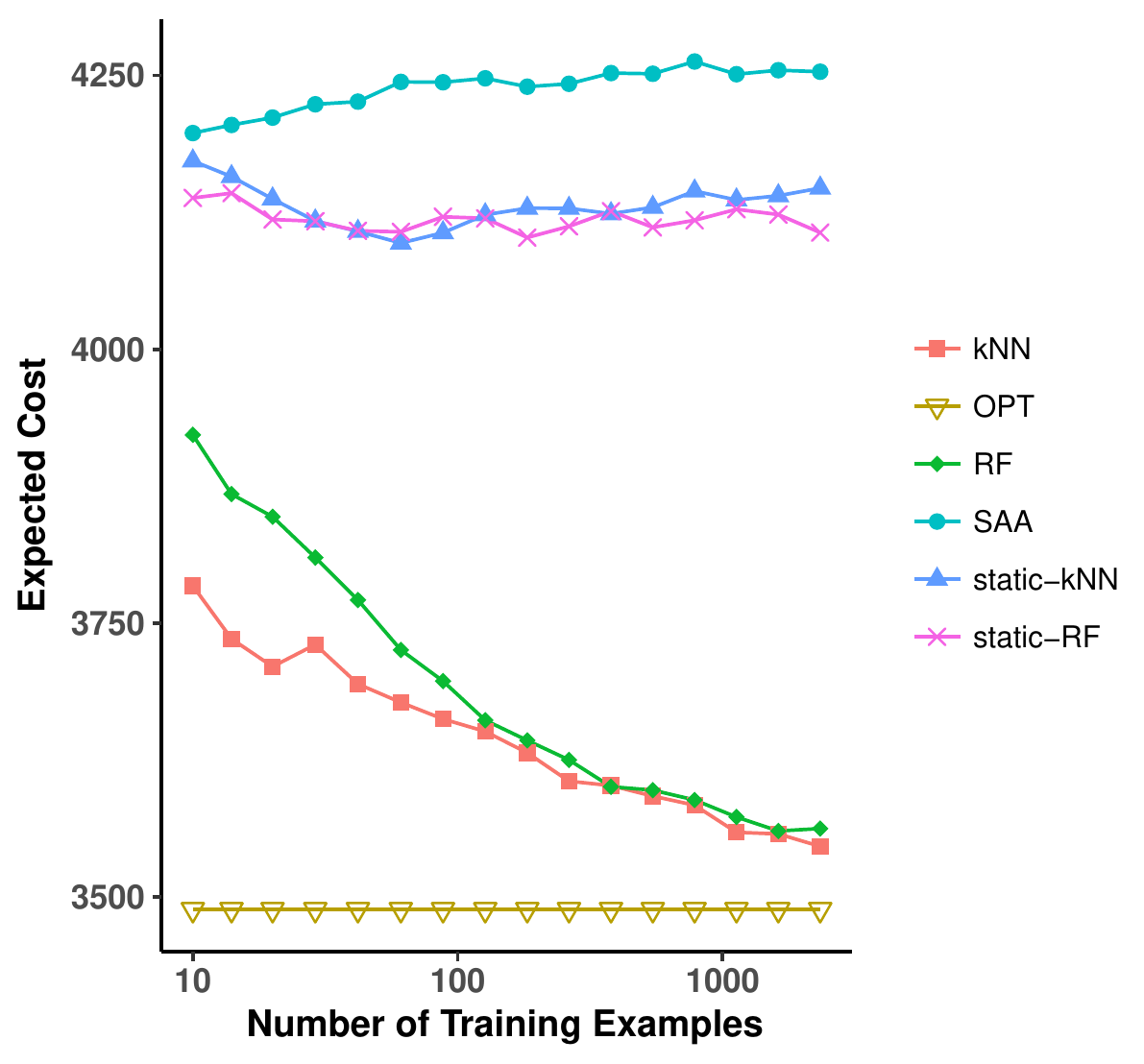}
\caption{Average out-of-sample cost of policies computed using various weight functions for twelve stage lot sizing problem. Horizontal axis shows number of training examples.}\label{fig:mls}
\end{figure}

We used the same parameters and data generating procedure as in the multistage inventory control example. The only differences were that we capped $y_t$ at 200 (to ensure feasibility) and we drew $c_{2j}$ independently from a uniform distribution on $(5,10)$ for each $j$.

To solve the problem, we use an approximate DP algorithm. To develop the algorithm, we recall that basestock policies are optimal for a wide variety of inventory control problems. A basestock policy is one in which there is some ideal amount of inventory, $r_t$, which we desire at the start of time period $t$. If we have less than $r_t$, we place advanced orders (if available) to have this amount. If we have more than $r_t$, we order nothing in advance. We then serve the remaining demand with immediate orders. A basestock policy will not be optimal for the lot sizing problem because of the nonconvexity of the value function, but it does provide a reasonable approximation. To account for auxiliary data, we have $N$ different basestock amounts for each time period. Therefore, $r_t^i$ denotes the target basestock at time $t$ when the observed axuiliary data is $x_t^i$. Parametrizing the policy space with $NT$ parameters greatly reduces the amount of computation required to solve the problem and allows us to solve much larger problem instances than we could otherwise.

Figure \ref{fig:mls} shows the results of our method for the SAA, $k$-nearest neighbor, and random forest weight functions on a twelve stage lot sizing problem. The SAA method is again clearly suboptimal. We see that the static $k$NN and static RF methods, which only use the auxiliary covariates at time 0, offer a significant improvement over the SAA method. However, the $k$NN and RF methods which take into account the auxiliary data that arrives over time outperforms all of the above methods, again illustrating the value of auxiliary data. With very little additional computational cost, we are able to obtain an improvement in cost of nearly 15\%.

\section{Conclusion}
In this paper, we introduced a data-driven framework for solving multistage optimization problems under uncertainty with auxiliary covariates. We demonstrated how to develop specific methods by integrating predictive machine learning methods such as $k$NN, CART, and random forests. Our approach is well suited for multistage optimization problems in which the distribution of the uncertainties is unknown, but samples of the uncertainty and auxiliary data are available.

We demonstrated that our method with the $k$-nearest neighbor, CART, and random forest weight functions are asymptotically optimal. We also provided finite sample guarantees for the method with $k$NN weight functions. Additionally, we showed how to apply the framework with two computational examples. Because we can think of (\ref{eq:empirical}) as a dynamic programming problem, we have at our disposal a variety of exact and approximate solution techniques. The problem is often tractable in practice and can lead to significant improvements over methods that ignore auxiliary data.

We leave for future work the extension in which the decision affects the distribution of the uncertainty. This type of problem appears in applications such as pricing where the choice of price affects the distribution of the demand. We also leave for future research the development of efficient variants of our methods for specific applications. In a world in which the availability of data continues to grow, our proposed approach utilizes this data efficiently and has the potential to make a significant impact in OR applications.


\bibliographystyle{spbasic}      
\bibliography{references}    

\appendix

\section{Additional Results on Random Forest Weight Functions}
Here we provide an additional result on the strong asymptotic optimality of the method with random forest weight functions in the single stage setting. Here, we consider random forests as defined in \citet{wager2015}. The random forest consists of an ensemble of trees, each trained on a subsample of the data of size $s$. Each of the trees in the forest is a regular, random-split, and honest regression tree as in Definition \ref{def:tree}. The prediction of the random forest at $x$ is given by
\begin{equation}\label{eq:rfdef}
R(x,X_1,Y_1,\ldots,X_N,Y_N) = \binom{N}{s}^{-1}\sum_{i_1,\ldots,i_s}\E_\xi[T(x,\xi,X_{i_1},Y_{i_1},\ldots,X_{i_s},Y_{i_s})].
\end{equation}
In practice, this is estimated by training trees on random subsamples of the data and random draws of $\xi$.
\begin{assumption}[Random Forest Specification]\label{as:rfweightfunction}
The random forest, as defined in (\ref{eq:rfdef}), has random-split, regular, and honest regression trees as its base learners. In addition, $\lambda \le 0.2$, and the subsample size, $s_N$, scales with $N^\beta$. That is, $s_N = \min(\lfloor CN^\beta\rfloor,N-1)$ with $C > 0$ and $\beta \in \left[\left(2+\frac{\pi\log(1-\lambda)}{d\log\lambda}\right)^{-1},\frac{1}{2}\right)$.
\end{assumption}
This assumption ensures the forest consists of diverse trees, each with low bias, so they can be aggregated into a consistent regressor.

\begin{theorem}\label{thm:rf2}
Suppose $\mathcal{Z}\subset\R^p$ is nonempty and compact, and the training data is i.i.d. In addition, suppose $|c(z;y)| \le 1$, for all $z\in\mathcal{Z}$, $y\in\mathcal{Y}$, and that $c(z;y)$ is a well-defined $L$-Lipschitz continuous function of $z$ for all $y$. Finally, suppose $X^i$ is uniformly distributed on $[0,1]^d$ and $\E[c(z;Y)|X=x]$ is an $M$-Lipschitz continuous function of $x$ for all $z$. Let $w_{N,i}(x)$ be the random forest weight function, satisfying assumption \ref{as:rfweightfunction}. Then $\{\hat{z}_0^N(x)\}$, a sequence of optimal solutions to
\begin{equation*}
\min_{z\in\mathcal{Z}} \sum_{i=1}^N w_{N,i}(x) c(z;y^i),
\end{equation*}
is strongly asymptotically optimal with respect to the true problem, $\min\limits_{z\in\mathcal{Z}} \E[c(z;Y)|X=x]$.
\end{theorem}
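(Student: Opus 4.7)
The strategy is to reduce the claim to uniform strong consistency of the random-forest regression estimator over $\mathcal{Z}$, and then exploit the fact that the cost is bounded and Lipschitz to obtain exponential concentration suitable for a Borel--Cantelli argument. By Lemmas \ref{lemma:uniformization} and \ref{lemma:opt}, all three conditions of Definition \ref{def:strong} will follow once we show
\begin{equation*}
\sup_{z\in\mathcal{Z}} \left|\sum_{i=1}^N w_{N,i}(x)\, c(z;Y^i) - \E[c(z;Y)\mid X=x]\right| \longrightarrow 0 \quad \text{a.s.}
\end{equation*}
for $x$ almost everywhere. This is the sole analytic object I need to attack.

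Next, for a fixed $z$, let $\hat{m}_N(x) = \sum_i w_{N,i}(x)\, c(z;Y^i)$, which is precisely the ``infinite'' random-forest estimator of the $M$-Lipschitz function $m(x) = \E[c(z;Y)\mid X=x]$. Because $|c(z;Y)| \le 1$, this estimator is a U-statistic of order $s_N$ with a bounded kernel, and I decompose $\hat{m}_N(x) - m(x)$ into a variance piece $\hat{m}_N(x) - \E[\hat{m}_N(x)]$ and a bias piece $\E[\hat{m}_N(x)] - m(x)$. For the variance piece I would apply Hoeffding's inequality for U-statistics to get a bound of the form $\P(|\hat{m}_N(x) - \E[\hat{m}_N(x)]| > \epsilon) \le 2\exp(-c N \epsilon^2/s_N)$; since Assumption \ref{as:rfweightfunction} forces $s_N = o(N)$ at a polynomial rate, Borel--Cantelli yields almost-sure convergence. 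For the bias piece, $|\E[\hat{m}_N(x)] - m(x)| \le M\,\E[\mathrm{diam}(R(x;\xi))]$ by the Lipschitz assumption, and the exponential tail bound derived inside the proof of Lemma \ref{lemma:treebias} (applied with $s_N$ in place of $N$, where $s_N\to\infty$ polynomially) again combines with Borel--Cantelli to give a.s.\ vanishing.

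To pass from pointwise to uniform convergence, I would use the Lipschitz continuity of $c(z;y)$ in $z$. Because the weights $w_{N,i}(x)$ are nonnegative and sum to one, $\hat{m}_N$ and $m$ are each $L$-Lipschitz in $z$. Covering the compact set $\mathcal{Z}$ with an $\epsilon$-net $z_1,\ldots,z_K$ of size $K = O((D/\epsilon)^p)$ gives
\begin{equation*}
\sup_{z\in\mathcal{Z}} |\hat{m}_N(z;x) - m(z;x)| \le \max_{k\le K} |\hat{m}_N(z_k;x) - m(z_k;x)| + 2L\epsilon,
\end{equation*}
so a.s.\ convergence of the maximum (a finite union of pointwise convergences from the previous step) and letting $\epsilon\downarrow 0$ along a rational sequence closes the argument.

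The principal obstacle is the upgrade from convergence in probability to almost-sure convergence at a single point, since the consistency results for random forests in \citet{wager2015} are typically Chebyshev-based and in probability. The key leverage here is boundedness of the cost: it turns the infinite random forest into a U-statistic with bounded kernel, yielding sub-Gaussian tails rather than only variance control, which is enough for Borel--Cantelli. This is precisely why the argument does not transplant to the multistage setting, where the analogous responses $Q_t(s_t;Y_t,X_t)$ are not uniformly bounded but only bounded in expectation, so the U-statistic kernel is unbounded and the sharp concentration bounds break down.
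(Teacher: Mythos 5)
Your proposal is correct, and its overall skeleton matches the paper's: reduce strong asymptotic optimality via Lemmas \ref{lemma:uniformization} and \ref{lemma:opt} to showing $\sup_{z\in\mathcal{Z}}\left|\sum_i w_{N,i}(x)c(z;Y^i) - \E[c(z;Y)\mid X=x]\right| \to 0$ a.s., prove pointwise strong consistency of the forest estimator by a bias--variance split with exponential tails and Borel--Cantelli, then upgrade to uniformity in $z$ using Lipschitz continuity (your explicit $\epsilon$-net over $\mathcal{Z}$ is equivalent to the paper's route through Lemmas \ref{lemma:final} and \ref{lemma:uniform}). Where you genuinely diverge is the concentration tool: the paper treats the infinite forest via McDiarmid's bounded-differences inequality, noting that changing one sample perturbs the prediction by at most $2s_N/N$ (since only a fraction $s_N/N$ of subsamples contain it and $|c|\le 1$), which yields a tail $2\exp\left(-N^{1-2\beta}\epsilon^2/(2C^2)\right)$ and is precisely why Assumption \ref{as:rfweightfunction} caps $\beta < 1/2$; you instead observe that (\ref{eq:rfdef}) is a complete U-statistic of order $s_N$ with kernel bounded in $[-1,1]$ and apply Hoeffding's blocking inequality, obtaining the sharper exponent $\lfloor N/s_N\rfloor\epsilon^2 \sim N^{1-\beta}\epsilon^2$. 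Your bound is strictly stronger --- it would tolerate any $\beta < 1$ for the variance term, shifting the binding constraint on $\beta$ entirely to the bias --- while the paper's McDiarmid argument is more robust in that it applies verbatim to the Monte Carlo forest with finitely many subsampled trees, not just the idealized U-statistic. For the bias, the paper simply cites Theorem 3 of \citet{wager2015} (this is where the lower bound on $\beta$ in Assumption \ref{as:rfweightfunction} is consumed), whereas your direct bound $|\E[\hat m_N(x)] - m(x)| \le M\,\E[\mathrm{diam}(R(x))]$ via honesty and the leaf-diameter estimate of Lemma \ref{lemma:treebias} (with $s_N$ in place of $N$) is a valid self-contained substitute; one small slip is that you invoke Borel--Cantelli for this piece, but $\E[\hat m_N(x)] - m(x)$ is a deterministic sequence, so plain convergence to zero (e.g., convergence in probability of the diameter plus its boundedness by $\sqrt{d}$ on $[0,1]^d$) suffices. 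Your closing diagnosis --- that boundedness of $c$ is what buys sub-Gaussian tails and that the argument breaks in the multistage setting because the $Q_t$ are not uniformly bounded --- agrees exactly with the paper's stated reason for restricting this theorem to the single-stage problem.
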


This result shows that our method can be strongly asymptotically optimal for the single stage problem with random forest weight functions. Some of the assumptions are slightly stronger than in Theorem \ref{thm:rf}. For example, we require that the value functions are bounded and the random forests are slightly different. However, we do not require that the minimum number of training samples per leaf, $k$, grows with $N$ as we do for Theorem \ref{thm:rf}. This is consistent with Breiman's original random forest algorithm in which $k$ is fixed at 1. To prove this theorem, we first prove a result on the strong consistency of the random forest estimator.

\begin{lemma}\label{lemma:rfstrongconsistency}
Suppose $(X_1,Y_1),\ldots,(X_N,Y_N)\in\R^{d+1}$ are i.i.d. samples, the distribution of $X_1$ uniform on $[0,1]^d$, and $|Y| \le 1$, a.s. Let $\mu(x) = \E[Y_1|X_1=x]$ be a Lipschitz continuous function. Define $\hat{\mu}_N(x)$ to be the prediction of a random forest that satisfies Assumption \ref{as:rfweightfunction}. Then, almost surely,
\begin{equation*}
\hat{\mu}_N(x) \to \mu(x)
\end{equation*}
for $x$ a.e.
\end{lemma}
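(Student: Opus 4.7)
The plan is to fix $x\in[0,1]^d$, show $\hat{\mu}_N(x)\to\mu(x)$ almost surely at that $x$, and then promote this pointwise statement to ``almost surely, $\hat{\mu}_N(x)\to\mu(x)$ for $x$ a.e.'' via Fubini. I would decompose
\[
\hat{\mu}_N(x)-\mu(x) \;=\; \bigl(\hat{\mu}_N(x)-\E[\hat{\mu}_N(x)]\bigr) + \bigl(\E[\hat{\mu}_N(x)]-\mu(x)\bigr),
\]
and drive the stochastic fluctuation and the bias to zero separately. For the bias, symmetry of the sum over subsamples in (\ref{eq:rfdef}) gives $\E[\hat{\mu}_N(x)] = \E[T(x;\xi,X_1,Y_1,\ldots,X_{s_N},Y_{s_N})]$, the expected prediction of a single honest tree trained on $s_N$ i.i.d.\ points. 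Honesty means the splits (and thus the leaf $L(x)$ containing $x$) are chosen without reference to the responses, so conditioning on $(X_1,\ldots,X_{s_N},\xi)$ and on the split-half data when applicable, the tree prediction collapses to the average of $\mu(X_i)$ over the $X_i$'s in $L(x)$. $L$-Lipschitz continuity of $\mu$ then yields $|\E[\hat{\mu}_N(x)]-\mu(x)|\le L\cdot\E[\mathrm{diam}(L(x))]$. Because $k$ is fixed while $s_N\to\infty$, Lemma \ref{lemma:treebias} applied to a tree on $s_N$ points gives $\mathrm{diam}(L(x))\to_P 0$; since diameters are bounded by $\sqrt{d}$, bounded convergence upgrades this to $\E[\mathrm{diam}(L(x))]\to 0$, and the bias vanishes.

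For the stochastic fluctuation, I would recognize $\hat{\mu}_N(x)$ as a U-statistic of order $s_N$ in the i.i.d.\ data $\{(X_i,Y_i)\}$ with symmetric kernel $h_x$ equal to $\E_\xi[T(x;\xi,\cdot)]$. Because $|Y|\le 1$ and each tree's prediction is an average of $Y_i$'s, $h_x$ takes values in $[-1,1]$, and Hoeffding's inequality for bounded U-statistics gives
\[
\P\bigl(|\hat{\mu}_N(x)-\E[\hat{\mu}_N(x)]|>t\bigr) \;\le\; 2\exp\!\Bigl(-\tfrac{1}{2}\lfloor N/s_N\rfloor\, t^2\Bigr).
\]
With $s_N=\lfloor CN^\beta\rfloor$ and $\beta<1/2$, the exponent grows at least like a constant times $N^{1-\beta}$, so for every $t>0$ the right-hand side is summable in $N$ and the Borel--Cantelli lemma yields $\hat{\mu}_N(x)-\E[\hat{\mu}_N(x)]\to 0$ almost surely.

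Combining the two pieces, for every $x\in[0,1]^d$ the non-convergence event $A_x:=\{\omega:\hat{\mu}_N(x;\omega)\not\to\mu(x)\}$ has $\P(A_x)=0$. Applying Fubini to the jointly measurable set $A:=\{(x,\omega):\omega\in A_x\}$ gives $(P_X\otimes\P)(A)=0$, hence $P_X(\{x:(x,\omega)\in A\})=0$ for $\P$-a.s.\ $\omega$, which is the desired statement. The main technical obstacle is the bias step, where honesty must be used carefully to reduce $\E[T(x)]$ to an average of $\mu(X_i)$-values over the leaf---so that Lemma \ref{lemma:treebias} can be invoked and the Lipschitz assumption finishes the job. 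Once that reduction is done, the variance estimate is a routine U-statistic Hoeffding bound and the final pointwise-to-$x$-a.e.\ passage is a standard Fubini argument.
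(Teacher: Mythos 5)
Your proof is correct, but it takes a genuinely different route from the paper's on both halves of the bias--fluctuation decomposition. For the bias, the paper simply cites Theorem 3 of \citet{wager2015} (this external result is the reason Assumption \ref{as:rfweightfunction} carries the conditions $\lambda \le 0.2$ and the lower bound on $\beta$), whereas you give a self-contained argument using the paper's own machinery: symmetry reduces $\E[\hat{\mu}_N(x)]$ to the expected prediction of a single honest tree on $s_N$ i.i.d.\ points, honesty plus $L$-Lipschitz continuity bound the bias by $L\,\E[\mathrm{diam}(L(x))]$, and Lemma \ref{lemma:treebias} (applied with $s_N$ points and fixed $k$, so $s_N/k \to \infty$) together with bounded convergence (diameters are at most $\sqrt{d}$) finishes the step --- note your route needs only $s_N \to \infty$, not the lower bound on $\beta$ or $\lambda \le 0.2$. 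For the fluctuation term, the paper applies McDiarmid's bounded-differences inequality to the whole forest, using that perturbing one sample moves the prediction by at most $2s_N/N$, which yields tails $2\exp\left(-N^{1-2\beta}\epsilon^2/(2C^2)\right)$; you instead exploit the complete U-statistic structure of the estimator in (\ref{eq:rfdef}) and invoke Hoeffding's inequality for bounded U-statistics, yielding the sharper tail $2\exp\left(-\tfrac{1}{2}\lfloor N/s_N\rfloor\,\epsilon^2\right)$, of order $\exp(-cN^{1-\beta}\epsilon^2)$ --- summable for any $\beta < 1$, whereas the paper's bound genuinely requires $\beta < 1/2$. Both arguments then conclude via Borel--Cantelli and the triangle inequality, and your explicit Fubini passage from ``for each $x$, a.s.'' to ``a.s., for a.e.\ $x$'' makes rigorous a step the paper leaves implicit. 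The only point requiring minor care in your version is applying Lemma \ref{lemma:treebias} when honesty is implemented by data splitting: the tree then makes splits using only about $s_N/2$ points, which changes the constants in the lemma's binomial counting argument but not its conclusion.
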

\begin{proof}
We first note that the bias of the predictions goes to 0, $|\E\hat{\mu}_N(x) - \mu(x)| \to 0$, for $x$ a.e. by Theorem 3 from \cite{wager2015}. (The assumptions are satisfied by Assumption \ref{as:rfweightfunction}.) Next, we define $h_x(Z_1,\ldots,Z_N) = \hat{\mu}_N(x)$, where $Z_i=(X_i,Y_i)$. We note that for any $Z_k,Z_k'\in\mathcal{X}\times\mathcal{Y}$,
\begin{equation*}
|h_x(Z_1,\ldots,Z_k,\ldots,Z_N) - h_x(Z_1,\ldots,Z'_k,\ldots,Z_N)| \le \frac{2s_N}{N}.
\end{equation*}
This is due to the assumption that $|Y_i| \le 1$. Since each tree predicts at $x$ by averaging the $Y_i$s corresponding to training samples in the same partition of the feature space as $x$, the prediction of any tree is bounded between -1 and 1. Changing a single training sample only affects the trees trained on subsets of the data including that example. This only affects a fraction $s_N/N$ of the trees in the forest. Since the prediction of the random forest is the average of the predictions of all the trees, the most the prediction can be changed by altering a single training sample is $2s_N/N$. Applying McDiarmid's inequality, we have, for $\epsilon > 0$,
\begin{equation*}
P(|\hat{\mu}_N(x)-\E\hat{\mu}_N(x)| > \epsilon) \le 2\exp\left(-\frac{N\epsilon^2}{2s_N^2}\right).
\end{equation*}
By our assumption, this can be rewritten
\begin{equation*}
P(|\hat{\mu}_N(x)-\E\hat{\mu}_N(x)| > \epsilon) \le 2\exp\left(-\frac{N^{1-2\beta}\epsilon^2}{C^22}\right),
\end{equation*}
where $\beta < 1/2$. From this we see $\sum_{N=1}^\infty P(|\hat{\mu}_N(x)-\E\hat{\mu}_N(x)| > \epsilon) < \infty$, so, by the Borel Cantelli lemma, $|\hat{\mu}_N(x)-\E\hat{\mu}_N(x)|\to 0$ a.s. Combining the two results with the triangle inequality completes the proof.
\end{proof}

\begin{proof}[Theorem \ref{thm:rf2}]
We need to show
\begin{equation*}
\sup_{z\in\mathcal{Z}}\left|\sum_{i=1}^N w_{N,i}(x) c(z;y^i) - \E[c(z;Y)|X=x]\right| \to 0
\end{equation*}
a.s. for $x$ a.e. The desired result then follows from lemmas \ref{lemma:uniformization} and \ref{lemma:opt}. If we ignore the supremum, we can apply lemma \ref{lemma:rfstrongconsistency} to see it goes to 0 a.s. Next, we apply lemma \ref{lemma:final} to see that the convergence holds simultaneously for all $z$ with probability 1. Finally, we apply lemma \ref{lemma:uniform} to show the convergence is uniform over $z$ a.s.
\end{proof}

\section{Decomposition Algorithm}\label{app:decomp}
For the reader's convenience, we present a decomposition algorithm, similar to that of \citet{shapiro2011}, tailored for use with our methods. Given weight functions, which we compute from an appropriate machine learning algorithm, our goal is to solve (\ref{eq:empirical}). If the decisions spaces and state spaces are finite sets with small cardinality, it may be possible to solve the problem exactly using the classical dynamic programming algorithm (see, for example, \citet{bertsekas2017}). However, in many OR problems, this is not the case, and we need to deal with continuous decision and state spaces.

We note that it is possible to formulate (\ref{eq:empirical}) as a large, but finite sized, single stage optimization problem. To do so, we create $N$ copies of $z_1$, $N^2$ copies of $z_2$, etc. These copies of the decision variables represent our contingency plan. For each potential realization of $Y_1,\ldots,Y_t$, we have a distinct copy of $z_t$. If the $f_t$ and $g_t$ functions are linear and the $Z_t$ sets are polyhedral, the resulting problem will be a linear optimization problem, which can be solved in time that is polynomial in the size of the formulation. However, the number of variables in the formulation is $O(N^T)$, and this formulation becomes impractical for moderately sized $N$ and $T$. In order to solve larger problems, we resort to a Benders-like decomposition approach. (For a review of Benders decomposition methods, see, for example, \citet{murphy2013}.)

Algorithm \ref{alg:sddp} describes the approach. The main idea is that we maintain a piecewise linear, convex lower bound, $\psi_t(s_t,x_{t-1})$, on $\sum\limits_{i=1}^N w_{N,i}^t(x_{t-1}) \hat{Q}_t(s_t;y_t^i,x_t^i)$ for each $t=1,\ldots,T$. We have the relaxed problems:
\begin{equation}\label{eq:relaxed}
P_t(s_t,x_t,y_t,\psi_{t+1}) := \min_{z_t\in Z_t(s_t,y_t)} g_t(z_t) + \psi_{t+1}(f_t(z_t),x_t)
\end{equation}
(For $t=0$, there is no dependence on $y_0$.) If $g_t$ and $f_t$ are linear functions and $Z_t$ is a polyhedral set, then the relaxed problem can be reformulated as a linear optimization problem because $\psi_{t+1}$ is the maximum of a finite set of linear functions.

The algorithm consists of two main steps, which are repeated until convergence. First, $M$ sample paths for $X$ and $Y$ are sampled (with replacement) from the training data. Then, in the forward step, for each of these sample paths, trial states are computed by solving the relaxed problems from $t=0$ to $t=T$, assuming the system evolves according to the corresponding sample path. Using the costs of each of these sample paths, we can compute a statistical upper bound on the optimal value of (\ref{eq:empirical}) (assuming $M$ and $N$ are such that a central limit theorem is a reasonable approximation).

In the backward step, we update the $\psi_{t}$ functions. We proceed backwards, starting with $t=T$, and solve $P_t$ for each of the trial states. We compute the cut coefficients (which we will discuss in more detail next), and then average them across all possible realizations $y_t^j$ and $x_t^j$, according to the distribution $\{w_{N,i}^t(x_{t-1}^j)\}_{i=1}^N$. We then update $\psi_t$ with this new cut. Finally, we update the lower bound on the optimal value of the problem by solving $P_0(s_0,x_0,\psi_1)$.

There are several possible stopping criteria we can use. One is to stop when the statistical upper bound (line \ref{line:ub} in Algorithm \ref{alg:sddp}) is within a specified $\epsilon > 0$ of the lower bound (line \ref{line:lb}). This will give us an $\epsilon$-optimal solution with probability at least $1-\alpha$ (assuming $M$ is sufficiently large and $N$ is much larger than $M$ so that the central limit theorem is a reasonable approximation). Alternatively, we can stop when the lower bound stabilizes or after a fixed number of iterations. All three of these can allow the algorithm to construct lower bounds, $\{\psi_t\}$, that are reasonable approximations to the value functions.

\begin{algorithm}
\SetKwInOut{Input}{Input}
\caption{SDDP algorithm for multistage optimization.}\label{alg:sddp}
\Input{$s_0$, $x_0$, weight functions $\{w_{N,i}^t(x_{t-1})\}_{i=1,\ldots,N;t=1,\ldots,T}$}
Initialize: $LB \leftarrow -\infty$, $UB \leftarrow \infty$,  and initial lower bounds $\{\psi_t\}_{t=1}^T$

\While{stopping criterion not satisfied}{
Sample $M$ scenarios $\{y^j_1,x^j_1,y^j_2,\ldots,x^j_{T-1},y^j_T\}_{j=1}^M$ with replacement from the training set, such that $y^j_1$ and $x^j_1$ are sampled from the probability distribution defined by $w^1_{N,i}(x_0)$, $y^j_2$ and $x^j_2$ are sampled from the probability distribution defined by $w^2_{N,i}(x_1^j)$, etc.

\bigskip
/* Forward step */

\For{$j = 1,\ldots,M$}{
Initialize $s_0^j \leftarrow s_0$

\For{$t = 0,\ldots,T$}{
Solve problem $P_t(s_t^j,x_t^j,y_t^j,\psi_{t+1})$ for optimal solution $z_t^j$

Update state $s_{t+1}^j \leftarrow f_t(z_t^j)$
}
$v^j \leftarrow \sum\limits_{t=0}^T g_t(z_t^j)$
}
\bigskip
/* Statistical upper bound */

$\hat{\mu} \leftarrow \frac{1}{M}\sum\limits_{j=1}^M v^j$ and $\hat{\sigma}^2 \leftarrow \frac{1}{M-1}\sum\limits_{j=1}^M (v^j - \hat{\mu})^2$

$UB \leftarrow \hat{\mu} + z_{\alpha/2}\frac{\hat{\sigma}}{\sqrt{M}}$ \label{line:ub}

\bigskip
/* Backward Step */

\For{$t = T,\ldots,1$}{
\For{$j = 1,\ldots,M$}{
\For{$i = 1,\ldots,N$}{
Solve $P_t(s_t^j,x_t^i,y_t^i,\psi_{t+1})$ to compute cut coefficients $(\beta_t^{ji},\pi_t^{ji})$
}

Update $\psi_{t}(s_{t},x_{t-1}^j) \leftarrow \max\left\{\psi_{t}(s_{t},x_{t-1}^j), \sum\limits_{i=1}^N w^t_{N,i}(x_{t-1}^j)(\beta_t^{ji} + (\pi_t^{ji})^T s_t) \right\}$
}
}
\bigskip
/* Lower bound update */

Solve $P_0(s_0,x_0,\psi_1)$ and set $LB$ to its optimal value\label{line:lb}

}
\end{algorithm}

The crucial component required for this algorithm to work is the cuts. We begin with a definition from \cite{zou2016}.

\begin{definition}[Valid, tight, and finite cut]
Let $(\beta_t,\pi_t)$ be the stage $t$ cut coefficients computed in the backward step of Algorithm \ref{alg:sddp}. We say that the cut is:
\begin{enumerate}
\item Valid if
\begin{equation*}
\sum_{i=1}^N w^t_{N,i}(\hat{x}_{t-1}) \hat{Q_t}(s_t;y_t^i,x_t^i) \ge \beta_t + \pi_t^T s_t \;\;\;\;\;\; \forall s_t.
\end{equation*}
\item Tight if
\begin{equation*}
\sum_{i=1}^N w^t_{N,i}(\hat{x}_{t-1}) P^*_t(\hat{s}_t,x_t^i,y_t^i,\psi_{t+1}) = \beta_t + \pi_t^T \hat{s}_t,
\end{equation*}
where $P_t^*$ represents the optimal value of (\ref{eq:relaxed}), $\hat{s}_t$ is the trial state computed during the corresponding forward pass of the algorithm, and $\hat{x}_{t-1}$ is the auxiliary covariate from the forward pass of the algorithm.
\item Finite if solving (\ref{eq:relaxed}) for fixed $\psi_{t+1}$ can only generate finitely many possible cuts.
\end{enumerate}
\end{definition}

Under the conditions that $f_t$ and $g_t$ are linear functions, $Z_t$ are polyhedral sets, and at every stage, (\ref{eq:relaxed}) is feasible with finite optimal value, it is shown in \cite{shapiro2011} that the SDDP algorithm will converge to an optimal solution in a finite number of iterations with probability 1, provided the cuts used are valid, tight, and finite. \cite{zou2016} showed that this result also holds if the state variables are purely binary, instead of continuous.

The validity of the cuts ensures that the $\{\psi_t\}$ functions maintain lower bounds on the value functions at each stage. Next, we describe several classes of valid cuts that can be used within the SDDP algorithm.

\subsubsection*{Benders' Cut}
A well known class of cuts is the Bender's cut \citep{benders1962}. These cuts are valid for linear problems, even with integer constraints, and are tight for linear optimization problems (or more generally convex optimization problems) in which strong duality holds. To compute the cuts for stage $t$ in the SDDP algorithm we solve the following form of $P_t(s_t^j,x_t^i,y_t^i,\psi_{t+1})$:
\begin{equation*}
\begin{aligned}
& \min_{z_t,s_t} & & g_t(z_t) + \psi_{t+1}(f_t(z_t),x_t^i) &\\
& \text{s.t.} & & z_t \in Z_t(s_t,y^i_t) &\\
&&& s_t = s_t^j.
\end{aligned}
\end{equation*}

 We then let $\pi_t^{ji}$ be the optimal dual solution (of the LO relaxation if there are integer variables) corresponding to the indicated constraint and set $\beta_t^{ji} = P^*_t(s_t^j,x_t^i,y_t^i,\psi_{t+1}) - (\pi_t^{ji})^T s_t^j$, where $P^*_t(s_t^j,x_t^i,y_t^i,\psi_{t+1})$ is the optimal value of the above problem. In order for these cuts to be finite, we should always use basic solutions for $\pi_t^{ji}$.
 
 \subsubsection*{Integer Optimality Cut}
 If the state space is binary, the SDDP algorithm with Benders' cuts is not guaranteed to produce an optimal solution. This is because they are not guaranteed to be tight. Instead, we can solve the above integer optimization problem to optimality and choose cuts defined by the linear expression:
 \begin{equation*}
 (P_t^*(s_t^j,x_t^i,y_t^i,\psi_{t+1}) - L_t)\left(\sum_k (s_{tk}(1-s^j_{tk}) + (1-s_{tk})s^j_{tk})\right) + P_t^*(s_t^j,x_t^i,y_t^i,\psi_{t+1}).
 \end{equation*}
These cuts are valid, tight, and finite when the state space is binary. However, they tend to be very ineffective in practice.

\subsubsection*{Lagrangian Cut}
The third class of cut we describe was introduced by \cite{zou2016} and shown to be valid and tight when the state space is binary. They are much more effective than the integer optimality cuts in practice. These cuts are computed by solving the Lagrangian dual problem:
\begin{equation*}
\max_{\pi_t^{ji}} \mathcal{L}_t(\pi_t^{ji}) + (\pi_t^{ji})^Ts_t^j,
\end{equation*}
where
\begin{equation*}
\begin{aligned}
\mathcal{L}_t(\pi_t^{ji}) =& \min_{z_t,s_t} & & g_t(z_t) + \psi_{t+1}(f_t(z_t),x_t^i) - (\pi_t^{ji})^T s_t \\
& \text{s.t.} & & z_t \in Z_t(s_t,y^i_t) \\
&&& s_t \in [0,1]^p.
\end{aligned}
\end{equation*}
We then use the cut with $\pi_t^{ji}$ equal to the optimal solution of the Lagrangian dual problem and $\beta_t^{ji}$ equal to the optimal value of $\mathcal{L}_t(\pi_t^{ji})$.

These three classes of cuts allow us to solve problems where the state space is continuous or pure binary with the SDDP algorithm. When the state space is a mixed integer set, we can perform a binary expansion to desired accuracy on the continuous variable to convert the problem to the pure binary case. Of course, we can also combine different classes of cuts, and this can speed convergence.

\end{document}